\documentclass[10pt]{article} 
\usepackage[preprint]{tmlr}

\usepackage{amsmath,amsfonts,bm}

\def\eqref#1{equation~\ref{#1}}

\def\1{\bm{1}}

\DeclareMathAlphabet{\mathsfit}{\encodingdefault}{\sfdefault}{m}{sl}
\SetMathAlphabet{\mathsfit}{bold}{\encodingdefault}{\sfdefault}{bx}{n}

\usepackage{hyperref}
\usepackage{url}
\usepackage{microtype}
\usepackage{graphicx}
\usepackage{subcaption}
\usepackage{array}
\usepackage{tabularx}
\usepackage{booktabs}
\newcolumntype{Y}{>{\raggedright\arraybackslash}X}
\usepackage{wrapfig}
\usepackage[percent]{overpic}

\newcommand{\cliprankdir}{%
  figures/clipreal%
}

\usepackage{amsmath}
\usepackage{amssymb}
\usepackage{mathtools}
\usepackage{amsthm}

\let\AND\relax
\usepackage{algorithm}
\usepackage{algorithmic}

\usepackage[autostyle=true]{csquotes}
\usepackage{placeins}

\usepackage[capitalize,noabbrev]{cleveref}

\theoremstyle{plain}
\newtheorem{theorem}{Theorem}[section]

\theoremstyle{definition}

\theoremstyle{remark}

\title{Learning $\omega$-Powered Conditional Distributions with Importance-Weighted Flow Matching}

\author{\name Maciej Falkiewicz \email maciej.falkiewicz@hesge.ch \\
      \addr Computer Science Department, University of Geneva\\
      HES-SO/HEG Gen\'eve \\
      \AND
      \name Frantzeska Lavda  \email frantzeska.lavda@hesge.ch \\
      \addr HES-SO/HEG Gen\'eve \\
      \AND
      \name Jason Ramapuram \email jramapuram@google.com\\
      \addr Google Deepmind (work done at Apple) \\
      \AND
      \name Alexandros Kalousis \email alexandros.kalousis@hesge.ch \\
      \addr HES-SO/HEG Gen\'eve \\
}

\def\month{10}  
\def\year{2026} 
\def\openreview{\url{https://openreview.net/forum?id=fBli1qL3dd}} 

\begin{document}

\maketitle

\begin{abstract}
Classifier-Free Guidance (CFG) with guidance scale $\gamma$ is commonly motivated by the $\omega$-powered conditional distribution obtained by setting $\omega=\gamma$, but CFG does not, in general, sample from this distribution exactly. We study whether the powered target can instead be learned explicitly. We introduce Importance-Weighted Conditional Flow Matching (IW-CFM), which reweights conditional flow-matching examples using density ratios obtained from frozen conditional and unconditional flow models. With exact normalized weights, the resulting objective has the same gradient as flow matching on the powered target; in practice, we use self-normalized estimation and a regularized empirical approximation for high-dimensional data. On synthetic distributions, IW-CFM recovers analytical powered targets for which numerically matched CFG, $\gamma=\omega$, exhibits a visible mismatch. On ImageNet, an amortized lightweight corrector achieves better alignment than fixed- and matched-guidance CFG with the corresponding weighted and filtered reference distributions. We additionally present an exploratory CLIP analysis of powered-density rankings on real ImageNet data. Aggregate shifts across selected prompt-specific contrasts coexist with heterogeneous classwise responses, including nonmonotonic trajectories. These results provide a practical route for studying and approximating model-defined powered conditional distributions.\footnote{Code and precomputed ImageNet log-probabilities are publicly available at \url{https://bitbucket.org/dmmlgeneva/iw-cfm}.}
\end{abstract}

\begin{figure}[t]
    \centering
    \begin{subfigure}[t]{0.21\textwidth}
        \centering
        \includegraphics[width=\linewidth]{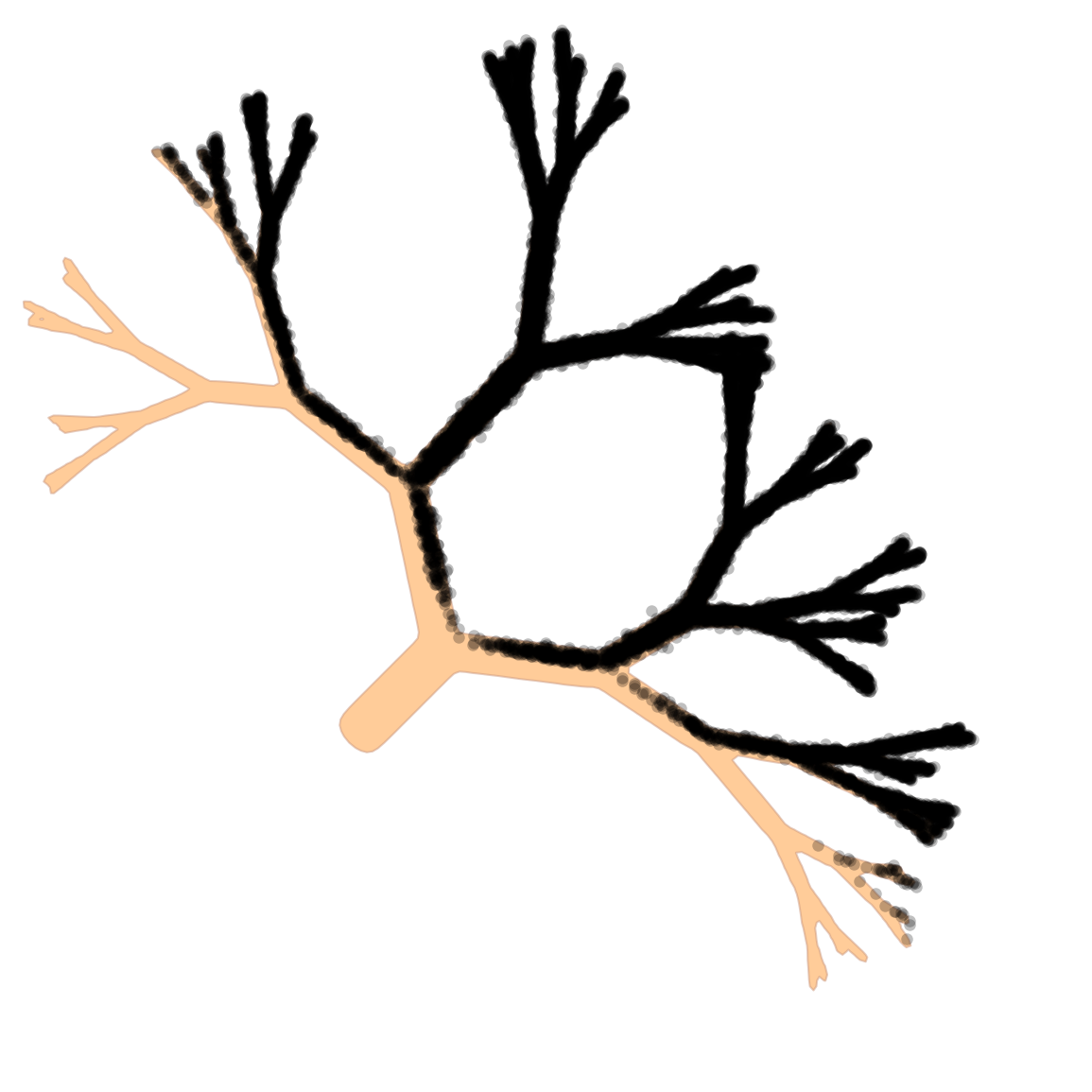}
        \caption{The black dots are CFG samples at $\gamma=4$ and depart from $p_{\omega=4}(x|y)$. The orange contour encloses 99.7\% of the target probability mass.}
        \label{fig:fractal_cfg}
    \end{subfigure}
    \hfill 
    \begin{subfigure}[t]{0.21\textwidth}
        \centering
        \includegraphics[width=\linewidth]{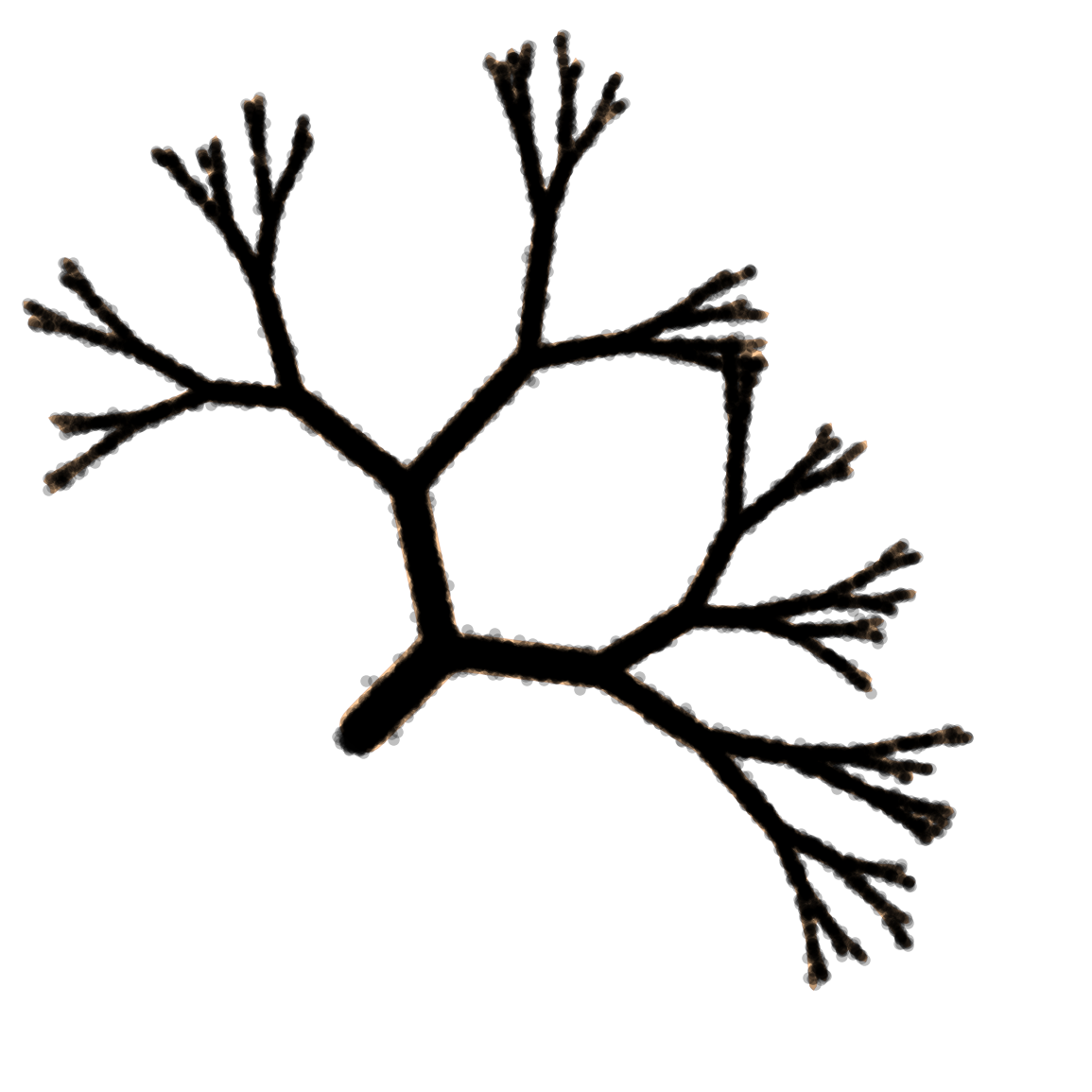}
        \caption{IW-CFM samples at $\omega=4$, drawn without additional CFG, follow $p_{\omega=4}(x|y)$ and cover the region enclosed by its 99.7\% probability-mass contour.}
        \label{fig:fractal_iwcfm}
    \end{subfigure}
    \hfill 
    \begin{subfigure}[t]{0.54\textwidth}
        \centering
        \includegraphics[width=\linewidth]{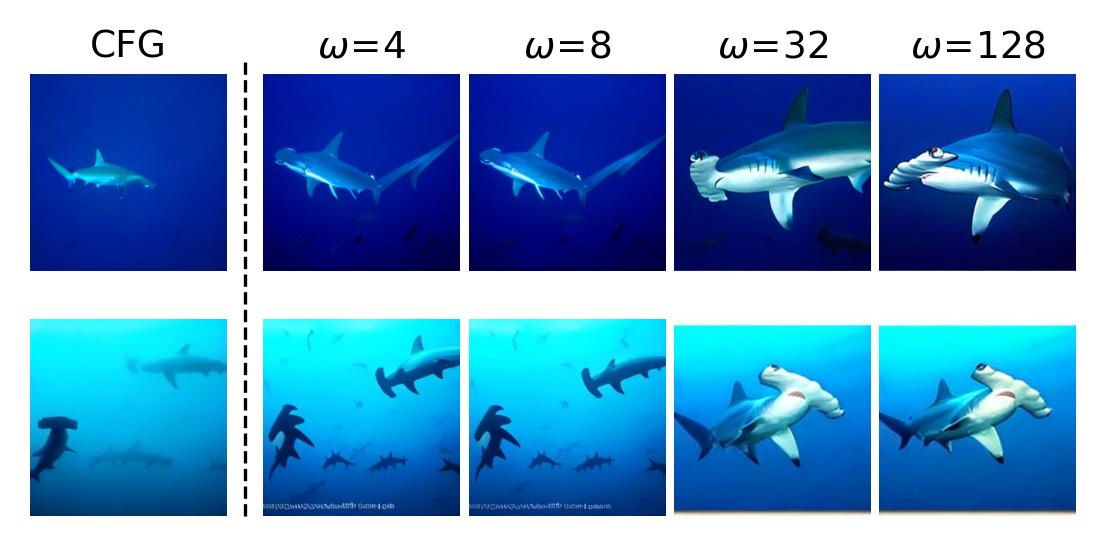}
        \caption{Qualitative comparison on ImageNet. Left column: baseline CFG with guidance scale $\gamma=4$. Right columns: IW-CFM at varying target $\omega$, also using inference-time guidance $\gamma=4$ and initialized from the same prior latents. Thus, the columns vary IW-CFM's powering parameter while holding its inference-time guidance scale fixed.}
        \label{fig:cfg_iwcfm_shark_grid}
    \end{subfigure}

    \caption{Synthetic target recovery and qualitative ImageNet generations. \textbf{(a,b)} On a synthetic dataset, IW-CFM samples match the 99.7\% probability-mass contour of the analytical $p_\omega$, whereas CFG with numerically matched guidance $\gamma=\omega$ shows a visible mismatch. \textbf{(c)} On ImageNet, IW-CFM is trained using regularized empirical weights; both the CFG baseline and IW-CFM use inference-time guidance $\gamma=4$, while IW-CFM additionally varies the target powering parameter $\omega$.}
    \label{fig:combined_comparison}
\end{figure}

\section{Introduction}

Diffusion models \citep{sohl-dickstein15, ddpm, song2021scorebased}, flow matching (FM) \citep{lipman2023flow, liu2023flow}, and stochastic interpolants \citep{albergo2025unifying} are widely used for continuous data such as images and audio. These methods learn a process between a simple prior, such as a Gaussian, and the data distribution. We work in the Gaussian flow-matching setting. Conditional models commonly use \emph{Classifier-Free Guidance (CFG)} \citep{ho2022classifier,zheng2023guided,dao2023flow}, which steers generation toward a condition $y$ by combining conditional and unconditional vector-field estimates.

Both CFG and its predecessor, classifier-based guidance \citep{dhariwal2021diffusion}, are motivated by a \emph{powered distribution},
\begin{equation}
p_\omega(x|y) \propto p(x|y)^\omega p(x)^{1-\omega},
\label{eq:powered_target_intro}
\end{equation}
where $\omega>1$ is the \emph{powering parameter}. Separately, we denote the CFG guidance scale by $\gamma$: CFG combines the conditional and unconditional vector fields as $v_{\mathrm{CFG}}^\gamma=\gamma v(\cdot|y)+(1-\gamma)v(\cdot)$. Its standard motivating interpretation identifies the two numerical values, $\gamma=\omega$, and treats $v_{\mathrm{CFG}}^\gamma$ as if it generated $p_\omega$. This identification is not generally exact \citep{bradley2024classifier, chidambaram2024does}: the CFG-guided backward vector field lacks a corresponding path-consistent forward process, and its endpoint distribution need not equal $p_\omega$, even when $\gamma=\omega$. The resulting mismatch can push probability mass away from the learned data manifold, manifesting as artifacts and unnatural saturation at high guidance scales. A related discrepancy has been identified in discrete diffusion models \citep{kit2025debiasing}.

We address the powered target directly by specializing conditional Energy-Weighted Flow Matching \citep{zhangenergy}. Density estimates from the conditional and unconditional flow models define the endpoint density ratio, and exact normalized weights yield the same population gradient as flow matching on the powered target. We call the resulting method \emph{Importance-Weighted Conditional Flow Matching (IW-CFM)}; it conditions on $\omega$ and learns the flow for $p_\omega(x|y)$ without direct samples from that distribution. Any CFG subsequently applied to the learned IW-CFM vector field is an optional inference-time heuristic controlled separately by $\gamma$.

Beyond target recovery, we examine how powered-density rankings of real images change with $\omega$. Using CLIP \citep{radford2021learning}, which prior work has applied to image quality assessment \citep{clipiga2022, wang2023exploring, zhang2023blind}, we characterize selected prompt-specific contrasts and their classwise trajectories.

Our main contributions are:
\begin{itemize}
    \item Building on Energy-Weighted Flow Matching \citep{zhangenergy}, we instantiate endpoint weighting for model-defined $\omega$-powered conditional targets using density ratios from frozen conditional and unconditional flow models.
    \item We develop a hierarchical self-normalized estimator for heterogeneous $(y,\omega)$ pairs and distinguish this finite-sample procedure from the exact population objective.
    \item We develop a lightweight corrector amortized over $\omega$ and a high-dimensional ImageNet construction based on regularized precomputed weights, proposal sampling, and interpolation across powering parameters.
    \item We validate recovery of analytical powered targets on synthetic distributions and measure alignment with regularized empirical references on ImageNet, where IW-CFM yields lower distances than the evaluated CFG baselines.
    \item We provide an exploratory characterization of powered-density rankings on real images through selected CLIP contrasts, including heterogeneous and nonmonotonic classwise responses.
\end{itemize}

\section{Modeling the $\omega$-Powered Distribution}
We adopt the Flow Matching (FM) framework \citep{lipman2023flow}, which defines a probability path $p_t(x)$ for $t \in [0, 1]$ interpolating between a prior $p_0(x) = \mathcal{N}(x; 0, I)$ and data $p_1(x)$. The flow is governed by a time-dependent vector field $v(\cdot, t)$ via the ODE $dx_t = v(x_t, t) dt$.

\subsection{Background on Conditional Flow Matching}
In Conditional Flow Matching, for each data point $x_1$, we define a conditional probability path $p_t(x|x_1)$ that interpolates between a prior sample $x_0 \sim p_0$ and the data point $x_1$. The path is constructed by defining the interpolated sample $x_t$ and its corresponding target velocity vector $u_t$ as:
\begin{align}
    x_t &= \alpha_t x_1 + \sigma_t x_0 \label{eq:interpolation} \\
    u_t &= \dot{\alpha}_t x_1 + \dot{\sigma}_t x_0 \label{eq:velocity}
\end{align}
where $\alpha_t$ and $\sigma_t$ are differentiable functions of time $t \in [0,1]$ defining the schedule. The goal of a standard conditional flow matching model is to learn a vector field that predicts this target velocity $u_t$ marginalized over $x_1$ \citep{lipman2023flow}.

\subsection{CFG Guidance and the Powered Target}
We reserve $\gamma$ for the CFG guidance scale and $\omega$ for the powering parameter. Given conditional and unconditional vector fields, CFG uses
\begin{equation}
v_{\mathrm{CFG}}^\gamma(x,t,y)
=\gamma v(x,t|y)+(1-\gamma)v(x,t).
\label{eq:cfg_vector_field}
\end{equation}
The usual powered-distribution motivation associates CFG at $\gamma$ with $p_\omega$ at the matched numerical value $\omega=\gamma$. We write this motivating correspondence as
\begin{equation}
\mathrm{CFG}(\gamma=\omega)\ \overset{\text{motivation}}{\longrightarrow}\ p_\omega,
\qquad
\operatorname{Law}\!\left(\mathrm{CFG}(\gamma=\omega)\right)\neq p_\omega
\quad\text{in general}.
\label{eq:cfg_powered_mismatch}
\end{equation}
IW-CFM addresses the right-hand side directly by learning a vector field conditioned on $\omega$. When we later apply CFG to this learned field at inference time, its scale remains a distinct parameter $\gamma$.

\subsection{Density Evaluation and Pre-computation}
To enable importance weighting, we require the log-densities of the unconditional $p(x)$ and conditional $p(x|y)$ distributions. Given a vector field $v$, we use the instantaneous change of variables formula \citep{chen2018neural}:
\begin{equation}
\log p(x) = \log p_0(x_0) - \int_0^1 \nabla \cdot v(x_t, t) \,dt
\label{eq:log_prob_flow}
\end{equation}
where $x_0$ is obtained by integrating the ODE backward from $t=1$ to $t=0$. In high dimensions, we approximate the divergence $\nabla \cdot v$ using the Hutchinson trace estimator \citep{Hutchinson}. We validate the numerical stability and ranking consistency of these lower-step estimators in \cref{app:exp_details_logp}. The conditional log-density, $\log p(x|y)$, is computed similarly using the conditional vector field $v(x|y)$.

\textbf{Practical Implementation.} Because the base vector fields $v(x)$ and $v(x|y)$ remain frozen, we precompute these log-densities for the training set. Corrector training therefore does not backpropagate through the density models.

\subsection{Importance-Weighted Conditional Flow Matching (IW-CFM)}
Our goal is to learn a vector field $v^\phi(x,t,y,\omega)$ that targets the normalized powered distribution $p_\omega(x|y) = \tilde{p}_\omega(x|y)/Z(y,\omega)$. We define the unnormalized weight for a sample $(x, y)$ as
\begin{equation}
\tilde{w}_\omega(x, y) = \frac{\tilde{p}_\omega(x|y)}{p(x|y)} = p(x|y)^{\omega-1} p(x)^{1-\omega}.
\label{eq:importance_weight}
\end{equation}
The exact normalized density ratio is
\begin{equation}
w_\omega^\star(x,y)
= \frac{\tilde{w}_\omega(x,y)}{Z(y,\omega)}
= \frac{p_\omega(x|y)}{p(x|y)},
\qquad
Z(y,\omega)=\mathbb{E}_{p(x|y)}[\tilde{w}_\omega(x,y)].
\label{eq:exact_importance_weight}
\end{equation}
Using this exact ratio gives the population \emph{IW-CFM objective}
\begin{equation}
\mathcal{L}_{\text{IW-CFM}}^\star(\phi, \omega) = \mathbb{E}_{p(y), p(x_1|y)} \bigg[ w_\omega^\star(x_1, y) \mathbb{E}_{p(t), p_t(x | x_1)} \Big[ \| v^\phi(x,t,y,\omega) - u_t(x|x_1) \|^2 \Big] \bigg].
\label{eq:iw_cfm_obj}
\end{equation}
The normalizer $Z(y,\omega)$ is generally unavailable. For a batch of $K$ samples $\{x_i\}_{i=1}^K$ drawn from $p(x|y)$, we therefore use finite-batch self-normalized importance sampling (SNIS) weights
\begin{equation}
\widehat{w}_{\omega,K}^{(i)}
= \frac{\tilde{w}_\omega(x_i,y)}{\sum_{k=1}^K \tilde{w}_\omega(x_k,y)}.
\label{eq:snis_importance_weight}
\end{equation}
As shown in \cref{app:gradient_proof}, the exact objective $\mathcal{L}_{\text{IW-CFM}}^\star$ has the same gradient as the ideal flow-matching objective on $p_\omega$. Replacing $w_\omega^\star$ with $\widehat{w}_{\omega,K}$ gives the practical estimator in \cref{alg:iw_cfm_training}; its gradient is biased for finite $K$ and becomes unbiased asymptotically as $K\to\infty$. While standard Flow Matching minimizes an unweighted MSE (see e.g., \citep{lipman2023flow}), our approach uses these importance weights to re-target the powered distribution.

\begin{algorithm}[t]
\caption{Importance-Weighted Conditional Flow Matching (IW-CFM)}
\label{alg:iw_cfm_training}
\begin{algorithmic}[1]
    \REQUIRE Dataset $\mathcal{D}$, frozen log-density models $\log p(x)$ and $\log p(x|y)$, distribution $p(\omega)$.
    \REQUIRE Hyperparameters: $M$ ($(y, \omega)$ pairs), $K$ (samples per pair).
    \REPEAT
        \STATE \textbf{1. Parallel Hierarchical Sampling}
        \STATE Sample conditions and powering parameters $\{(y_j, \omega_j)\}_{j=1}^M \sim p(y) \times p(\omega)$
        \STATE Sample $M$ subbatches $\mathcal{B}_j = \{x_1^{(i,j)}\}_{i=1}^K \sim \mathcal{D}$ conditioned on $y_j$

        \STATE \textbf{2. Vectorized Importance Weighting}
        \STATE \COMMENT{Compute log-weights $\ell^{(i,j)}$ for all samples in parallel}
        \STATE $\ell^{(i,j)} \gets (\omega_j-1) \log p(x_1^{(i,j)}|y_j) + (1-\omega_j) \log p(x_1^{(i,j)})$
        \STATE $\widehat{w}_{\omega_j,K}^{(i,j)} \gets \text{Softmax}_i(\ell^{(i,j)}) = \frac{\exp(\ell^{(i,j)})}{\sum_{m=1}^K \exp(\ell^{(m,j)})}$ \hfill \text{// Nested Normalization}

        \STATE \textbf{3. Batch Objective Evaluation}
        \STATE Sample $t_{i,j} \sim \mathcal{U}[0,1]$ and noise $x_0^{(i,j)} \sim \mathcal{N}(0, I)$ for all $(i,j)$
        \STATE Compute flow targets $(x_t, u_t)^{(i,j)}$ via \cref{eq:interpolation,eq:velocity}

        \STATE Define the batch loss as the weighted average:
        \STATE $\mathcal{L}_{\text{batch}} \gets \frac{1}{M} \sum_{j=1}^M \sum_{i=1}^K \widehat{w}_{\omega_j,K}^{(i,j)} \| v^\phi(x_t, t, y_j, \omega_j)^{(i,j)} - u_t^{(i,j)} \|^2$

        \STATE Update $\phi \gets \phi - \eta \nabla_\phi \mathcal{L}_{\text{batch}}$ \hfill \text{// Gradient Descent}
    \UNTIL{converged}
\end{algorithmic}
\end{algorithm}

\subsection{Amortized $\omega$-Powering via Lightweight Correctors}
\label{sec:amortized_powering}
To avoid training a separate model for each powering parameter, we treat $\omega$ as a conditioning variable for a single corrector network. We implement this corrector, $v^{\phi}$, using lightweight, parameter-efficient designs.
For synthetic experiments, $v^{\phi}$ is an MLP that predicts an affine correction to the baseline conditional vector field. For real-world image data (ImageNet), we build upon the Scalable Interpolant Transformer (SiT) backbone \citep{ma2024sit}, introducing a parameter-efficient adapter into its Transformer blocks, similar to \citet{jensen2025efficient}. This way, we learn a single model to target the entire family of powered distributions, adding only about 5\% of the parameters compared to the baseline SiT.

A key challenge in this amortized setting is normalizing the importance weights. Normalizing globally across a batch with different conditions and powering parameters changes the effective distribution over $(y,\omega)$: each pair is weighted in proportion to its partition function $Z(y,\omega)$ rather than according to the intended outer distribution $p(y)p(\omega)$ (see \cref{app:bias_variance}). We instead use hierarchical batches and nested normalization. At each training step, we sample $M$ condition--parameter pairs $(y_j, \omega_j)$ and a subbatch of $K$ data points for each pair. We normalize weights only within samples that share $(y_j, \omega_j)$, preserving the intended outer distribution. \Cref{alg:iw_cfm_training} gives the procedure.

\subsection{Scaling to High-Dimensional Data}
\label{sec:scaling_method}

\Cref{alg:iw_cfm_training} provides an asymptotically unbiased estimator for streaming data, which we apply directly in our synthetic experiments. However, this approach is computationally prohibitive for high-dimensional data. We therefore adopt two practical techniques that enable an efficient application of IW-CFM to ImageNet.

\textbf{Amortized Training with Precomputed Weights.}
Sampling from the weighted empirical distribution at an arbitrary continuous $\omega$ would require computing the weight of every ImageNet training instance for every sampled $\omega$. We instead precompute normalized weights over the full dataset at integer powering parameters. Given a continuous $\omega$, we sample training instances with replacement using the weights at the nearest integer $\bar m(\omega)$. For those instances only, we linearly interpolate the self-normalized weights at $\omega$ and multiply the flow-matching loss by $r_\omega^{\mathrm{reg}}(i|y)/r_{\bar m(\omega)}^{\mathrm{reg}}(i|y)$. This proposal is more efficient than uniform sampling followed by SNIS: at large $\omega$, most stored weights are zero, so most uniformly sampled instances would provide no weighted training signal. \Cref{app:distribution_tempering} specifies the zero-weight fallback and numerical threshold.

\textbf{Operational ImageNet Weights.}
The population formulation assumes access to exact conditional and unconditional densities. On ImageNet, we instead use densities induced by frozen pretrained SiT vector fields, which are learned approximations to the data distributions, and evaluate them numerically through augmented ODE integration. Powering amplifies errors in these density estimates. In particular, if the estimated log-density ratio
\[
\widehat d(x,y)
=
\log \widehat p(x|y)-\log \widehat p(x)
\]
differs from its ideal value by an additive error $\epsilon(x,y)$, then the corresponding unnormalized weight is distorted by the multiplicative factor
$\exp((\omega-1)\epsilon(x,y))$. At large $\omega$, even moderate errors can therefore cause a few training instances to dominate the normalized empirical distribution.

Given any fixed set of endpoint weights, importance-weighted flow matching trains toward the distribution encoded by those weights. It does not, however, correct errors in the weights themselves. If exact and reliable densities were available, we could use the unregularized density ratios directly. With the available ImageNet estimates, their extreme concentration produces a brittle empirical target: most normalized weights become numerically zero, while a small number of potentially misestimated instances receive nearly all of the mass. We therefore regularize the weights before training.

Following the standard temperature-scaling transformation \citep{guo2017calibration}, we first contract all log-density-ratio differences by a temperature $T>1$. We then cap the largest per-class log-weights at each precomputed integer powering parameter $m$. For the $N_y$ training instances in class $y$, define
\begin{equation}
\begin{aligned}
a_m(i,y)
&= \frac{m-1}{T}\left[\log p(x_i|y)-\log p(x_i)\right],\\
r_m^{\mathrm{reg}}(i|y)
&= \frac{\exp\!\left(\operatorname{Clip}_{\tau,y}(a_m(i,y))\right)}
{\sum_{j=1}^{N_y}\exp\!\left(\operatorname{Clip}_{\tau,y}(a_m(j,y))\right)}.
\end{aligned}
\label{eq:regularized_integer_target}
\end{equation}
Temperature scaling reduces concentration globally. Increasing $T$ sufficiently to suppress the most extreme weights would also flatten potentially meaningful variation throughout the remainder of the class. We therefore additionally apply $\operatorname{Clip}_{\tau,y}$, which caps only the largest log-weights without discarding their corresponding instances. Clipping limits the influence of extreme density-ratio estimates while retaining variation across the bulk of the data. It constitutes a deliberate bias--stability trade-off rather than an exact approximation that preserves the theoretical $p_m$. \Cref{fig:non_zero_weights} illustrates the finite-precision consequence of the unregularized concentration and the effect of $T$ and $\tau$ on the number of numerically active instances. \Cref{app:distribution_tempering} defines the clipping operator and reports the selected parameters.

For a non-integer $\omega$, let $m_-<\omega<m_+$ be its adjacent precomputed integers and $\lambda=(\omega-m_-)/(m_+-m_-)$. We define
\begin{equation}
r_\omega^{\mathrm{reg}}(i|y)
= (1-\lambda)r_{m_-}^{\mathrm{reg}}(i|y)
+ \lambda r_{m_+}^{\mathrm{reg}}(i|y).
\label{eq:interpolated_target}
\end{equation}
When $\omega=m$ is an integer grid point, we use $r_m^{\mathrm{reg}}$ directly. This interpolation defines a continuous operational family between the precomputed references; it is not claimed to reproduce the exact unregularized powered distribution at non-integer $\omega$.

The resulting $r_\omega^{\mathrm{reg}}$ therefore defines the operational weighted ImageNet reference, rather than an exact estimator of the theoretical $p_\omega$. During corrector training, these weights are evaluated only for instances drawn from the nearest-integer proposal. Moreover, zero-weight re-injection and the loss-correction fallback make the actual training objective a regularized mixture rather than exact sampling from $r_\omega^{\mathrm{reg}}$. Consequently, our exact target-recovery claim applies to the population formulation and controlled synthetic experiments; on ImageNet, we evaluate alignment with the explicitly defined regularized empirical references.

\section{Related Work}

The closest methodological predecessor is Energy-Weighted Flow Matching \citep{zhangenergy}. At the population level, the IW-CFM objective is a specialization of their conditional energy-weighted flow-matching formulation. For a fixed condition $y$, setting the base distribution to $p(x|y)$, the energy to
$E_y(x)=\log p(x)-\log p(x|y)$, and $\beta=\omega-1$ gives
\[
p(x|y)\exp[-\beta E_y(x)]
\propto p(x|y)^\omega p(x)^{1-\omega}.
\]
Thus, the endpoint importance-weighting principle and the resulting population equivalence to flow matching on the reweighted target follow from \citet{zhangenergy}.

Our contribution concerns the conditional, amortized, and high-dimensional realization of this principle. While \citet{zhangenergy} primarily develop energy-weighted generative models for offline reinforcement learning and additionally show, in a low-dimensional synthetic bandit ablation, that a single energy-weighted diffusion score model can be conditioned on $\beta$, IW-CFM uses a lightweight flow-matching corrector to approximate model-defined powered conditional distributions across ImageNet classes and a continuous range of $\omega$. Our parameter-efficient corrector conditioned on $\omega$ is inspired by guidance-distillation architectures \citep{meng2023distillation,jensen2025efficient}, but is trained to approximate a family of powered vector fields rather than to reproduce standard CFG. This setting requires normalization across heterogeneous $(y,\omega)$ pairs: our hierarchical estimator normalizes within each pair, preserving the intended outer distribution $p(y)p(\omega)$ rather than reweighting pairs according to their partition functions. We further develop a regularized ImageNet construction based on precomputed conditional and unconditional densities, per-class weight regularization, nearest-integer proposal sampling, and interpolation between precomputed powering parameters. Our experiments evaluate recovery across a family of analytical synthetic targets and alignment with regularized ImageNet references.

Analyses by \citet{bradley2024classifier} and \citet{chidambaram2024does} provide evidence that CFG does not generally sample from the powered distribution suggested by the numerical identification $\gamma=\omega$ and can shift probability mass away from the data manifold. These works diagnose the discrepancy; IW-CFM instead learns a vector field conditioned directly on the powering parameter $\omega$ under the idealized formulation and its regularized empirical counterpart on ImageNet.

Feynman--Kac Correctors (FKC) \citep{skretafeynman} use particle-based Sequential Monte Carlo to target modified distributions at inference time. Rectified Gradient Guidance (REG) \citep{gao2025reg} formulates powered-distribution sampling as reward optimization over the sampling trajectory, while \citet{kit2025debiasing} study a related debiasing problem for discrete diffusion. IW-CFM instead shifts the main computation to density precomputation and corrector training, after which sampling uses a single learned vector field.

CFG++ \citep{chungcfg++} modifies guidance to reduce off-manifold behavior, while \citet{galashov2025learn} learn a guidance network that better matches the original conditional distribution $p(x|y)$. Their target differs from ours: IW-CFM studies powered conditional distributions and a regularized empirical implementation on ImageNet.

Finally, \citet{karczewski2025diffusion} investigate extreme high-density regions of diffusion models and find that samples with substantially higher likelihood than those produced by standard samplers can be blank, blurry, or cartoon-like. Their results reinforce the distinction between model likelihood and perceptual quality. Our target differs from maximizing either the marginal or conditional likelihood in isolation: $p_\omega(x|y)$ increasingly emphasizes features discriminative of the condition. Their observations nevertheless motivate regularizing the concentrated importance weights in our ImageNet implementation.

\section{Experiments}
Our primary experiments address two questions. First, when the analytical $\omega$-powered density is available, does IW-CFM recover that target more faithfully than CFG with the numerically matched guidance scale $\gamma=\omega$ and the other evaluated alternatives? Second, on ImageNet, how closely do samples from the learned corrector align with weighted and filtered references derived from $r_\omega^{\mathrm{reg}}$? We study the first question on two-dimensional synthetic distributions and the second with a lightweight corrector for a pretrained SiT model. The exploratory analysis of powered-density rankings on real ImageNet data follows in \cref{sec:characterizing_omega_pow}.

\subsection{Synthetic Target Recovery}
We first test IW-CFM against the target density on two-dimensional synthetic datasets: a three-mode Gaussian mixture and a distribution inspired by fractal-like geometries (for details see \cref{app:exp_details_synthetic}). For these tasks, we compute importance weights during training using the minibatch SNIS estimator in \cref{alg:iw_cfm_training}.

As shown in \cref{fig:fractal_cfg,fig:fractal_iwcfm}, CFG with matched guidance $\gamma=\omega$ departs from the analytical target contours, whereas IW-CFM closely follows them. This controlled setting tests the motivating identification in \cref{eq:cfg_powered_mismatch} before the density-estimation and regularization choices needed for ImageNet are introduced. \Cref{app:exp_details_synthetic,app:results_extended_synthetic_iwcfm} provide hyperparameters and additional density visualizations.

We additionally evaluate REG and FKC on the exact conditional and unconditional vector fields of the three-Gaussian problem. Under the tested configurations, neither method closely matches the analytical powered contours. We therefore do not conduct an image-scale comparison, and our ImageNet results make no claim of superiority over REG or FKC. \Cref{app:results_extended_synthetic_reg_fkc} provides implementation details and results.

\subsection{ImageNet Generation}
We scale our method to ImageNet at $256 \times 256$ resolution. The IW-CFM corrector operates in the latent space of the Stable Diffusion VAE used by the pretrained SiT model \citep{rombach2022high}. Log-probability computation, loss reweighting, and sampling all use this latent space; we decode to pixels only for visualization and image-based metrics (CLIP, FID, and DINOv2). The corrector is a parameter-efficient adapter in the frozen Transformer blocks of a pretrained SiT-XL/2 \citep{Ma2024SiTRepo}, following \citet{jensen2025efficient}. We train it with $\omega \sim \mathcal{U}[4, 128]$; the real-data CLIP analysis in \cref{fig:clip-rank-shifted} evaluates a wider range of $\omega$ values.

On ImageNet, IW-CFM samples training data from the precomputed nearest-integer weights and corrects each loss with its interpolated continuous-$\omega$ weight, as described in \cref{sec:scaling_method}. At inference, we optionally apply CFG with scale $\gamma$ to $v^\phi(x,t,y,\omega)$. To support this option, we replace the class token with a null token during 10\% of training iterations. This additional guidance is a sampling heuristic and need not preserve target fidelity. We use unguided inference ($\gamma=1$) to assess the learned distributional shift, $\gamma=1.5$ for FID, FD-DINOv2, and PRDC, and $\gamma=4$ for qualitative visualizations. The ancillary, prompt-specific comparison of IW-CFM and CFG generations in \cref{app:clip_generated_comparison} also uses $\gamma=4$ for both methods. This generated-sample comparison is distinct from the real-data CLIP analysis in \cref{sec:clip_suite}. Throughout the ImageNet results, CFG labels report $\gamma$, whereas IW-CFM labels report the target powering parameter $\omega$; each table or figure caption states the applicable IW-CFM inference scale. We use the ImageNet training set from the baseline SiT model with horizontal-flip augmentation and precompute log-probabilities for both original and flipped images. \Cref{app:distribution_tempering} gives the parameters used by the scaling methods in \cref{sec:scaling_method}.

\subsection{Alignment with Regularized ImageNet References}
\label{sec:imagenet_alignment}
Having established the ImageNet implementation, we first evaluate sample alignment with references derived from the regularized weights $r_\omega^{\mathrm{reg}}$. The separate exploratory analysis in \cref{sec:characterizing_omega_pow} subsequently examines real-data rankings through selected CLIP contrasts.

We report Fr\'echet Inception Distance (FID) \citep{fid}, FD-DINOv2 \citep{stein2023exposing}, and, using Inception features, Precision and Recall \citep{kynkaanniemi2019improved} together with Density and Coverage \citep{naeem2020reliable}. The \enquote{weighted} reference uses $r_\omega^{\mathrm{reg}}$ from \cref{eq:interpolated_target} to compute feature statistics and omits the proposal-only zero-weight re-injection used during training. For the \enquote{filtered} reference, define $A_{\omega,y}=\{i:r_\omega^{\mathrm{reg}}(i|y)>0\}$ from the stored float32 weights; the threshold is exactly zero in float32. For PRDC, we sample 10 instances uniformly without replacement from each $A_{\omega,y}$, producing a class-balanced reference of 10,000 instances. \Cref{app:results_extended_imagenet_target_metrics} reports the same metrics against the standard ImageNet targets as ancillary diagnostics.

Under the guided practical configuration, \cref{fig:fid_wrt_oracle} shows that IW-CFM has substantially lower Fr\'echet distances to these regularized references than either fixed-guidance CFG, $\gamma=1.5$, or numerically matched CFG, $\gamma=\omega$ (in \cref{fig:fid_wrt_oracle_full}), across the evaluated target values $\omega$. The corresponding qualitative comparison in \cref{fig:cfg_vs_iwcfm_imagenet} shows that, for the displayed ImageNet example, matched-guidance CFG loses coherent image structure for $\gamma\geq32$, whereas IW-CFM remains visually coherent at the corresponding target values with inference-time guidance fixed at $\gamma=4$. In addition to integer values of $\omega$, we evaluate IW-CFM at intermediate non-integer values to test the interpolation of the amortized self-normalized weights described in \cref{sec:scaling_method}. Its FID and FD-DINOv2 remain consistent with the neighboring integer targets, indicating that the learned conditioning interpolates effectively between them. In \cref{tab:prdc_inception_filtered}, IW-CFM also achieves the best or near-best PRDC scores with the filtered reference across the evaluated target values. Because these distributional metrics do not rule out memorization, \cref{app:nearest_neighbor_analysis} separately tests for training-data retrieval and reports low rates under its specified test.

The unguided distribution-shift evaluation is reported in \cref{app:metrics_wo_guidance}.

\begin{figure}[t]
    \centering
    \includegraphics[width=0.5\linewidth]{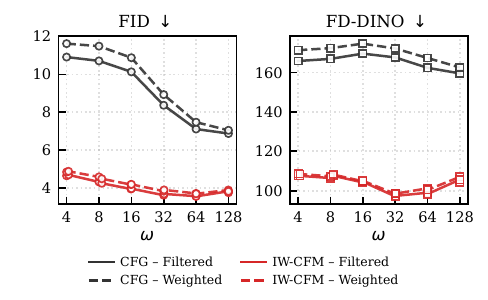}
    \caption{Guided practical evaluation against regularized ImageNet references indexed by the powering parameter $\omega$. IW-CFM targets each displayed $\omega$ and uses inference-time guidance $\gamma=1.5$. The CFG baselines use fixed guidance $\gamma=1.5$. The plots show FID (left) and FD-DINOv2 (right). \enquote{Weighted} uses feature moments weighted by $r_\omega^{\mathrm{reg}}$, while \enquote{Filtered} uses uniform moments over the numerically active weights.}
    \label{fig:fid_wrt_oracle}
\end{figure}

\begin{table}[t]
\begin{center}
\caption{Guided practical evaluation using PRDC \citep{naeem2020reliable} against filtered regularized ImageNet references indexed by $\omega$. Each IW-CFM row targets the listed $\omega$ and uses inference-time guidance $\gamma=1.5$; CFG uses either fixed guidance $\gamma=1.5$ or matched guidance $\gamma=\omega$. The best score in each reference group is bolded. The class-balanced reference contains 10 instances with numerically nonzero stored weights from each of the 1,000 classes.}
\begin{tabular}{llllll}
\toprule
Model and parameter & Reference $\omega$ & Precision & Recall & Density & Coverage \\
\midrule
CFG ($\gamma=1.5$) & 4.0 & 0.55 & 0.50 & 0.95 & 0.90 \\
CFG ($\gamma=4.0$) & 4.0 & \textbf{0.70} & 0.30 & 1.10 & 0.89 \\
IW-CFM ($\omega=4.0$) & 4.0 & 0.69 & \textbf{0.60} & \textbf{1.15} & \textbf{0.98} \\
\midrule
CFG ($\gamma=1.5$) & 8.0 & 0.56 & 0.51 & 0.93 & 0.90 \\
CFG ($\gamma=8.0$) & 8.0 & 0.66 & 0.22 & 1.02 & 0.86 \\
IW-CFM ($\omega=8.0$) & 8.0 & \textbf{0.70} & \textbf{0.61} & \textbf{1.15} & \textbf{0.98} \\
\midrule
CFG ($\gamma=1.5$) & 16.0 & 0.57 & 0.53 & 0.92 & 0.90 \\
CFG ($\gamma=16.0$) & 16.0 & 0.54 & 0.16 & 0.87 & 0.80 \\
IW-CFM ($\omega=16.0$) & 16.0 & \textbf{0.71} & \textbf{0.62} & \textbf{1.15} & \textbf{0.97} \\
\midrule
CFG ($\gamma=1.5$) & 32.0 & 0.58 & 0.57 & 0.88 & 0.87 \\
CFG ($\gamma=32.0$) & 32.0 & 0.29 & 0.14 & 0.41 & 0.48 \\
IW-CFM ($\omega=32.0$) & 32.0 & \textbf{0.70} & \textbf{0.63} & \textbf{1.07} & \textbf{0.97} \\
\midrule
CFG ($\gamma=1.5$) & 64.0 & 0.60 & 0.57 & 0.91 & 0.86 \\
CFG ($\gamma=64.0$) & 64.0 & 0.04 & 0.10 & 0.03 & 0.05 \\
IW-CFM ($\omega=64.0$) & 64.0 & \textbf{0.68} & \textbf{0.64} & \textbf{1.05} & \textbf{0.97} \\
\midrule
CFG ($\gamma=1.5$) & 128.0 & 0.61 & 0.57 & 0.94 & 0.86 \\
CFG ($\gamma=128.0$) & 128.0 & 0.10 & 0.00 & 0.08 & 0.01 \\
IW-CFM ($\omega=128.0$) & 128.0 & \textbf{0.68} & \textbf{0.63} & \textbf{1.06} & \textbf{0.96} \\
\bottomrule
\label{tab:prdc_inception_filtered}
\end{tabular}
\end{center}
\end{table}

\subsection{Ablation of the ImageNet Experiment}

We independently ablate two components of the ImageNet implementation: the numerical accuracy of log-density estimation and the regularization applied to the resulting importance weights. In each experiment, all other settings are held fixed.

\textbf{Number of log-density integration steps.}
We conduct this ablation on a fixed subset of 100 ImageNet categories, described in \cref{app:ablation_num_steps}, while keeping the target-regularization parameters at their default values. For each evaluated step count, we recompute the log-densities and importance weights and train a new corrector from scratch. All models are evaluated against the common weighted and filtered references constructed from the 250-step log-density estimates, using 50 generated samples per category. As shown in \cref{fig:ablation_num_steps}, the results remain relatively close at $\omega\in\{4,8\}$, begin to separate at $\omega=16$, and diverge substantially for $\omega\geq32$. This behavior is consistent with the error-amplification effect discussed in \cref{sec:scaling_method}: an additive error in the estimated log-density ratio is multiplied by $\omega-1$ before exponentiation. Consequently, the numerical precision used for density estimation should be selected according to the largest powering parameter of interest; a step count adequate at small $\omega$ need not remain adequate under stronger powering.

\begin{figure}[t]
    \centering
    \includegraphics[width=0.75\linewidth]{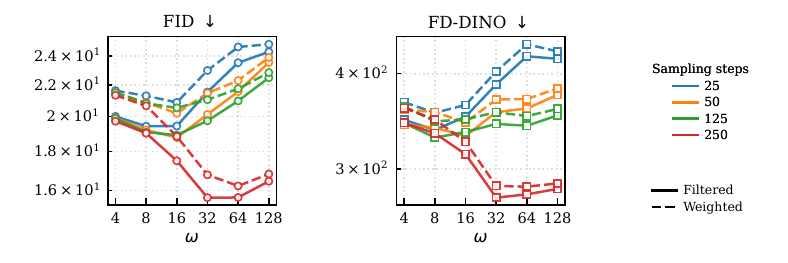}
    \caption{\textbf{Sensitivity to the number of log-density integration steps.}
    For each step count, we recompute the ImageNet log-densities and weights and train a corrector from scratch on the same subset of 100 categories. All models are evaluated against references constructed from the 250-step estimates, using 50 generated samples per category. Solid and dashed lines show distances to the filtered and weighted references, respectively. Both FID and FD-DINOv2 show increasing sensitivity to density-estimation accuracy as $\omega$ grows. Lower is better.}
    \label{fig:ablation_num_steps}
\end{figure}

\textbf{Target regularization parameters.}
Using the default log-density estimates computed with 250 integration steps, we vary the temperature $T\in\{1,1.2,1.5,2.5,5\}$ and the clipped upper fraction $\tau\in\{0.5\%,1\%,2.5\%,5\%\}$, yielding 20 configurations. For each configuration, we train a corrector from scratch and evaluate it using 50 generated samples per ImageNet category against weighted and filtered references constructed with the matching $(T,\tau)$. The two reference constructions exhibit similar qualitative trends in \cref{fig:ablation_regularization_params}.

Increasing $\tau$ consistently reduces the target-relative distances. This is expected: more aggressive clipping attenuates a larger portion of the density-ratio signal and moves the operational target toward the unweighted class-conditional empirical distribution, making the required correction smaller. Because the reference itself changes with $\tau$, these lower distances do not imply that more aggressive clipping is preferable. To retain as much of the powered signal as possible, $\tau$ should instead be the smallest value that still prevents pathological weight concentration.

The effect of $T$ depends on $\omega$: stronger tempering generally worsens alignment at small $\omega$ but becomes comparable or beneficial at large $\omega$. This interaction may reflect both the signal--concentration trade-off and the fixed inference-time guidance used in this evaluation. Strong tempering makes low-$\omega$ targets close to the base distribution, leaving only a weak correction signal, whereas at large $\omega$ it reduces extreme weight concentration while retaining a substantial target shift. The default $(T,\tau)=(1.2,1\%)$ applies comparatively mild regularization and provides stable performance across the evaluated range.

\begin{figure}[t]
    \centering
    \includegraphics[width=\linewidth]{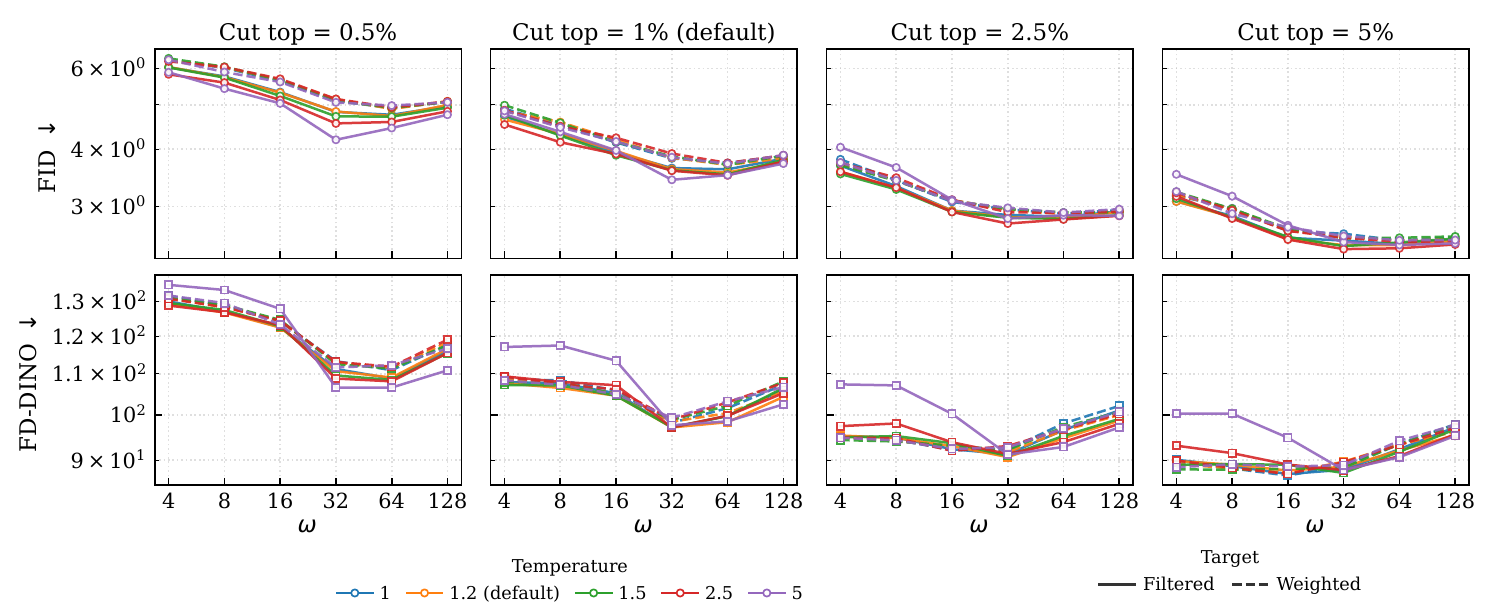}
    \caption{\textbf{Sensitivity to target-regularization parameters.}
    Each column corresponds to a clipped upper fraction $\tau$, and colors denote the temperature $T$. Every corrector is trained using 250-step log-density estimates and evaluated against weighted and filtered references constructed using its matching $(T,\tau)$, with 50 generated samples per ImageNet category. Solid and dashed lines denote the filtered and weighted references, respectively. More aggressive clipping produces an easier but more strongly modified operational target, while the effect of temperature depends on the powering parameter. Lower is better.}
    \label{fig:ablation_regularization_params}
\end{figure}

\section{Exploratory Analysis of Powered-Density Rankings on Real Data}
\label{sec:characterizing_omega_pow}

The preceding experiments provide the primary validation of target recovery and alignment with the regularized ImageNet references. Here, we present a complementary, exploratory characterization of how powering reorganizes real ImageNet examples. For this analysis, we evaluate the conditional log-density $\log p(x|y)$ and unconditional log-density $\log p(x)$ of every ImageNet instance \citep{imagenet1k} using the same pretrained SiT model \citep{ma2024sit} that underlies standard CFG.

\begin{figure}[t]
    \centering
    \includegraphics[width=\linewidth]{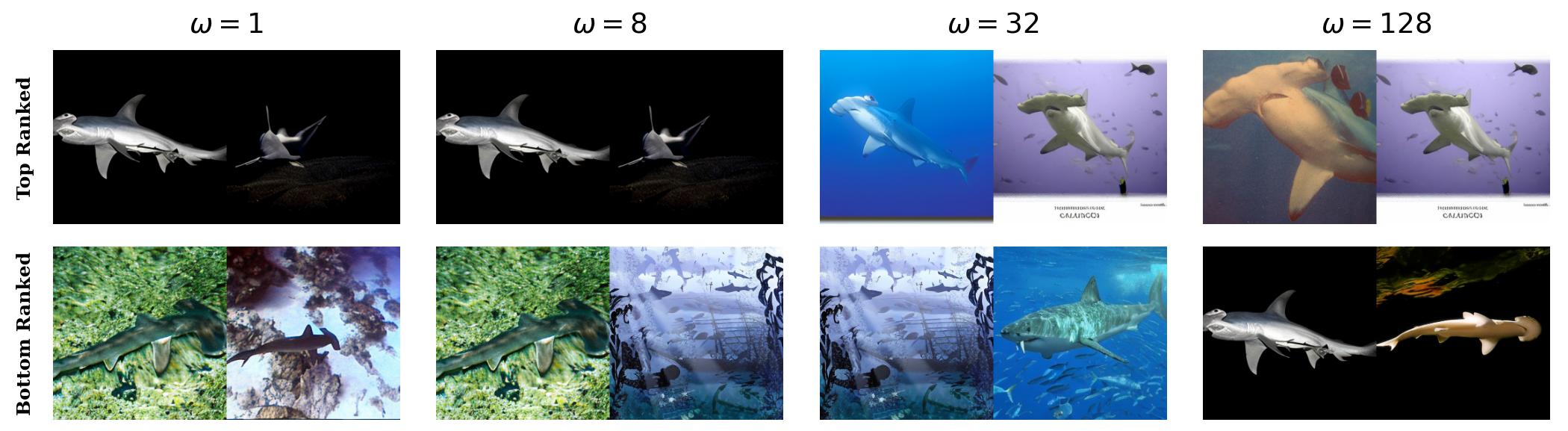}
    \caption{Illustrative powered-density rankings of real images from the ImageNet \enquote{hammerhead shark} category at different values of $\omega$.}
    \label{fig:powered_top_bottom}
\end{figure}

\subsection{Powered-Density Rankings of ImageNet Data}

Within each category, we rank ImageNet instances by the powered-density score defined in \cref{eq:powered_target_intro}; the class-dependent normalizing constant does not affect the ordering. All rankings in this section are computed from the unregularized, model-defined powered log-density score, prior to the temperature scaling and log-weight clipping introduced in \cref{sec:scaling_method}. Those transformations are used only to construct the operational ImageNet training and reference weights. Varying $\omega$ changes which images occupy the extremes of the ranking. \Cref{fig:powered_top_bottom} provides one qualitative example, and \cref{app:omega_pow_imagenet} presents additional selected categories.

\textbf{Illustrative ranking changes.}
In the displayed \enquote{hammerhead shark} examples in \cref{fig:powered_top_bottom}, the top-ranked images at lower $\omega$ have relatively simple compositions, whereas those at higher $\omega$ include more complex scenes and category-representative features. Separately, an image that appears to depict a great white shark with digitally added saber teeth is placed among the bottom-ranked examples at $\omega=32$ but does not appear among the corresponding extremes at lower $\omega$; it is therefore more plausibly interpreted as anomalous or mislabeled than as part of the compositional change. The additional categories in \cref{app:omega_pow_imagenet} exhibit changes of varying form and direction.

\subsection{CLIP Analysis of Real-Data Rankings}
\label{sec:clip_suite}

We use paired CLIP scores to characterize four selected attribute contrasts in real images. The complexity and naturalness contrasts are adopted from \citet{wang2023exploring}; the realism and texture contrasts are constructed following the same paired-prompt design (\cref{tab:prompt_pairs}). For an image $x$ from class $c$ and attribute contrast $a$, the paired score $S_{c,a}(x)$ is the positive-prompt component of a two-way softmax over the temperature-scaled CLIP similarities and therefore lies in $[0,1]$; \cref{app:clip_details} gives the precise definition. We interpret these scores only as prompt-specific contrasts within the chosen CLIP representation, not as general measures of the corresponding attributes.

\textbf{Real-Data Ranking Analysis.}
To accommodate class-dependent baselines, we compare the paired CLIP scores of the top- and bottom-ranked 10\% of real images separately within each ImageNet category. We summarize their separation by Cohen's $d$ \citep{cohen1988statistical}, defined as the difference between the group means divided by their pooled standard deviation. A larger value indicates stronger relative alignment of the top-ranked group with the positive prompt.

To characterize changes relative to the unpowered ranking, we define $\Delta d_{c,a}(\omega)=d_{c,a}(\omega)-d_{c,a}(1)$ for class $c$ and attribute contrast $a$. \Cref{fig:clip-rank-shifted} shows the resulting trajectories across all 1,000 ImageNet classes. A positive value denotes a shift toward the positive-prompt side relative to the class-specific contrast at $\omega=1$; it does not imply that the absolute contrast $d_{c,a}(\omega)$ is positive. The trajectory-density fields retain the distribution of classwise responses, while the orange curves provide the corresponding across-class means.

\begin{figure}[t]
  \centering

  \begin{subfigure}[t]{0.24\linewidth}
    \centering
    \includegraphics[width=\linewidth]{%
      \cliprankdir/clip_real_rank_trajectory_density_wide_shifted_complex_simple.pdf%
    }
    \caption{Complex/simple}
    \label{fig:clip-rank-shift-complex}
  \end{subfigure}%
  \hfill
  \begin{subfigure}[t]{0.24\linewidth}
    \centering
    \includegraphics[width=\linewidth]{%
      \cliprankdir/clip_real_rank_trajectory_density_wide_shifted_natural_synthetic.pdf%
    }
    \caption{Natural/synthetic}
    \label{fig:clip-rank-shift-natural}
  \end{subfigure}%
  \hfill
  \begin{subfigure}[t]{0.24\linewidth}
    \centering
    \includegraphics[width=\linewidth]{%
      \cliprankdir/clip_real_rank_trajectory_density_wide_shifted_realism_vs_cgi.pdf%
    }
    \caption{Realism/CGI}
    \label{fig:clip-rank-shift-realism}
  \end{subfigure}%
  \hfill
  \begin{subfigure}[t]{0.24\linewidth}
    \centering
    \includegraphics[width=\linewidth]{%
      \cliprankdir/clip_real_rank_trajectory_density_wide_shifted_texture.pdf%
    }
    \caption{Texture}
    \label{fig:clip-rank-shift-texture}
  \end{subfigure}

  \caption{Class-dependent changes in the association between powered-density rankings and selected CLIP attributes. Rankings use the unregularized powered log-density score, before temperature scaling or clipping. For each ImageNet class $c$ and attribute contrast $a$, $d_{c,a}(\omega)$ compares the top- and bottom-ranked 10\%, and $\Delta d_{c,a}(\omega)$ reports its change from $\omega=1$. The density fields summarize the trajectories of all 1,000 classes; darker regions contain more trajectories, and orange curves show the across-class means.}
  \label{fig:clip-rank-shifted}
\end{figure}

At the aggregate level, the largest baseline-relative change occurs for the complex/simple contrast, while realism/CGI and texture show smaller positive shifts and natural/synthetic remains close to its $\omega=1$ level. These averages stabilize at larger $\omega$, but the classwise trajectories remain widely dispersed: individual categories exhibit increasing, decreasing, and nonmonotonic responses.

The absolute classwise contrasts from which \cref{fig:clip-rank-shifted} is derived are reported in \cref{fig:clip-rank-absolute}; the main-text trajectories subtract $d_{c,a}(1)$ from these values separately for each class and attribute contrast. The absolute values further qualify the baseline-relative view. In particular, the mean complex/simple contrast is strongly negative at $\omega=1$ and approaches zero as $\omega$ increases. Its positive baseline-relative shift thus primarily reflects attenuation of an initial association with simpler images, not the emergence of a strong absolute preference for complexity. The natural/synthetic contrast remains near zero on average, and the shifts in realism/CGI and texture are comparatively modest.

\section{Discussion}
IW-CFM has several limitations. First, it depends on the quality and support coverage of the base density models. The ImageNet implementation also depends on regularized precomputed weights, nearest-integer sampling, and interpolation. Computing these weights requires a correction dataset. We use that dataset to correct a pretrained model rather than train one from scratch, but performance may change when the correction data are smaller than or shifted from the original training set.

Second, density precomputation requires solving two augmented ODEs, one conditional and one unconditional, for each training instance. This one-time cost enables sampling with a single learned corrector. The ImageNet weight construction has a separate \emph{regularization trade-off}: log-weight clipping and temperature scaling limit concentration on a few training modes, but their settings are chosen manually. Automated regularization schedules remain future work.

Our CLIP analysis is descriptive and prompt-dependent; consequently, neither the class-averaged trajectories nor individual examples establish statistically significant or universal attribute effects.

Finally, our current nested estimator and ImageNet implementation are tailored to categorical conditioning because self-normalization groups samples by a shared class label. The population formulation itself does not require categorical conditions. Applying the practical estimator to high-cardinality or continuous conditions, such as text prompts, will require a different normalization strategy.

\section{Conclusions}
In this work, we introduced Importance-Weighted Conditional Flow Matching (IW-CFM) to learn model-defined $\omega$-powered conditional distributions. With exact densities and normalization, its population objective targets the powered vector field. The practical finite-sample estimator uses self-normalized weights, and the ImageNet implementation uses regularized precomputed integer weights with interpolated per-sample corrections.

The synthetic experiments test the analytical target directly and show that IW-CFM recovers it more faithfully than the evaluated alternatives, including CFG at the numerically matched scale $\gamma=\omega$. On ImageNet, the learned corrector aligns more closely than CFG with the weighted and filtered regularized references. Unguided evaluation provides evidence of a learned distributional shift, while the guided practical results include an additional inference-time CFG heuristic and therefore do not isolate the distributional shift learned by the corrector.

The complementary real-image analysis shows that powered-density rankings and their selected CLIP contrasts vary with $\omega$. The across-class averages exhibit several baseline-relative shifts, while the underlying classwise trajectories are heterogeneous and can be decreasing or nonmonotonic.

IW-CFM learns multiple powered targets with one corrector. Although our implementation uses a SiT backbone, the population formulation applies to other continuous domains. Applying it to video or 3D synthesis would require testing the available density estimates, normalization schemes, and operational targets in those settings.

\subsubsection*{Broader Impact Statement}
This work studies generative modeling of images, with the technical goal of producing realistic and diverse samples. That goal is not socially neutral: greater realism makes synthetic images harder to distinguish from photographic evidence, while greater diversity broadens the range of content that can be generated at scale. These capabilities may support valuable scientific and creative applications, but they may also facilitate deception, impersonation, non-consensual imagery, and other forms of abuse. Improvements in generative modeling should therefore be accompanied by careful evaluation of misuse risks and appropriate safeguards in any deployed system.


\subsubsection*{Acknowledgments}
MF, FL, and AK acknowledge financial support from the Swiss National Science Foundation through the LegoMol project (grant no. 207428). The computations were performed at University of Geneva using Baobab HPC service.

\bibliography{main}

@InProceedings{sohl-dickstein15,
  title = 	 {Deep Unsupervised Learning using Nonequilibrium Thermodynamics},
  author = 	 {Sohl-Dickstein, Jascha and Weiss, Eric and Maheswaranathan, Niru and Ganguli, Surya},
  booktitle = 	 {Proceedings of the 32nd International Conference on Machine Learning},
  pages = 	 {2256--2265},
  year = 	 {2015},
  editor = 	 {Bach, Francis and Blei, David},
  volume = 	 {37},
  series = 	 {Proceedings of Machine Learning Research},
  address = 	 {Lille, France},
  month = 	 {07--09 Jul},
  publisher =    {PMLR}
}

@inproceedings{ddpm,
 author = {Ho, Jonathan and Jain, Ajay and Abbeel, Pieter},
 booktitle = {Advances in Neural Information Processing Systems},
 editor = {H. Larochelle and M. Ranzato and R. Hadsell and M.F. Balcan and H. Lin},
 pages = {6840--6851},
 publisher = {Curran Associates, Inc.},
 title = {Denoising Diffusion Probabilistic Models},
 url = {https://proceedings.neurips.cc/paper_files/paper/2020/file/4c5bcfec8584af0d967f1ab10179ca4b-Paper.pdf},
 volume = {33},
 year = {2020}
}

@inproceedings{
song2021scorebased,
title={Score-Based Generative Modeling through Stochastic Differential Equations},
author={Yang Song and Jascha Sohl-Dickstein and Diederik P Kingma and Abhishek Kumar and Stefano Ermon and Ben Poole},
booktitle={International Conference on Learning Representations},
year={2021},
url={https://openreview.net/forum?id=PxTIG12RRHS}
}

@inproceedings{fid,
 author = {Heusel, Martin and Ramsauer, Hubert and Unterthiner, Thomas and Nessler, Bernhard and Hochreiter, Sepp},
 booktitle = {Advances in Neural Information Processing Systems},
 editor = {I. Guyon and U. Von Luxburg and S. Bengio and H. Wallach and R. Fergus and S. Vishwanathan and R. Garnett},
 pages = {},
 publisher = {Curran Associates, Inc.},
 title = {{GAN}s Trained by a Two Time-Scale Update Rule Converge to a Local {N}ash Equilibrium},
 url = {https://proceedings.neurips.cc/paper_files/paper/2017/file/8a1d694707eb0fefe65871369074926d-Paper.pdf},
 volume = {30},
 year = {2017}
}

@article{Hutchinson, 
author = {Hutchinson, M.F.},
year = {1989}, month = {01},
pages = {1059-1076},
title = {A stochastic estimator of the trace of the influence matrix for {L}aplacian smoothing splines},
volume = {18},
journal = {Communication in Statistics- Simulation and Computation},
doi = {10.1080/03610919008812866} }

@inproceedings{
liu2023flow,
title={Flow Straight and Fast: Learning to Generate and Transfer Data with Rectified Flow},
author={Xingchao Liu and Chengyue Gong and qiang liu},
booktitle={The Eleventh International Conference on Learning Representations },
year={2023},
url={https://openreview.net/forum?id=XVjTT1nw5z}
}

@inproceedings{
bradley2024classifier,
title={Classifier-Free Guidance is a Predictor-Corrector},
author={Arwen Bradley and Preetum Nakkiran},
booktitle={NeurIPS 2024 Workshop on Mathematics of Modern Machine Learning},
year={2024},
url={https://openreview.net/forum?id=2dZswRE2sD}
}

@inproceedings{
chidambaram2024does,
title={What does guidance do? A fine-grained analysis in a simple setting},
author={Muthu Chidambaram and Khashayar Gatmiry and Sitan Chen and Holden Lee and Jianfeng Lu},
booktitle={The Thirty-eighth Annual Conference on Neural Information Processing Systems},
year={2024},
url={https://openreview.net/forum?id=AdS3H8SaPi}
}

@inproceedings{
gao2025reg,
title={{REG}: Rectified Gradient Guidance for Conditional Diffusion Models},
author={Zhengqi Gao and Kaiwen Zha and Tianyuan Zhang and Zihui Xue and Duane S Boning},
booktitle={Forty-second International Conference on Machine Learning},
year={2025},
url={https://openreview.net/forum?id=uNK4ftGdnq}
}

@inproceedings{
lipman2023flow,
title={Flow Matching for Generative Modeling},
author={Yaron Lipman and Ricky T. Q. Chen and Heli Ben-Hamu and Maximilian Nickel and Matthew Le},
booktitle={The Eleventh International Conference on Learning Representations },
year={2023},
url={https://openreview.net/forum?id=PqvMRDCJT9t}
}

@article{karras2024guiding,
  title={Guiding a diffusion model with a bad version of itself},
  author={Karras, Tero and Aittala, Miika and Kynk{\"a}{\"a}nniemi, Tuomas and Lehtinen, Jaakko and Aila, Timo and Laine, Samuli},
  journal={Advances in Neural Information Processing Systems},
  volume={37},
  pages={52996--53021},
  year={2024}
}

@article{dhariwal2021diffusion,
  title={Diffusion models beat {GAN}s on image synthesis},
  author={Dhariwal, Prafulla and Nichol, Alexander},
  journal={Advances in neural information processing systems},
  volume={34},
  pages={8780--8794},
  year={2021}
}

@inproceedings{
ho2022classifier,
title={Classifier-Free Diffusion Guidance},
author={Jonathan Ho and Tim Salimans},
booktitle={NeurIPS 2021 Workshop on Deep Generative Models and Downstream Applications},
year={2021},
url={https://openreview.net/forum?id=qw8AKxfYbI}
}

@article{zheng2023guided,
  title={Guided flows for generative modeling and decision making},
  author={Zheng, Qinqing and Le, Matt and Shaul, Neta and Lipman, Yaron and Grover, Aditya and Chen, Ricky TQ},
  journal={arXiv preprint arXiv:2311.13443},
  year={2023}
}

@article{dao2023flow,
  title={Flow matching in latent space},
  author={Dao, Quan and Phung, Hao and Nguyen, Binh and Tran, Anh},
  journal={arXiv preprint arXiv:2307.08698},
  year={2023}
}

@inproceedings{meng2023distillation,
  title={On distillation of guided diffusion models},
  author={Meng, Chenlin and Rombach, Robin and Gao, Ruiqi and Kingma, Diederik and Ermon, Stefano and Ho, Jonathan and Salimans, Tim},
  booktitle={Proceedings of the IEEE/CVF Conference on Computer Vision and Pattern Recognition},
  pages={14297--14306},
  year={2023}
}

@inproceedings{
kit2025debiasing,
title={Debiasing Guidance for Discrete Diffusion with Sequential Monte Carlo},
author={Lee Cheuk Kit and Paul Jeha and Jes Frellsen and Pietro Lio and Michael Samuel Albergo and Francisco Vargas},
booktitle={Frontiers in Probabilistic Inference: Learning meets Sampling},
year={2025},
url={https://openreview.net/forum?id=vg0dOu3w9g}
}

@inproceedings{
skretafeynman,
title={Feynman-Kac Correctors in Diffusion: Annealing, Guidance, and Product of Experts},
author={Marta Skreta and Tara Akhound-Sadegh and Viktor Ohanesian and Roberto Bondesan and Alan Aspuru-Guzik and Arnaud Doucet and Rob Brekelmans and Alexander Tong and Kirill Neklyudov},
booktitle={Forty-second International Conference on Machine Learning},
year={2025},
url={https://openreview.net/forum?id=Vhc0KrcqWu}
}

@article{chen2018neural,
  title={Neural ordinary differential equations},
  author={Chen, Ricky TQ and Rubanova, Yulia and Bettencourt, Jesse and Duvenaud, David K},
  journal={Advances in neural information processing systems},
  volume={31},
  year={2018}
}

@inproceedings{rainforth2018nesting,
  title={On nesting monte carlo estimators},
  author={Rainforth, Tom and Cornish, Rob and Yang, Hongseok and Warrington, Andrew and Wood, Frank},
  booktitle={International Conference on Machine Learning},
  pages={4267--4276},
  year={2018},
  organization={PMLR}
}

@inproceedings{
zhangenergy,
title={Energy-Weighted Flow Matching for Offline Reinforcement Learning},
author={Shiyuan Zhang and Weitong Zhang and Quanquan Gu},
booktitle={The Thirteenth International Conference on Learning Representations},
year={2025},
url={https://openreview.net/forum?id=HA0oLUvuGI}
}

@inproceedings{radford2021learning,
  title={Learning transferable visual models from natural language supervision},
  author={Radford, Alec and Kim, Jong Wook and Hallacy, Chris and Ramesh, Aditya and Goh, Gabriel and Agarwal, Sandhini and Sastry, Girish and Askell, Amanda and Mishkin, Pamela and Clark, Jack and others},
  booktitle={International conference on machine learning},
  pages={8748--8763},
  year={2021},
  organization={PmLR}
}

@ARTICLE{clipiga2022,
    
AUTHOR={Hentschel, Simon  and Kobs, Konstantin  and Hotho, Andreas },
           
TITLE={CLIP knows image aesthetics},
          
JOURNAL={Frontiers in Artificial Intelligence},
          
VOLUME={Volume 5 - 2022},
  
YEAR={2022},
  
URL={https://www.frontiersin.org/journals/artificial-intelligence/articles/10.3389/frai.2022.976235},
  
DOI={10.3389/frai.2022.976235},
  
ISSN={2624-8212},
}

@inproceedings{zhang2023blind,
  title={Blind image quality assessment via vision-language correspondence: A multitask learning perspective},
  author={Zhang, Weixia and Zhai, Guangtao and Wei, Ying and Yang, Xiaokang and Ma, Kede},
  booktitle={Proceedings of the IEEE/CVF conference on computer vision and pattern recognition},
  pages={14071--14081},
  year={2023}
}

@inproceedings{wang2023exploring,
  title={Exploring clip for assessing the look and feel of images},
  author={Wang, Jianyi and Chan, Kelvin CK and Loy, Chen Change},
  booktitle={Proceedings of the AAAI conference on artificial intelligence},
  volume={37},
  number={2},
  pages={2555--2563},
  year={2023}
}

@article{albergo2025unifying,
  author  = {Michael Albergo and Nicholas M. Boffi and Eric Vanden-Eijnden},
  title   = {Stochastic Interpolants: A Unifying Framework for Flows and Diffusions},
  journal = {Journal of Machine Learning Research},
  year    = {2025},
  volume  = {26},
  number  = {209},
  pages   = {1--80},
  url     = {http://jmlr.org/papers/v26/23-1605.html}
}

@article{jensen2025efficient,
  title={Efficient Distillation of Classifier-Free Guidance using Adapters},
  author={Jensen, Cristian Perez and Sadat, Seyedmorteza},
  journal={Transactions on Machine Learning Research},
  year={2025}
}

@inproceedings{ma2024sit,
  title={Sit: Exploring flow and diffusion-based generative models with scalable interpolant transformers},
  author={Ma, Nanye and Goldstein, Mark and Albergo, Michael S and Boffi, Nicholas M and Vanden-Eijnden, Eric and Xie, Saining},
  booktitle={European Conference on Computer Vision},
  pages={23--40},
  year={2024},
  organization={Springer}
}

@article{stein2023exposing,
  title={Exposing flaws of generative model evaluation metrics and their unfair treatment of diffusion models},
  author={Stein, George and Cresswell, Jesse and Hosseinzadeh, Rasa and Sui, Yi and Ross, Brendan and Villecroze, Valentin and Liu, Zhaoyan and Caterini, Anthony L and Taylor, Eric and Loaiza-Ganem, Gabriel},
  journal={Advances in Neural Information Processing Systems},
  volume={36},
  pages={3732--3784},
  year={2023}
}

@INPROCEEDINGS{imagenet1k,
  author={Deng, Jia and Dong, Wei and Socher, Richard and Li, Li-Jia and Kai Li and Li Fei-Fei},
  booktitle={2009 IEEE Conference on Computer Vision and Pattern Recognition}, 
  title={ImageNet: A large-scale hierarchical image database},
  year={2009},
  volume={},
  number={},
  pages={248-255},
  doi={10.1109/CVPR.2009.5206848}}

@misc{Ma2024SiTRepo,
    author       = {Ma, William and Ryoo, Michael S.},
    title        = {SiT: Exploring Flow Matching for Scalable Text-to-Image Generation - Official Repository},
    howpublished = {\url{https://github.com/willisma/SiT}},
    year         = {2024},
    note         = {GitHub repository}
}

@inproceedings{rombach2022high,
  title={High-resolution image synthesis with latent diffusion models},
  author={Rombach, Robin and Blattmann, Andreas and Lorenz, Dominik and Esser, Patrick and Ommer, Bj{\"o}rn},
  booktitle={Proceedings of the IEEE/CVF conference on computer vision and pattern recognition},
  pages={10684--10695},
  year={2022}
}

@inproceedings{
chungcfg++,
title={{CFG}++: Manifold-constrained Classifier Free Guidance for Diffusion Models},
author={Hyungjin Chung and Jeongsol Kim and Geon Yeong Park and Hyelin Nam and Jong Chul Ye},
booktitle={The Thirteenth International Conference on Learning Representations},
year={2025},
url={https://openreview.net/forum?id=E77uvbOTtp}
}

@article{galashov2025learn,
  title={Learn to guide your diffusion model},
  author={Galashov, Alexandre and Pokle, Ashwini and Doucet, Arnaud and Gretton, Arthur and Delbracio, Mauricio and De Bortoli, Valentin},
  journal={arXiv preprint arXiv:2510.00815},
  year={2025}
}

@book{cohen1988statistical,
  title={Statistical Power Analysis for the Behavioral Sciences},
  author={Cohen, Jacob},
  year={1988},
  edition={2nd},
  publisher={Routledge},
  doi={10.4324/9780203771587},
  address={New York}
}

@article{
gumemorization,
title={On Memorization in Diffusion Models},
author={Xiangming Gu and Chao Du and Tianyu Pang and Chongxuan Li and Min Lin and Ye Wang},
journal={Transactions on Machine Learning Research},
issn={2835-8856},
year={2025},
url={https://openreview.net/forum?id=D3DBqvSDbj},
note={}
}

@inproceedings{karczewski2025diffusion,
  title={Diffusion models as cartoonists: The curious case of high density regions},
  author={Karczewski, Rafa{\l} and Heinonen, Markus and Garg, Vikas},
  booktitle={International Conference on Learning Representations},
  volume={2025},
  pages={12882--12915},
  year={2025}
}

@article{kynkaanniemi2019improved,
  title={Improved precision and recall metric for assessing generative models},
  author={Kynk{\"a}{\"a}nniemi, Tuomas and Karras, Tero and Laine, Samuli and Lehtinen, Jaakko and Aila, Timo},
  journal={Advances in neural information processing systems},
  volume={32},
  year={2019}
}

@inproceedings{naeem2020reliable,
  title={Reliable fidelity and diversity metrics for generative models},
  author={Naeem, Muhammad Ferjad and Oh, Seong Joon and Uh, Youngjung and Choi, Yunjey and Yoo, Jaejun},
  booktitle={International conference on machine learning},
  pages={7176--7185},
  year={2020},
  organization={PMLR}
}

@misc{openai_adm_evaluations,
  author       = {{OpenAI}},
  title        = {{OpenAI's Ablated Diffusion Model (ADM) Evaluation Code and Sample Batches}},
  year         = {2021},
  howpublished = {\url{https://github.com/openai/guided-diffusion/tree/main/evaluations}},
  note         = {Part of the openai/guided-diffusion repository; includes FID, sFID, Precision, Recall, and Inception Score evaluation utilities and precomputed sample batches. Accessed: 2026-05-19}
}

@misc{gao2025regcode,
  author       = {Gao, Zhengqi and Zha, Kaiwen and Zhang, Tianyuan
                  and Xue, Zihui and Boning, Duane S.},
  title        = {{REG}: Rectified Gradient Guidance for Conditional Diffusion Models},
  year         = {2025},
  howpublished = {\url{https://github.com/zhengqigao/REG}},
  note         = {GitHub repository}
}

@misc{skreta2025fkccode,
  author       = {Skreta, Marta and Akhound-Sadegh, Tara and Ohanesian, Viktor
                  and Bondesan, Roberto and Aspuru-Guzik, Al{\'a}n
                  and Doucet, Arnaud and Brekelmans, Rob and Tong, Alexander
                  and Neklyudov, Kirill},
  title        = {Feynman--Kac Correctors in Diffusion: Annealing, Guidance,
                  and Product of Experts},
  year         = {2025},
  howpublished = {\url{https://github.com/martaskrt/fkc-diffusion}},
  note         = {GitHub repository}
}

@inproceedings{guo2017calibration,
  title={On calibration of modern neural networks},
  author={Guo, Chuan and Pleiss, Geoff and Sun, Yu and Weinberger, Kilian Q},
  booktitle={International conference on machine learning},
  pages={1321--1330},
  year={2017},
  organization={PMLR}
}
\bibliographystyle{tmlr}

\newpage
\appendix
\onecolumn

\section{Theoretical Analysis and Proofs}
\label{app:theory}

\subsection{Gradient Equivalence Proof}
\label{app:gradient_proof}

We first establish the population identity for the exact IW-CFM objective $\mathcal{L}_{\text{IW-CFM}}^\star$. We then relate it to the finite-batch SNIS estimator used in practice.

\begin{theorem}
Assume exact conditional and unconditional densities and access to the exact normalizer $Z(y,\omega)$. Then the gradient $\nabla_\phi \mathcal{L}_{\text{IW-CFM}}^\star(\phi, \omega)$, defined using the exact density ratio $w_\omega^\star$ in \cref{eq:exact_importance_weight}, equals the gradient of the flow-matching objective targeting $p_\omega(x|y)$.
\end{theorem}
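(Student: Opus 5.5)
The plan is to show that the two objectives are \emph{equal as functions of $\phi$}, not merely equal in gradient. The gradient identity then follows by differentiating both sides. The whole argument is a change of measure performed on the outer (endpoint) expectation only.

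\textbf{Step 1: fix the comparison objective.} For a fixed $\omega$, I define the conditional flow-matching objective targeting the powered distribution as
\begin{equation*}
\mathcal{L}_{\text{CFM}}^{\omega}(\phi)=\mathbb{E}_{p(y)}\,\mathbb{E}_{p_\omega(x_1|y)}\Big[\mathbb{E}_{p(t),p_t(x|x_1)}\big\|v^\phi(x,t,y,\omega)-u_t(x|x_1)\big\|^2\Big].
\end{equation*}
By the standard result of \citet{lipman2023flow}, this objective has the same $\phi$-gradient as the marginal flow-matching objective for the path induced by $p_\omega(\cdot|y)$. That marginal path is the one whose endpoint law is $p_\omega(x|y)$. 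So it suffices to show that $\mathcal{L}_{\text{IW-CFM}}^\star$ equals $\mathcal{L}_{\text{CFM}}^{\omega}$.

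\textbf{Step 2: change of measure.} Write the inner expectation as
\begin{equation*}
g_\phi(x_1,y)=\mathbb{E}_{p(t),p_t(x|x_1)}\|v^\phi-u_t\|^2 .
\end{equation*}
The key structural point is that the noising kernel $p_t(x|x_1)$ and $p(t)$ depend only on $x_1$, not on which endpoint law $x_1$ was drawn from. Hence $g_\phi$ is the same function in both objectives. Using \cref{eq:exact_importance_weight}, we have $w_\omega^\star(x_1,y)\,p(x_1|y)=p_\omega(x_1|y)$. Therefore
\begin{equation*}
\mathbb{E}_{p(x_1|y)}\big[w_\omega^\star g_\phi\big]=\int p_\omega(x_1|y)\,g_\phi(x_1,y)\,dx_1 .
\end{equation*}
This requires absolute continuity, i.e.\ $p_\omega(\cdot|y)\ll p(\cdot|y)$. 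That condition is immediate because $p_\omega\propto p(x|y)^\omega p(x)^{1-\omega}$ vanishes wherever $p(x|y)$ does. It also uses the exactness of $Z(y,\omega)$, which is exactly where the theorem's assumption enters. Integrating over $p(y)$ then gives $\mathcal{L}_{\text{IW-CFM}}^\star=\mathcal{L}_{\text{CFM}}^{\omega}$.

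\textbf{Step 3: interchange gradient and expectation.} Finally, differentiate under the integral sign. This is the only delicate step. I expect it to be the main obstacle, since it requires a dominating integrable bound on $w_\omega^\star\nabla_\phi\|v^\phi-u_t\|^2$.

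I would assume regularity conditions: $v^\phi$ is $C^1$ in $\phi$ with locally bounded derivatives of at most linear growth in $x$, and $p_\omega(\cdot|y)$ has a finite second moment, which also requires $Z(y,\omega)<\infty$. Under these conditions, dominated convergence justifies exchanging $\nabla_\phi$ with the expectation.

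Since the identity holds at the level of the integrand after reweighting, the gradients coincide pointwise in $\phi$. Combining this with the equivalence of Step 1 completes the proof.
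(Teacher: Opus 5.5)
Your proposal is correct and follows the paper's proof: a change of measure $p_\omega(x_1|y)=w_\omega^\star(x_1,y)\,p(x_1|y)$ applied only to the outer endpoint expectation, followed by the conditional-to-marginal flow-matching gradient identity of \citet{lipman2023flow}. The one difference is that you prove the two objectives are equal as functions of $\phi$, whereas the paper reweights directly inside the gradient and so implicitly exchanges $\nabla_\phi$ with the integral; in your version Step 3 is therefore not needed for the stated claim, and serves only to justify writing the gradient as $\mathbb{E}_{p(y),p(x_1|y)}[w_\omega^\star\nabla_\phi\mathcal{L}_{\text{cFM}}]$, as the paper does.
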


\begin{proof}
The ideal objective targeting the powered distribution is
\begin{equation*}
\mathcal{L}_{\text{Target}}(\phi, \omega) = \int p(y) \int p_\omega(x_1|y) \mathcal{L}_{\text{cFM}}(x_1, \phi) dx_1 dy
\end{equation*}
Using the exact importance-sampling identity $p_\omega(x_1|y) = w_\omega^\star(x_1, y) p(x_1|y)$ gives
\begin{equation*}
\nabla_\phi \mathcal{L}_{\text{Target}} = \int p(y) \int p(x_1|y) w_\omega^\star(x_1, y) \nabla_\phi \mathcal{L}_{\text{cFM}}(x_1, \phi) dx_1 dy
\end{equation*}
\begin{equation*}
\nabla_\phi \mathcal{L}_{\text{Target}} = \mathbb{E}_{p(y), p(x_1|y)} \left[ w_\omega^\star(x_1, y) \nabla_\phi \mathcal{L}_{\text{cFM}}(x_1, \phi) \right]
\end{equation*}
By the flow-matching gradient identity \citep{lipman2023flow}, the conditional objective that regresses to $u_t$ has the same gradient as the marginal objective that regresses to the target vector field. Thus, training on $p(x_1|y)$ with exact weights $w_\omega^\star$ recovers the vector field $v^{\phi}(x, t, y, \omega)$ for $p_\omega$.
\end{proof}

The practical estimator in \cref{alg:iw_cfm_training} substitutes the finite-batch weights $\widehat{w}_{\omega,K}$ from \cref{eq:snis_importance_weight} for $w_\omega^\star$. This SNIS estimator is biased at finite $K$; under the standard importance-sampling regularity conditions, its gradient converges to the population gradient above as $K\to\infty$ and is therefore only asymptotically unbiased.

\subsection{Bias-Variance Analysis of Amortized Estimators}
\label{app:bias_variance}

\subsubsection{Structural Bias in Global Normalization}
In \cref{sec:amortized_powering}, we use nested rather than global normalization because global normalization changes the effective distribution over $(y,\omega)$ pairs. Let $g_\phi(y,\omega)$ denote the expected gradient of the ideal flow-matching objective targeting $p_\omega(x|y)$ for a fixed $(y,\omega)$. Jointly normalizing importance weights across different conditions and powering parameters yields the population gradient
\begin{equation}
\nabla_\phi \mathcal{L}_{\text{Global}}
=
\frac{\mathbb{E}_{p(y)p(\omega)}\left[Z(y,\omega)g_\phi(y,\omega)\right]}
{\mathbb{E}_{p(y)p(\omega)}\left[Z(y,\omega)\right]},
\label{eq:global_bias}
\end{equation}
where
\begin{equation}
Z(y,\omega)
= \int p(x|y)^\omega p(x)^{1-\omega}\,dx
= \exp\!\left((\omega-1)D_\omega\bigl(p(x|y)\Vert p(x)\bigr)\right).
\end{equation}
Here, $D_\omega(P\Vert Q) \coloneqq \frac{1}{\omega-1}\log\int P(x)^\omega Q(x)^{1-\omega}\,dx$ is the R\'enyi divergence of order $\omega$. For $\omega>1$, $D_\omega(P\Vert Q)\geq 0$, so $Z(y,\omega)\geq 1$. When finite, $Z(y,\omega)$ is also non-decreasing in $\omega$ for fixed $y$. Global normalization therefore induces the effective outer distribution
\begin{equation}
q(y,\omega)
= \frac{p(y)p(\omega)Z(y,\omega)}
{\mathbb{E}_{p(y)p(\omega)}[Z(y,\omega)]}.
\end{equation}
This distribution gives more influence to $(y,\omega)$ pairs with larger $Z(y,\omega)$. Nested normalization operates within each $(y,\omega)$ group and preserves the intended outer distribution $p(y)p(\omega)$.

\subsubsection{Statistical Bias-Variance Trade-off in Nested Normalization}
Nested normalization uses $M$ condition--parameter pairs $(y,\omega)$, each with a subbatch of $K$ data samples and finite-batch weights $\widehat{w}_{\omega,K}$. Under standard finite-variance assumptions, the leading variance terms scale as $\mathcal{O}(1/M)$ across $(y,\omega)$ pairs and $\mathcal{O}(1/(MK))$ for within-pair Monte Carlo noise. With fixed $B=MK$, these terms become $\mathcal{O}(K/B)$ and $\mathcal{O}(1/B)$, respectively. Increasing $K$ reduces finite-$K$ SNIS bias but leaves fewer distinct $(y,\omega)$ pairs for estimating the outer expectation. This allocation trade-off is characteristic of nested Monte Carlo estimators \citep{rainforth2018nesting}.

\FloatBarrier
\section{Extended Characterization of $\omega$-Powering}
\label{app:omega_pow}

\subsection{Univariate Toy Example}
\label{app:omega_pow_1d}

\begin{figure}[h]
    \centering
    \begin{overpic}[width=0.75\linewidth]{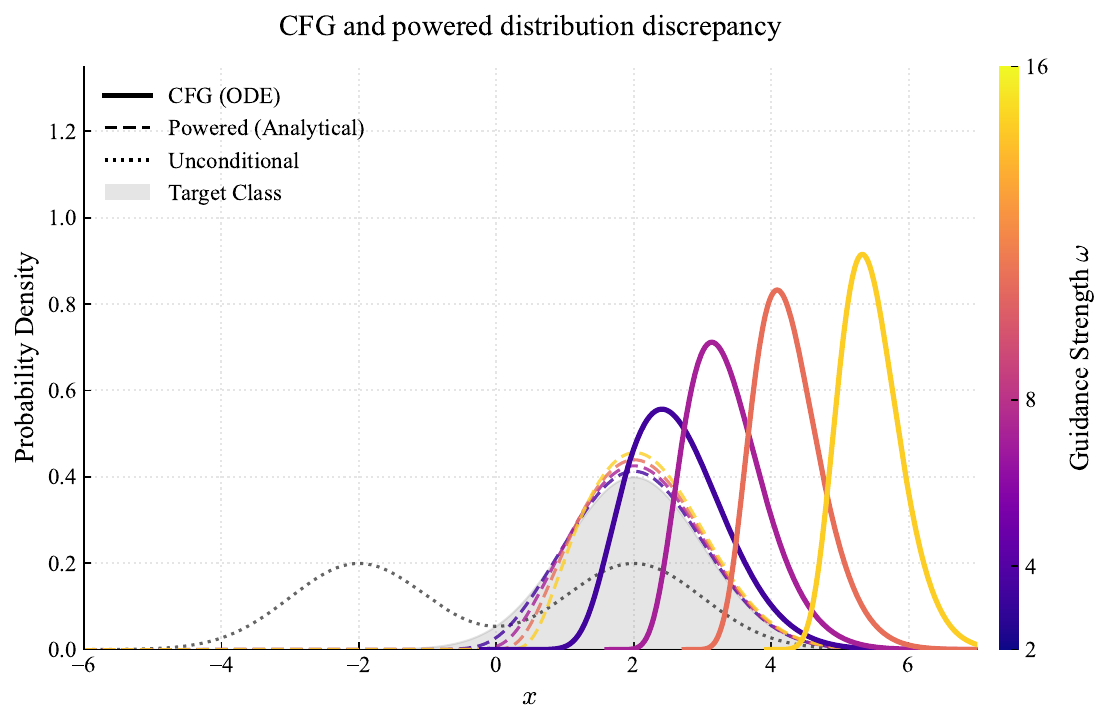}
        \put(96,23){\rotatebox{90}{\colorbox{white}{\scriptsize Matched guidance $\gamma=\omega$}}}
    \end{overpic}
    \caption{\textbf{Comparison of the $\omega$-powered distribution and matched-guidance CFG on a univariate Gaussian mixture.} The $\omega$-powered distribution (dashed) concentrates around the target-class mode. The CFG distribution (solid), evaluated at the numerically matched guidance scale $\gamma=\omega$, shifts probability mass toward low-density regions as the matched values increase.}
    \label{fig:cfg_vs_powered_1d}
\end{figure}

In \cref{fig:cfg_vs_powered_1d}, we illustrate the discrepancy between the $\omega$-powered distribution and CFG using a univariate Gaussian mixture ($\mu_1=-2$, $\mu_2=2$, $\sigma_1=\sigma_2=1$, $\pi_1=\pi_2=0.5$). We define the target class as the component centered at $\mu_2=2$, vary the powering parameter over $\omega\in[2,16]$, and compare each target $p_\omega$ with CFG at the matched guidance scale $\gamma=\omega$.

This example shows two differences:

\begin{itemize}

\item \textbf{Powered target.} Increasing $\omega$ moves mass away from the overlap region near the decision boundary and toward the mode of the target class.

\item \textbf{Matched CFG.} Increasing $\gamma=\omega$ also moves mass away from the decision boundary, but toward a low-density region beyond the target-class mode. Because the Gaussian components have full support, this is a low-density shift rather than movement outside the support.
\end{itemize}

The discrepancy grows with the matched values: the powered distribution concentrates around the target-class mode, while CFG moves farther into the tail. Thus, directly approximating $p_\omega(x|y)$ differs from setting the CFG guidance scale to the same numerical value.

\subsection{Powered-Density Rankings across ImageNet Categories}
\label{app:omega_pow_imagenet}

\Cref{fig:rankings_optimized_4_nperblock_12,fig:rankings_optimized_33_nperblock_12,fig:rankings_optimized_139_nperblock_12,fig:rankings_optimized_414_nperblock_12,fig:rankings_optimized_417_nperblock_12,fig:rankings_optimized_728_nperblock_12,fig:rankings_optimized_757_nperblock_12,fig:rankings_optimized_770_nperblock_12,fig:rankings_optimized_873_nperblock_12} show the top- and bottom-ranked 12 images for nine selected ImageNet categories at different $\omega$ values. These examples illustrate heterogeneous changes in the rankings rather than a universal progression with $\omega$.

In several displayed categories, higher $\omega$ favors more complex scenes or different category-representative features. The pattern is neither uniform nor necessarily monotonic. For example, in the \enquote{recreational vehicle, RV, R.V.} category (\cref{fig:rankings_optimized_757_nperblock_12}), low $\omega$ values favor images with a prominent, centrally located vehicle, whereas high $\omega$ values favor landscapes in which the vehicle occupies only a small part of the scene.

For the \enquote{triumphal arch} category in \cref{fig:rankings_optimized_873_nperblock_12}, the dataset contains arches from several locations, but the top-ranked images at high $\omega$ are different views of the Arch of Constantine next to the Colosseum in Rome.

\begin{figure}[h]
    \centering
    \includegraphics[width=\linewidth]{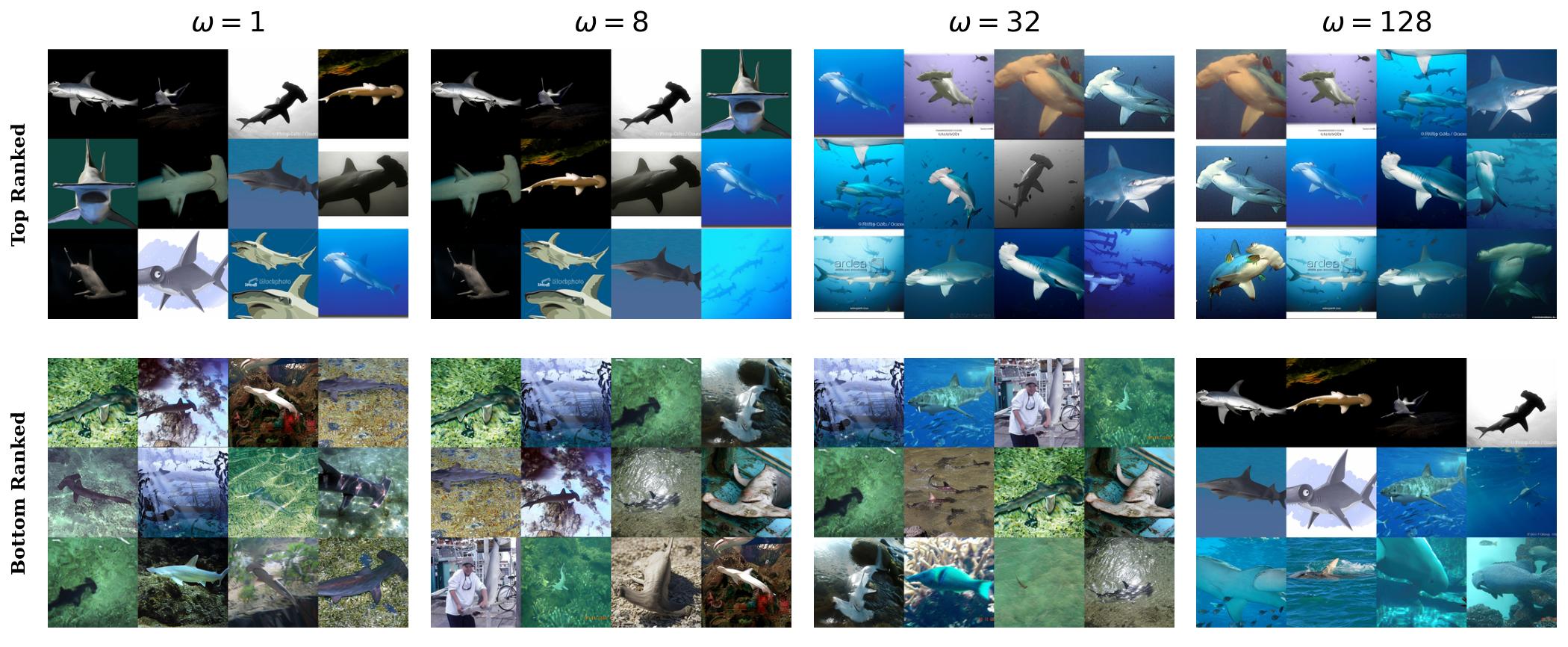}
    \caption{hammerhead shark}
    \label{fig:rankings_optimized_4_nperblock_12}
\end{figure}

\begin{figure}[h]
    \centering
    \includegraphics[width=\linewidth]{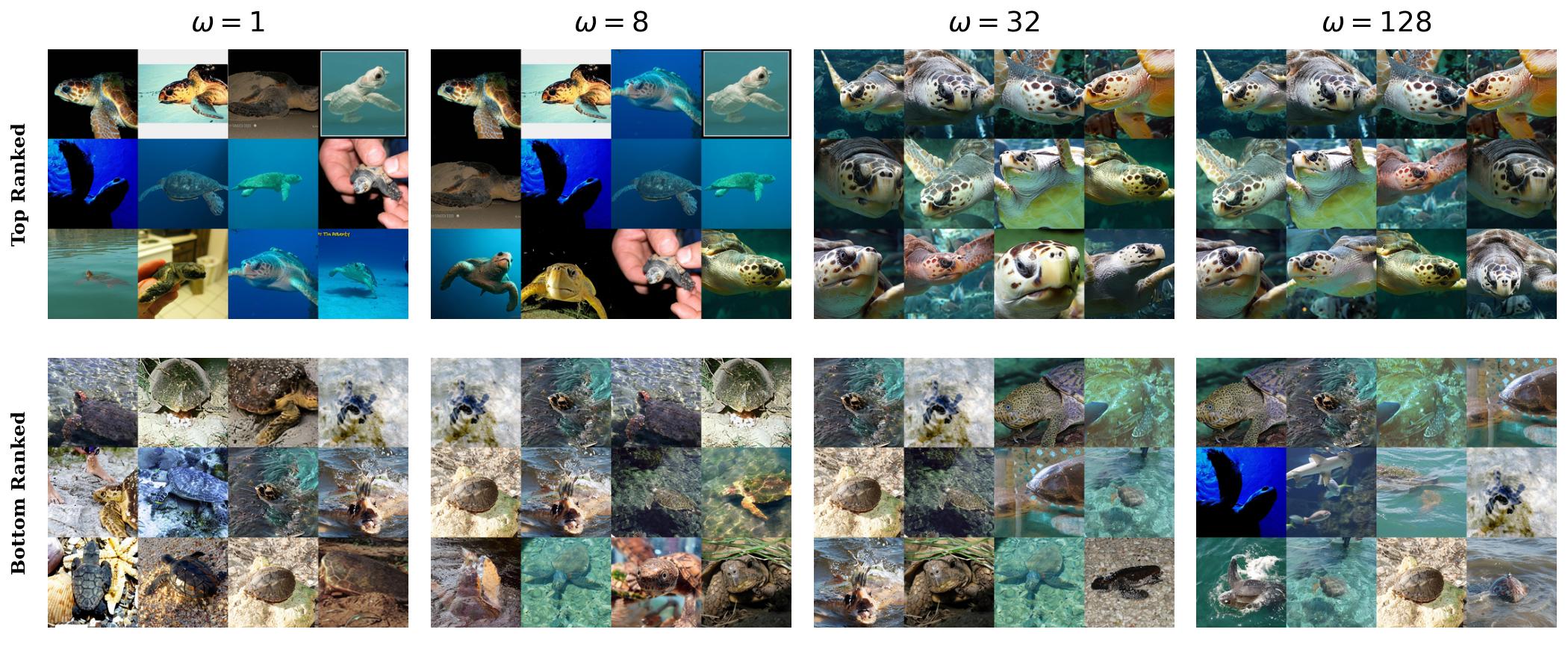}
    \caption{loggerhead turtle (lat. Caretta caretta)}
    \label{fig:rankings_optimized_33_nperblock_12}
\end{figure}

\begin{figure}[h]
    \centering
    \includegraphics[width=\linewidth]{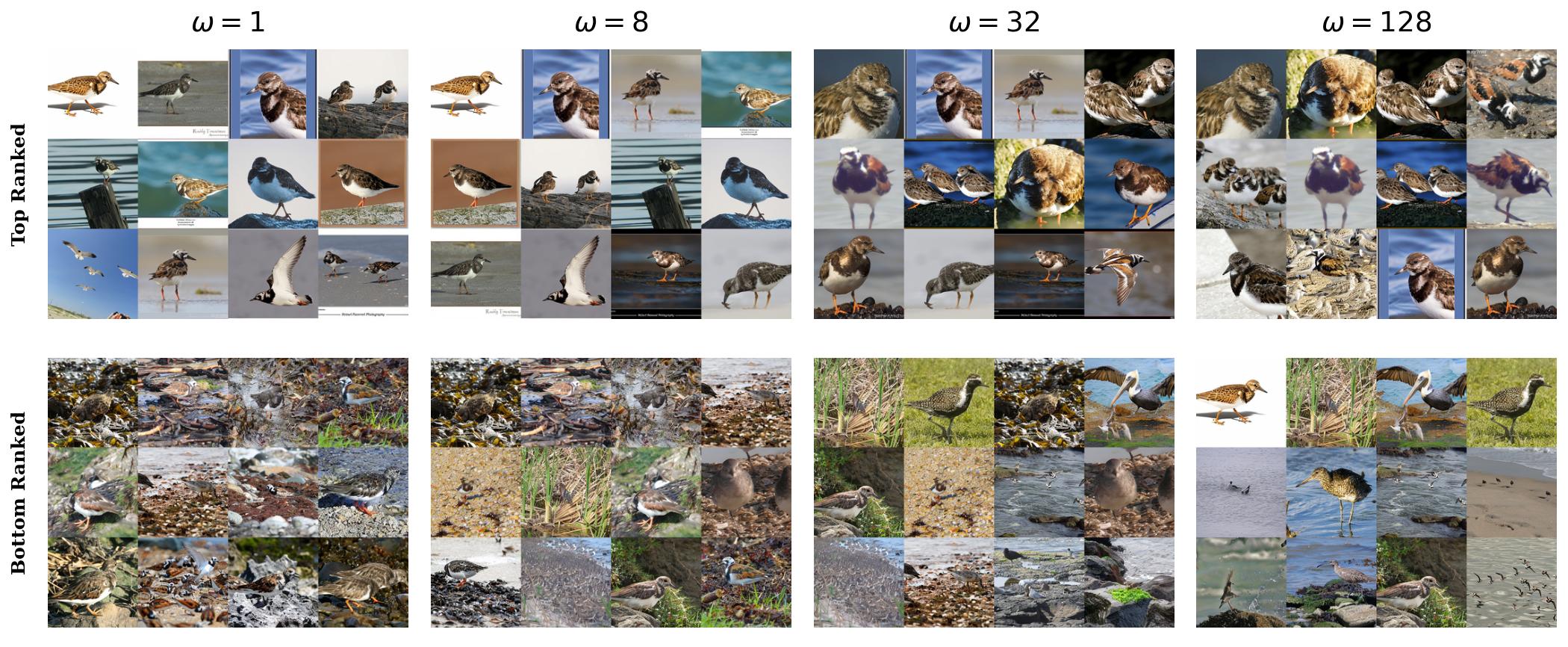}
    \caption{ruddy turnstone (lat. Arenaria interpres)}
    \label{fig:rankings_optimized_139_nperblock_12}
\end{figure}

\begin{figure}[h]
    \centering
    \includegraphics[width=\linewidth]{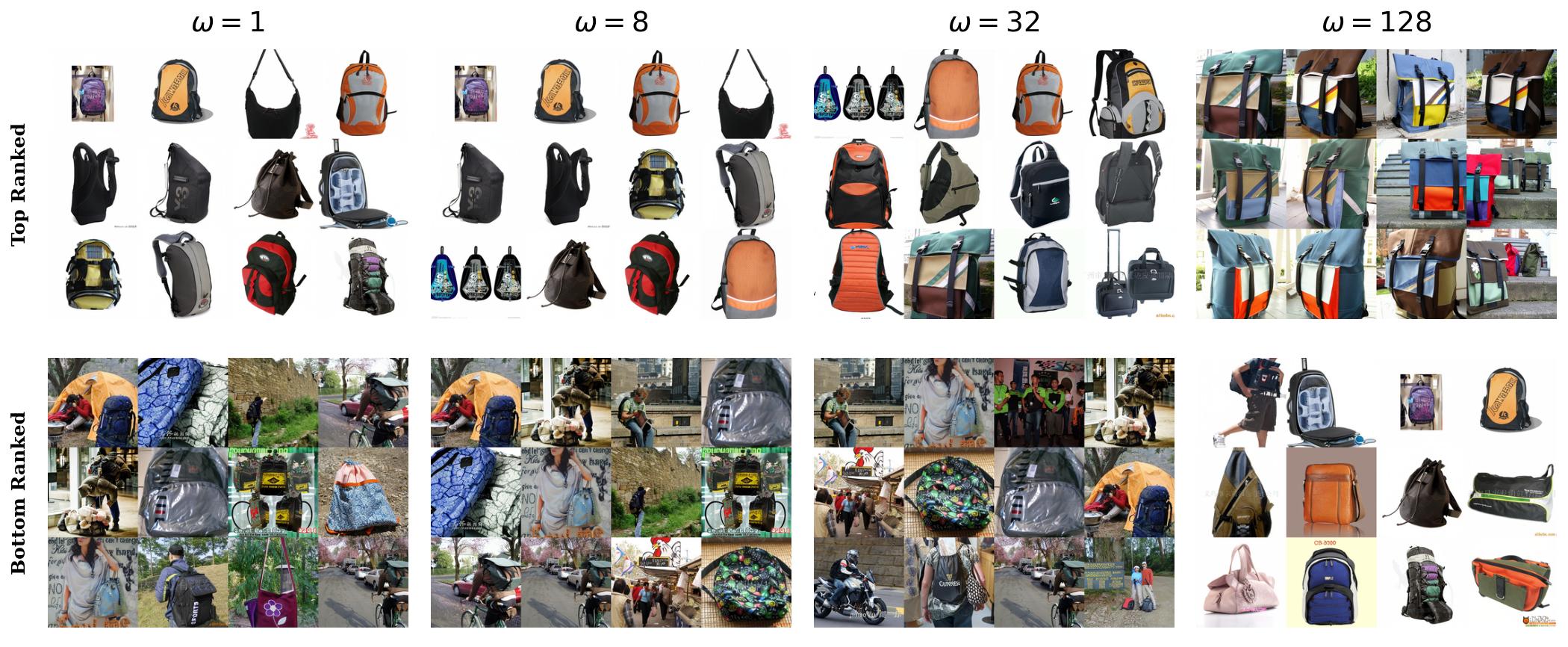}
    \caption{backpack, back pack, knapsack, packsack, rucksack, haversack}
    \label{fig:rankings_optimized_414_nperblock_12}
\end{figure}

\begin{figure}[h]
    \centering
    \includegraphics[width=\linewidth]{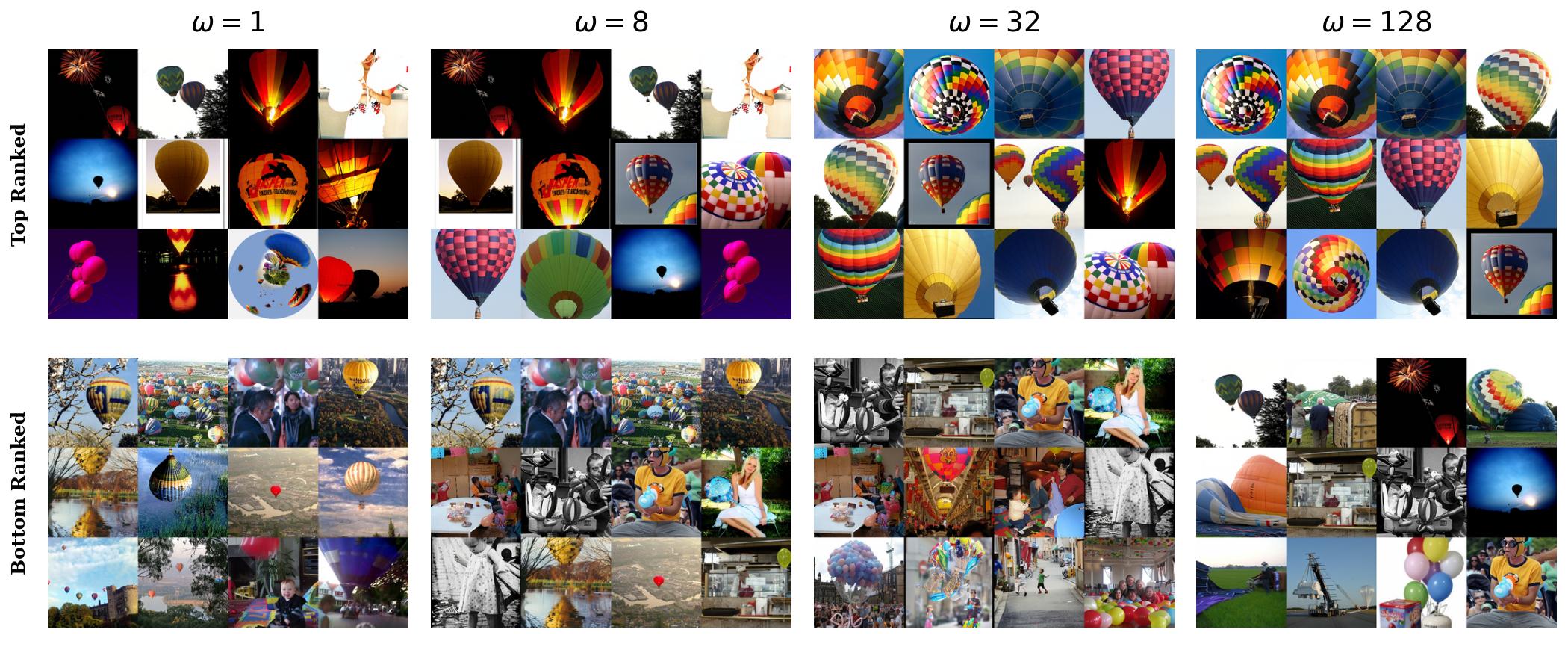}
    \caption{balloon}
    \label{fig:rankings_optimized_417_nperblock_12}
\end{figure}

\begin{figure}[h]
    \centering
    \includegraphics[width=\linewidth]{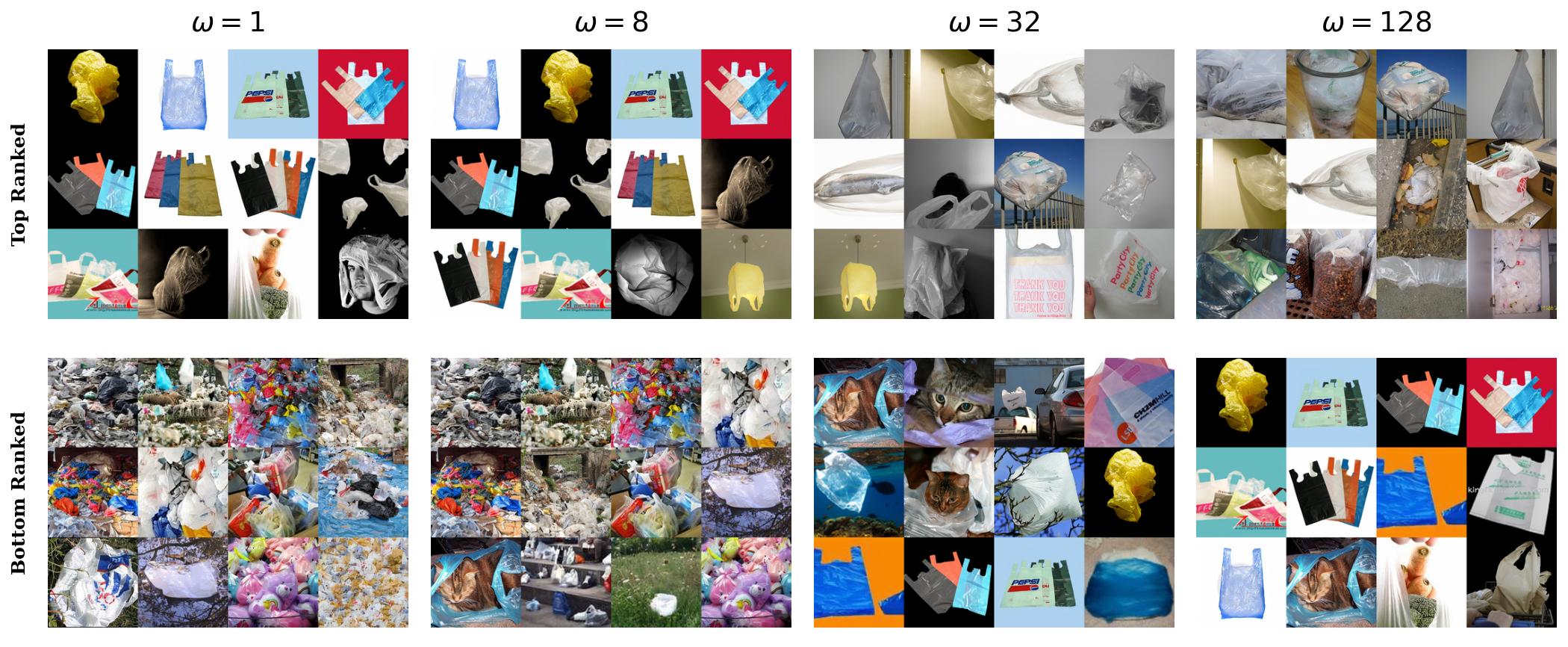}
    \caption{plastic bag}
    \label{fig:rankings_optimized_728_nperblock_12}
\end{figure}

\begin{figure}[h]
    \centering
    \includegraphics[width=\linewidth]{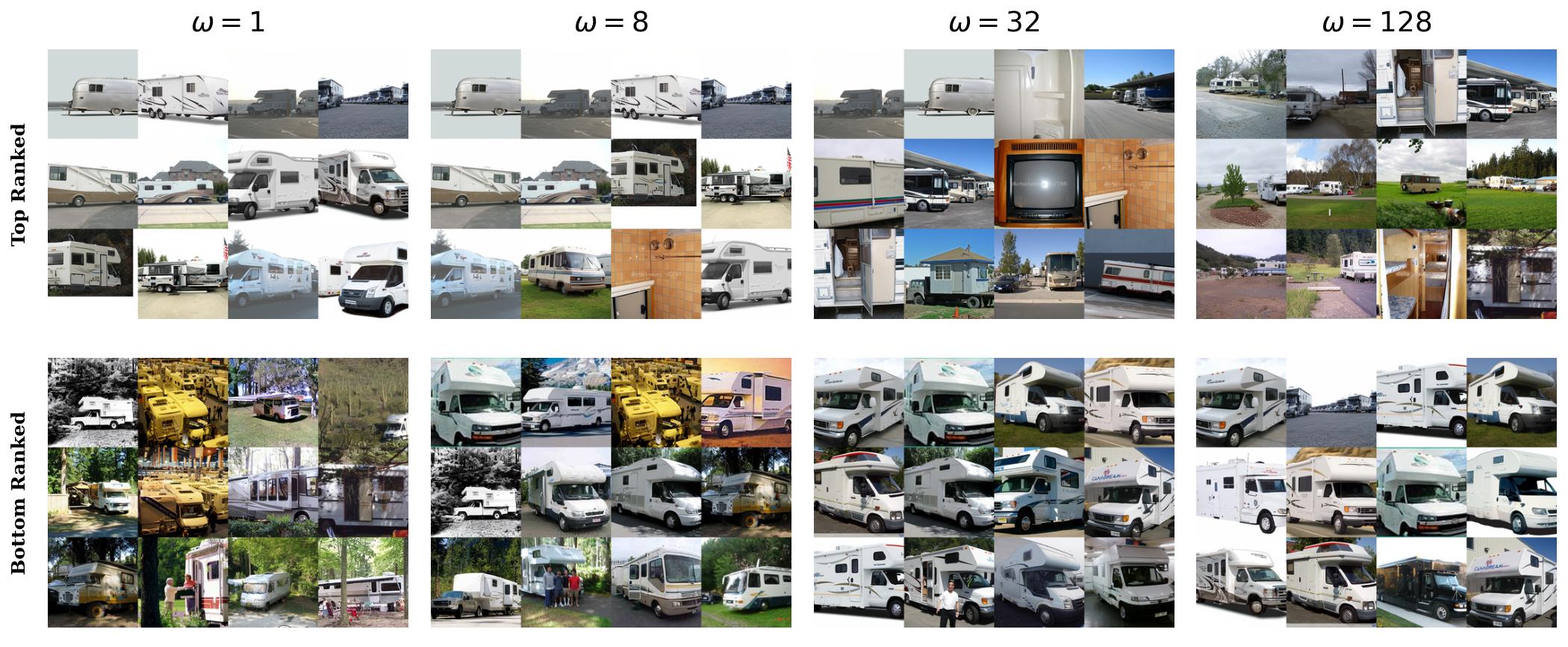}
    \caption{recreational vehicle, RV, R.V.}
    \label{fig:rankings_optimized_757_nperblock_12}
\end{figure}

\begin{figure}[h]
    \centering
    \includegraphics[width=\linewidth]{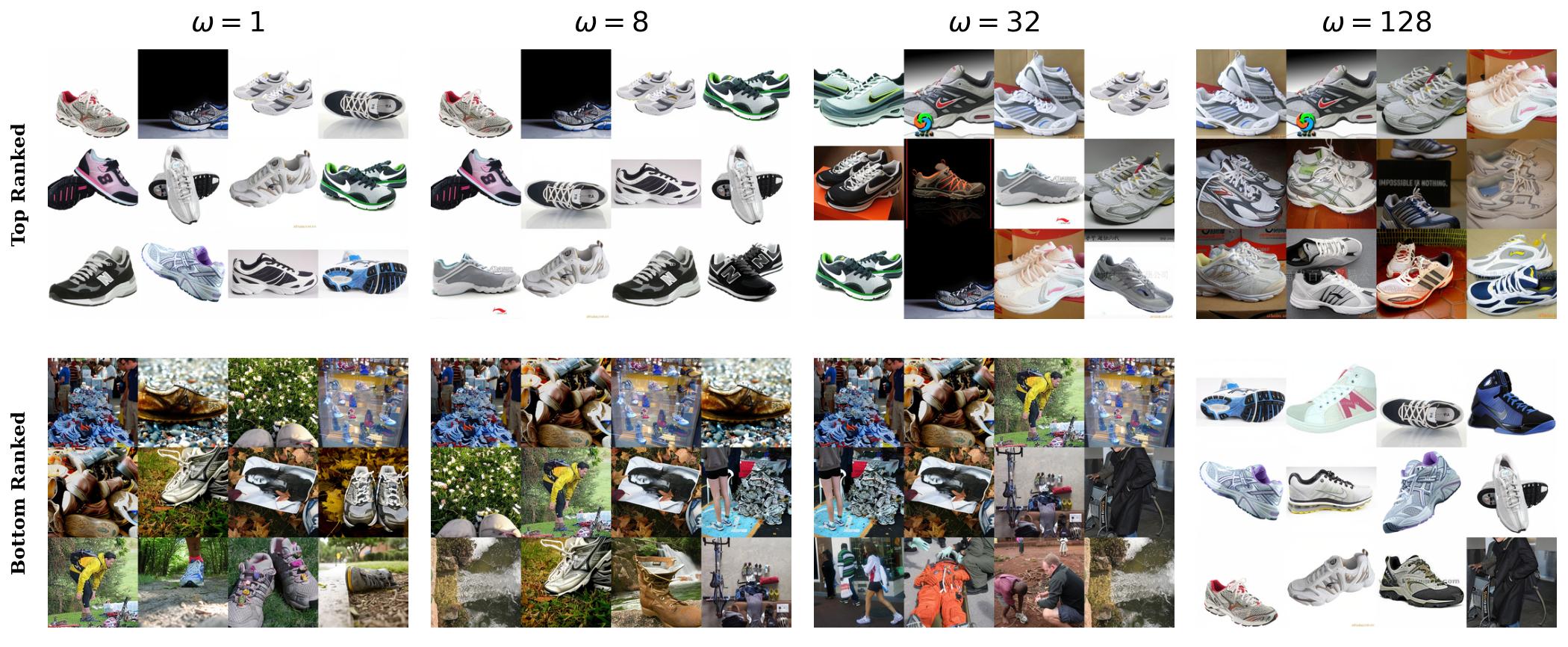}
    \caption{running shoe}
    \label{fig:rankings_optimized_770_nperblock_12}
\end{figure}

\begin{figure}[h]
    \centering
    \includegraphics[width=\linewidth]{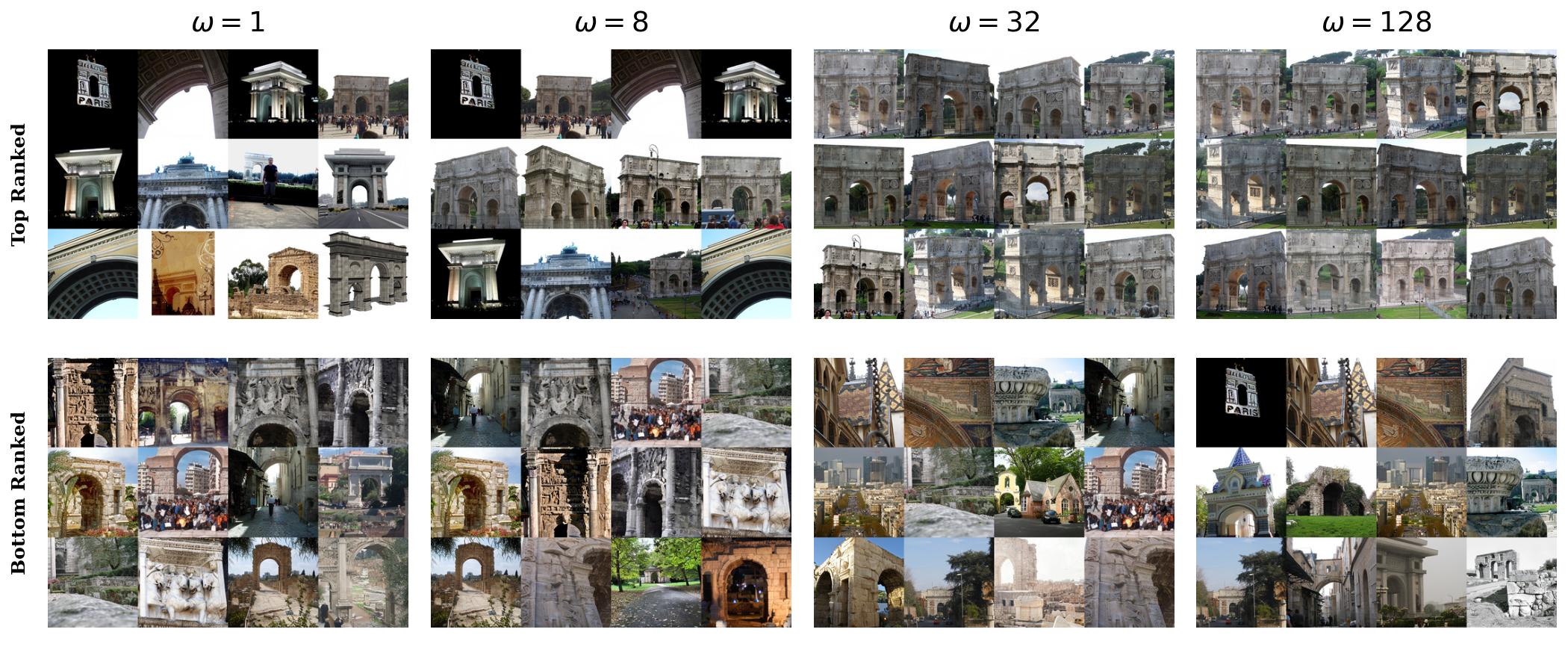}
    \caption{triumphal arch}
    \label{fig:rankings_optimized_873_nperblock_12}
\end{figure}

\FloatBarrier
\section{Implementation Details and Setup}
\label{app:exp_details}

\subsection{Synthetic}
\label{app:exp_details_synthetic}

The synthetic tasks test whether IW-CFM recovers $p_\omega(x|y)$ when the ground-truth density and vector field are available analytically. We use a three-mode Gaussian mixture and a distribution with a tree-like branching structure.

\textbf{Analytical quantities.} The synthetic implementations provide

\begin{itemize}
    \item conditional and unconditional log-densities, $\log p(x|y)$ and $\log p(x)$;
    \item conditional and unconditional ground-truth vector fields, $v(x_t, t, y)$ and $v(x_t, t)$; and
    \item exact conditional and unconditional sampling procedures.
\end{itemize}

\textbf{Distribution Parameters.}

\begin{itemize}
    \item Three-Gaussian mixture: three components with means $\mu \in \{[-1, -0.866], [0, 0.866], [1, -0.866]\}$ and identity covariance.
    \item Fractal distribution: a reimplementation of the tree-like structure in \citet{karras2024guiding}. Each category is a mixture of 1,016 Gaussian components arranged along the branching structure. We use slightly modified parameters.
\end{itemize}

\textbf{Corrector Architecture.} We train a lightweight affine correction to the analytical conditional vector field:
\begin{equation}
v^\phi(x, t, y, \omega) = v(x_t, t, y) \odot \text{scale}(x_t, t, y, \omega) + \text{shift}(x_t, t, y, \omega)
\end{equation}
where a multilayer perceptron maps $(x_t, t, y, \omega)$ to $\text{scale}$ and $\text{shift}$.

\textbf{Training and Hyperparameters.} We train the corrector for 8,192 iterations with a total batch size of 8,192, partitioned into $M=256$ $(y, \omega)$ pairs and $K=32$ samples per pair for nested normalization. For both tasks, $p(\omega)$ is uniform on $[1, 8]$.

\subsection{CLIP Evaluation Suite Details}
\label{app:clip_details}

For each ImageNet class $c$ and attribute contrast $a$, let $p_{c,a}^{+}$ and $p_{c,a}^{-}$ denote the positive and negative prompts listed in \cref{tab:prompt_pairs}. For an image $x$ from class $c$, we define the paired CLIP score by applying a two-way softmax to the temperature-adjusted cosine similarities:
\begin{equation}
    S_{c,a}(x)
    =
    \frac{\exp\!\left(\cos(x,p_{c,a}^{+})/\tau_{\mathrm{CLIP}}\right)}
    {\exp\!\left(\cos(x,p_{c,a}^{+})/\tau_{\mathrm{CLIP}}\right)
    +\exp\!\left(\cos(x,p_{c,a}^{-})/\tau_{\mathrm{CLIP}}\right)}.
\end{equation}
Here, $\cos(x,p)$ is the CLIP cosine similarity between image $x$ and prompt $p$, and $\tau_{\mathrm{CLIP}}=0.01$. During evaluation, \{\texttt{class\_name\}} is replaced with the name of class $c$.

\begin{table}[h]
\centering
\caption{Prompt pairs for the CLIP evaluation suite. During evaluation, \{\texttt{class\_name\}} is replaced with the ImageNet class name. The \texttt{complex/simple} and \texttt{natural/synthetic} pairs are from \citet{wang2023exploring}; the \texttt{realism/CGI} and \texttt{texture} pairs are inspired by its paired-prompt design.}
\label{tab:prompt_pairs}
\vskip 0.15in
\renewcommand{\arraystretch}{1.5}
\begin{tabular}{l p{0.35\linewidth} p{0.35\linewidth}}
\toprule
\textbf{Contrast} & \textbf{Positive Prompt} & \textbf{Negative Prompt} \\
\midrule
\texttt{complex/simple} & Complex photo. & Simple photo. \\
\texttt{natural/synthetic} & Natural photo. & Synthetic photo. \\
\texttt{realism/CGI} & An authentic, real-world photo of a \{\texttt{class\_name\}}, captured by a camera. & A synthetic, computer-generated image of a \{\texttt{class\_name\}}, a digital rendering. \\
\texttt{texture} & A photo of a \{\texttt{class\_name\}} with realistic, detailed surface textures. & A photo of a \{\texttt{class\_name\}} with an unnaturally smooth, waxy, or plastic-like texture. \\
\bottomrule
\end{tabular}
\end{table}

\subsection{Base Model Density Computational Cost}
\label{app:exp_details_logp_cost}

For the main results presented in the paper the augmented ODE for log-probability evaluation (\cref{eq:log_prob_flow}) uses a first-order Euler solver with 250 steps. The computational cost of processing the whole ImageNet dataset is approximately 2100 GPUhours on an RTX 4090. The resulting log-densities in float32 precision take approximately 20MB of storage. Since we use the horizontal flip data augmentation, both the computational and storage cost is doubled.

\subsection{Base Model Density Ablation}
\label{app:exp_details_logp}

We assess the numerical sensitivity of the base-model log-density estimates used for importance weighting.

\textbf{Numerical Consistency and Ranking Stability.} We vary the integration step count on a random subset of $2^{17}$ ImageNet instances. \Cref{fig:logp_steps_heatmap} shows strong linear correlation between lower-step and 250-step log-density estimates. Because the importance weights and ranking analysis depend on instance order, we also measure Spearman rank correlation ($\rho$) against the 250-step estimates.

\begin{figure}[h]
    \centering
    \includegraphics[width=0.75\linewidth]{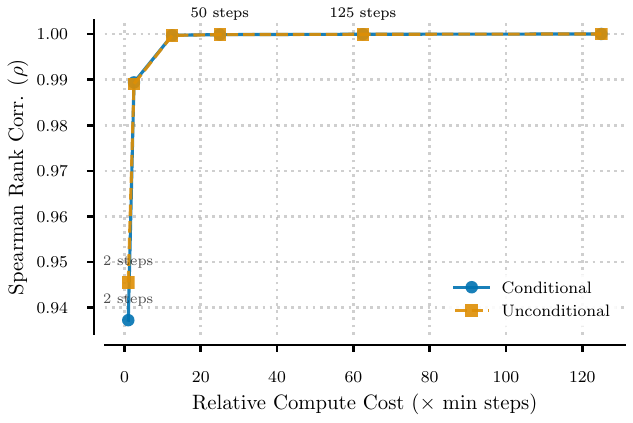}
    \caption{\textbf{Ranking stability of log-density evaluation.} Spearman rank correlation ($\rho$) between rankings at each step count and the 250-step estimates. Both conditional and unconditional rankings reach $\rho>0.99$ by 125 steps.}
    \label{fig:logp_steps_rank_correlation}
\end{figure}

\textbf{Log-density correlation.} \Cref{fig:logp_steps_heatmap} compares lower-step estimates with the 250-step reference. The diagonal concentration shows that absolute values change with numerical integration error while instance ordering remains similar.

\begin{figure}[h]
    \centering
    \begin{subfigure}[b]{\textwidth}
        \centering
        \includegraphics[width=\linewidth]{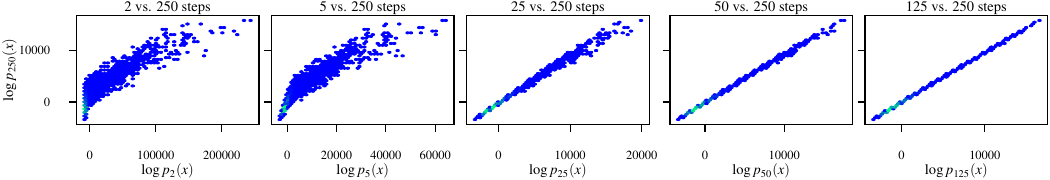}
        \caption{Stepwise joint correlation for \emph{conditional} log-density.} \label{fig:logp_steps_heatmap_cond}
    \end{subfigure}
    \vspace{0.5cm}
    \begin{subfigure}[b]{\textwidth}
        \centering
        \includegraphics[width=\linewidth]{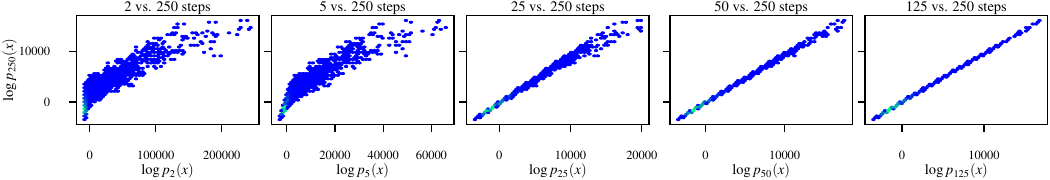}
        \caption{Stepwise joint correlation for \emph{unconditional} log-density.} \label{fig:logp_steps_heatmap_uncond}
    \end{subfigure}
    \caption{\textbf{Numerical convergence of log-densities.} Hexbin plots comparing log-density estimates at $N\in\{2,5,25,50,125\}$ steps with the 250-step estimates.}
    \label{fig:logp_steps_heatmap}
\end{figure}

\subsection{SiT Corrector: Implementation and Training Details}
\label{app:exp_details_imagenet}

To scale our method to ImageNet, we implement the IW-CFM corrector as a parameter-efficient adapter integrated into a frozen SiT-XL/2 backbone. The architecture and parameter-efficient conditioning follow the design of \citet{jensen2025efficient}.

\textbf{Training Strategy and Efficiency.} We precompute the normalized importance weights on an integer grid. For each continuous $\omega\sim\mathcal{U}[4,128]$, the ImageNet data loader rounds to the nearest grid point $\bar m(\omega)$ and samples examples with replacement according to the stored sampling weights at that point. For each sampled instance $i$, we interpolate the unmodified weights at the adjacent grid points to obtain $r_\omega^{\mathrm{reg}}(i|y)$. The flow-matching loss is multiplied by $r_\omega^{\mathrm{reg}}(i|y)/r_{\bar m(\omega)}^{\mathrm{reg}}(i|y)$ when the denominator exceeds $10^{-9}$, and by one otherwise. The unit fallback gives re-injected training examples an unweighted flow-matching loss; these examples would otherwise receive zero weight. It also avoids division by a numerically zero stored weight. We therefore interpolate weights only for sampled instances. This avoids the nested normalization in \cref{alg:iw_cfm_training} during corrector training and is more efficient than uniform sampling followed by self-normalized weighting, for which most sampled examples would have zero loss weight.

\begin{itemize}
    \item We train with a global batch size of 256 distributed across 8 GPUs.
    \item Normalizing independently on each GPU's local subbatch would reduce the normalization-group size, lowering the effective sample size (ESS) and increasing variance.
    \item The precomputed proposal and correction weights are shared across workers.
\end{itemize}

\textbf{Optimizer and Schedule.} We use Adam with $\beta_1=0.5$, $\beta_2=0.9$, and a base learning rate of $1.0 \times 10^{-3}$. We train for 45,000 iterations with a 2,000-iteration linear warmup followed by cosine decay.

\textbf{Evaluation Setup.} We sample with a Heun solver using 250 function evaluations (125 steps). At each $\omega$, we generate 50,000 instances (50 per ImageNet class).

The computational cost of training the corrector is approximately 96 GPUhours on a RTX A6000. In \cref{tab:corrector_overhead} we report the overhead of the corrector during inference on a RTX 4090. For each of the setups we run 10 independent measurements, each preceded by proper warming up. We report the comparison on a single forward run, on the full sampling with a 250-NFE Heun sampler, as well as, sampling with decoding to image space. All experiments are run in both batch size 1 and 24 setups, in eager and compiled mode. While the overhead in batch size 1 eager mode is noticeable (more than 20\%) the difference is decreasing significantly (to less than 1\% in certain setups) with batch size 24 and/or compiled mode.

\begin{table}[t]
\centering
\caption{Corrector runtime overhead relative to base SiT for 250-NFE Heun sampling. Central values are computed from the ratio of mean latencies; uncertainties are delta-method standard errors assuming independent base-model and corrector timing samples. The VAE decoder is always compiled.}
\label{tab:corrector_overhead}
\begin{tabular}{rlrrr}
\toprule
Batch & Mode & Forward (\%) & Sampling (\%) & Sampling + VAE (\%) \\
\midrule
1 & Eager & $+21.41 \pm 0.33$ & $+21.49 \pm 0.12$ & $+18.38 \pm 3.03$ \\
1 & Compiled & $+1.98 \pm 0.03$ & $+2.91 \pm 0.02$ & $+2.88 \pm 0.00$ \\
\midrule
24 & Eager & $+0.58 \pm 0.07$ & $+0.38 \pm 0.47$ & $+0.84 \pm 0.40$ \\
24 & Compiled & $+0.12 \pm 0.11$ & $+0.36 \pm 0.50$ & $+0.18 \pm 0.12$ \\
\bottomrule
\end{tabular}
\end{table}

\subsubsection{ImageNet Weight Hyperparameters}
\label{app:distribution_tempering}

We use $T=1.2$, a per-class clipping fraction $\tau=0.01$ (1\%), a zero-weight re-injection fraction $\rho=0.05$, and a loss-correction threshold of $10^{-9}$. Re-injection is applied separately within each class that has exact-zero stored weights: nonzero sampling weights are scaled by $1-\rho$, and the reserved mass is distributed uniformly only among that class's zero-weight entries. Classes without exact-zero entries are left unchanged. This modification is used only for training-data proposal sampling; interpolation and loss correction use the unmodified weights from \cref{eq:regularized_integer_target,eq:interpolated_target}.

We define the clipping operator precisely. For fixed $m$ and $y$, write the ordered log-weights as $a_{m,(1)}^y\leq\cdots\leq a_{m,(N_y)}^y$, let $k_y=\lceil \tau N_y\rceil$, and let $c_{m,y}=a_{m,(N_y-k_y)}^y$ be the largest log-weight below the clipped group. Then
\begin{equation}
\operatorname{Clip}_{\tau,y}(a_m(i,y))=\min(a_m(i,y),c_{m,y}).
\label{eq:clip_operator}
\end{equation}
Thus, the largest $k_y$ log-weights are capped at $c_{m,y}$ rather than removed. For the degenerate case $k_y=N_y$, we set $c_{m,y}=a_{m,(1)}^y$.

In exact arithmetic, the softmax probabilities are strictly positive. Stored float32 weights can nevertheless become exactly zero when exponentiation and normalization underflow. As illustrated in \cref{fig:non_zero_weights}, the number of weights that remain nonzero is sensitive to $T$ and $\tau$; the selected configuration yields approximately 30,000 numerically active instances per $\omega$ across ImageNet. Here and in the filtered-reference evaluation, \emph{active} means that the stored float32 weight is strictly greater than zero. Clipping itself does not remove instances.

\begin{figure}[t]
    \centering
    \includegraphics[width=0.75\linewidth]{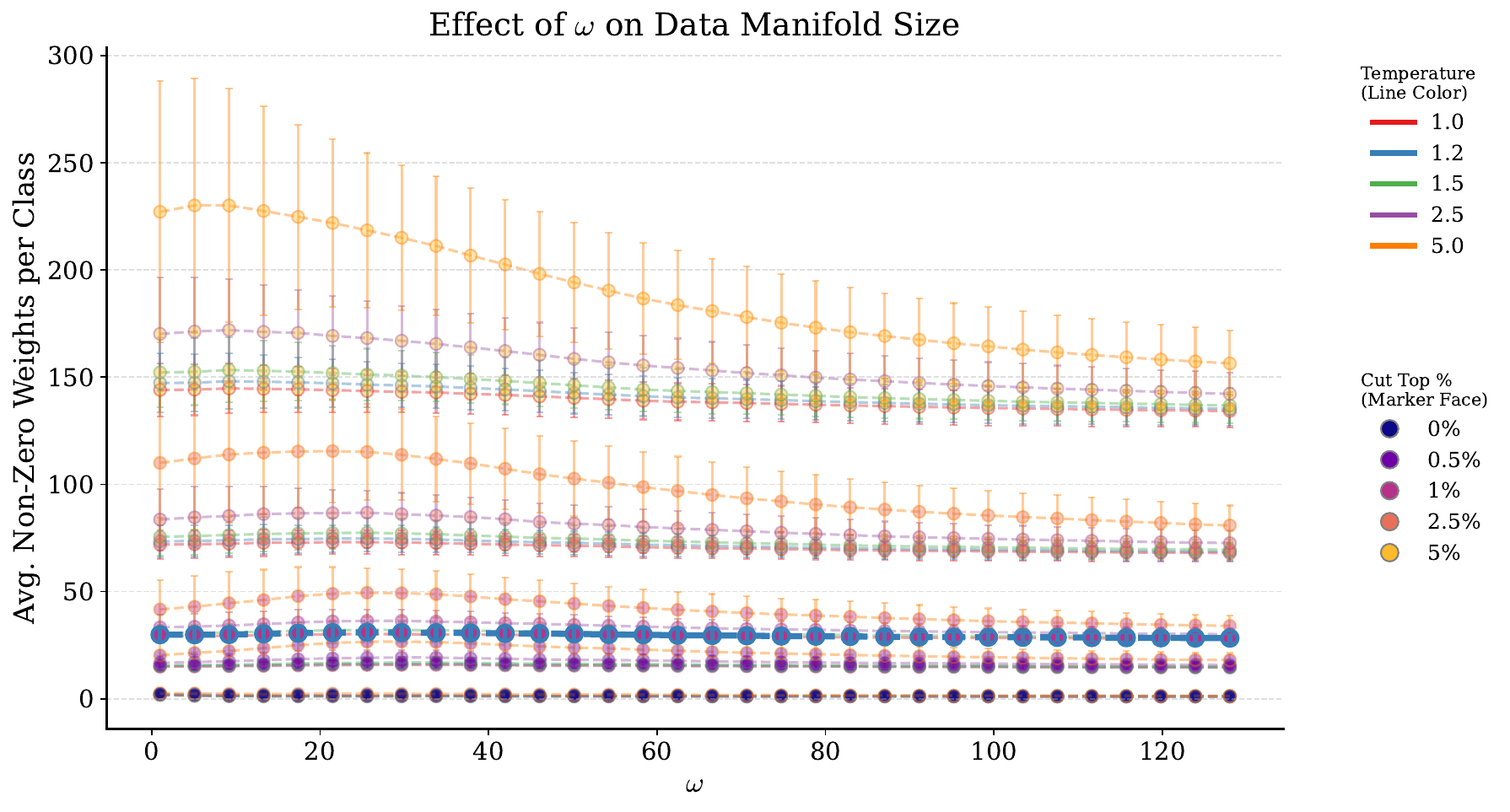}
    \caption{\textbf{Effect of weight regularization.} Average number of numerically nonzero per-class weights after finite-precision normalization as a function of $\omega$, for different temperatures $T$ and clipping fractions $\tau$. The highlighted curve is the configuration used in our experiments ($T=1.2$, $\tau=0.01$). Error bars show the standard deviation across the 1,000 ImageNet categories. Horizontal-flip augmentation doubles the number of available instances in practice.}
    \label{fig:non_zero_weights}
\end{figure}

\subsection{Details of the ablation experiments}

\subsubsection{Number of steps in log-densities}
\label{app:ablation_num_steps}

In \cref{tab:imagenet100_categories} we report the 100 ImageNet categories used in the number of steps in log-densities ablation experiment, grouped into thematic sections.

\begin{table*}[t]
\centering
\caption{The 100 ImageNet categories used in the number of steps in log-densities ablation experiment.
Numbers in parentheses denote the ImageNet class IDs.}
\label{tab:imagenet100_categories}
\small
\setlength{\tabcolsep}{5pt}
\renewcommand{\arraystretch}{1.05}

\begin{tabularx}{\textwidth}{YYYYY}
\toprule

\multicolumn{5}{l}{\textbf{Animals (29 categories)}} \\
great white shark~(2) &
bald eagle~(22) &
axolotl~(29) &
loggerhead turtle~(33) &
Komodo dragon~(48) \\
triceratops~(51) &
scorpion~(71) &
peacock~(84) &
platypus~(103) &
jellyfish~(107) \\
chambered nautilus~(117) &
hermit crab~(125) &
flamingo~(130) &
king penguin~(145) &
killer whale~(148) \\
Labrador retriever~(208) &
tabby cat~(281) &
tiger~(292) &
bee~(309) &
dragonfly~(319) \\
monarch butterfly~(323) &
starfish~(327) &
zebra~(340) &
hippopotamus~(344) &
Arabian camel~(354) \\
three-toed sloth~(364) &
chimpanzee~(367) &
African elephant~(386) &
pufferfish~(397) &
\\[1pt]

\multicolumn{5}{l}{\textbf{Food, plants, and natural scenes (12 categories)}} \\
ice cream~(928) &
cheeseburger~(933) &
broccoli~(937) &
strawberry~(949) &
banana~(954) \\
pizza~(963) &
rapeseed~(984) &
daisy~(985) &
alp~(970) &
coral reef~(973) \\
seashore~(978) &
volcano~(980) &
&
&
\\[1pt]

\multicolumn{5}{l}{\textbf{Vehicles and transport (12 categories)}} \\
airliner~(404) &
ambulance~(407) &
bullet train~(466) &
canoe~(472) &
container ship~(510) \\
fire engine~(555) &
mountain bike~(671) &
pickup truck~(717) &
school bus~(779) &
sports car~(817) \\
submarine~(833) &
tractor~(866) &
&
&
\\[1pt]

\multicolumn{5}{l}{\textbf{Music, sports, and leisure (10 categories)}} \\
acoustic guitar~(402) &
cello~(486) &
French horn~(566) &
grand piano~(579) &
saxophone~(776) \\
barbell~(422) &
basketball~(430) &
golf ball~(574) &
parachute~(701) &
soccer ball~(805) \\
[1pt]

\multicolumn{5}{l}{\textbf{Technology, tools, and scientific objects (12 categories)}} \\
binoculars~(447) &
chain saw~(491) &
computer keyboard~(508) &
desktop computer~(527) &
hammer~(587) \\
joystick~(613) &
laptop~(620) &
oscilloscope~(688) &
power drill~(740) &
projector~(745) \\
reflex camera~(759) &
stethoscope~(823) &
&
&
\\[1pt]

\multicolumn{5}{l}{\textbf{Household, clothing, and personal objects (12 categories)}} \\
backpack~(414) &
bookcase~(453) &
coffee mug~(504) &
jeans~(608) &
kimono~(614) \\
microwave~(651) &
refrigerator~(760) &
running shoe~(770) &
umbrella~(879) &
vacuum cleaner~(882) \\
wallet~(893) &
washing machine~(897) &
&
&
\\[1pt]

\multicolumn{5}{l}{\textbf{Architecture and infrastructure (10 categories)}} \\
lighthouse~(437) &
castle~(483) &
church~(497) &
dam~(525) &
greenhouse~(580) \\
mosque~(668) &
palace~(698) &
suspension bridge~(839) &
street sign~(919) &
traffic light~(920) \\
[1pt]

\multicolumn{5}{l}{\textbf{People and human activities (3 categories)}} \\
baseball player~(981) &
groom / bridegroom~(982) &
scuba diver~(983) &
&
\\

\bottomrule
\end{tabularx}
\end{table*}

\FloatBarrier
\section{Extended Empirical Evidence}
\label{app:results_extended}

\subsection{Absolute CLIP Contrasts for Real-Data Rankings}
\label{app:clip_rank_absolute}

The baseline-relative trajectories in \cref{fig:clip-rank-shifted} are obtained by centering each classwise contrast at its value for $\omega=1$. This centering isolates changes with $\omega$, but suppresses the sign and magnitude of the baseline association. \Cref{fig:clip-rank-absolute} reports the corresponding uncentered contrasts, with $\Delta d_{c,a}(\omega)=d_{c,a}(\omega)-d_{c,a}(1)$ linking the two representations.

\begin{figure}[ht]
  \centering

  \begin{subfigure}[t]{0.24\linewidth}
    \centering
    \includegraphics[width=\linewidth]{%
      \cliprankdir/clip_real_rank_trajectory_density_wide_complex_simple.pdf%
    }
    \caption{Complex/simple}
    \label{fig:clip-rank-absolute-complex}
  \end{subfigure}%
  \hfill
  \begin{subfigure}[t]{0.24\linewidth}
    \centering
    \includegraphics[width=\linewidth]{%
      \cliprankdir/clip_real_rank_trajectory_density_wide_natural_synthetic.pdf%
    }
    \caption{Natural/synthetic}
    \label{fig:clip-rank-absolute-natural}
  \end{subfigure}%
  \hfill
  \begin{subfigure}[t]{0.24\linewidth}
    \centering
    \includegraphics[width=\linewidth]{%
      \cliprankdir/clip_real_rank_trajectory_density_wide_realism_vs_cgi.pdf%
    }
    \caption{Realism/CGI}
    \label{fig:clip-rank-absolute-realism}
  \end{subfigure}%
  \hfill
  \begin{subfigure}[t]{0.24\linewidth}
    \centering
    \includegraphics[width=\linewidth]{%
      \cliprankdir/clip_real_rank_trajectory_density_wide_texture.pdf%
    }
    \caption{Texture}
    \label{fig:clip-rank-absolute-texture}
  \end{subfigure}

  \caption{Absolute classwise CLIP contrasts underlying the baseline-relative trajectories in \cref{fig:clip-rank-shifted}. Positive $d_{c,a}(\omega)$ indicates that the top-ranked images align more strongly with the positive prompt than the bottom-ranked images. The density fields show the classwise trajectories, and the orange curves show their across-class means.}
  \label{fig:clip-rank-absolute}
\end{figure}

\FloatBarrier

\subsection{CLIP Comparison of IW-CFM and CFG}
\label{app:clip_generated_comparison}

As an ancillary method comparison, we evaluate IW-CFM and CFG generations using the paired CLIP scores defined in \cref{sec:clip_suite,app:clip_details}, with inference-time guidance held at $\gamma=4$ for both methods.

For each ImageNet category and prompt pair, we select the target $\omega$ that maximizes the mean paired CLIP score of the IW-CFM samples over the evaluated values; we term this the \emph{score-optimal $\omega$}. \Cref{fig:max_gain_clip} reports the resulting oracle comparison. In this in-sample, post-selected comparison, the selected IW-CFM score exceeds the CFG score in 70.6\%, 69.7\%, 68.2\%, and 69.5\% of categories for complex/simple, natural/synthetic, realism/CGI, and texture, respectively---approximately 70\% for each prompt pair.

\begin{figure}[t]
    \centering
    \includegraphics[width=\linewidth]{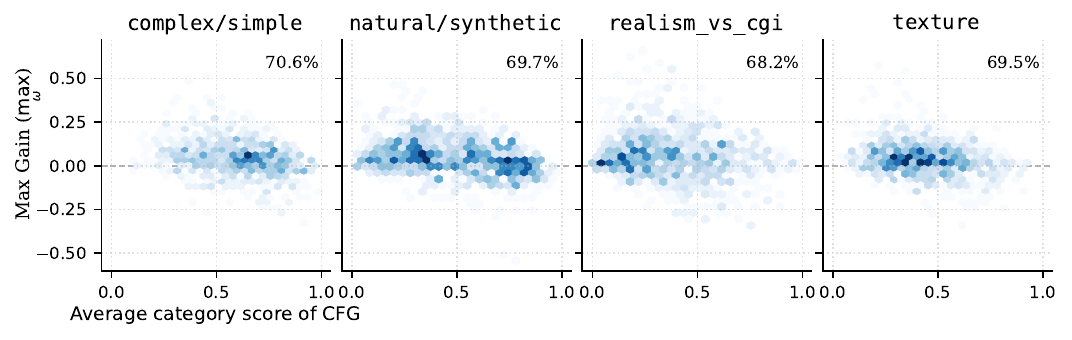}
    \caption{In-sample oracle upper envelope for the additional IW-CFM--CFG comparison. Each panel shows ImageNet categories according to the paired CLIP score of CFG at $\gamma=4$ (x-axis) and the maximum score difference attained by IW-CFM at the same inference-time guidance after selecting its target $\omega$ on the evaluation samples (y-axis). Color intensity denotes the number of categories in each hexagonal bin, and the percentage is the fraction with a positive post-selection difference. Because selection and evaluation use the same samples, the figure is neither a held-out performance estimate nor evidence of statistical significance.}
    \label{fig:max_gain_clip}
\end{figure}

\FloatBarrier

\subsection{Additional Synthetic Results}
\label{app:results_extended_synthetic}

\subsubsection{Additional Synthetic Results: IW-CFM}
\label{app:results_extended_synthetic_iwcfm}

\Cref{fig:cfg_vs_powered_3gaussians,fig:cfg_vs_powered_fractal} provide an expanded qualitative comparison between the sampling behavior of standard CFG and IW-CFM on synthetic datasets where the analytical target density is known.

\begin{figure}[h]
    \centering
    \begin{subfigure}[t]{0.48\linewidth}
        \centering
        \includegraphics[width=\linewidth]{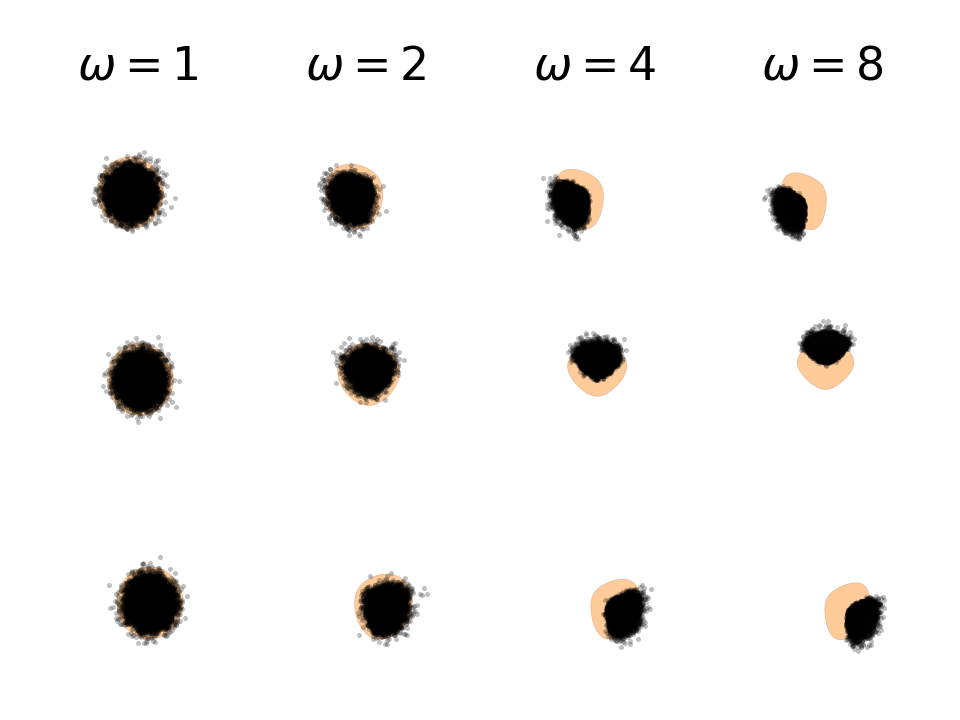}
        \caption{99.7\% probability-mass contours of the $\omega$-powered distribution for the three classes of the three-Gaussian distribution, with CFG samples overlaid at the matched guidance scale $\gamma=\omega$.}
        \label{fig:cfg_3gaussians}
    \end{subfigure}
    \hfill 
    \begin{subfigure}[t]{0.48\linewidth}
        \centering
        \includegraphics[width=\linewidth]{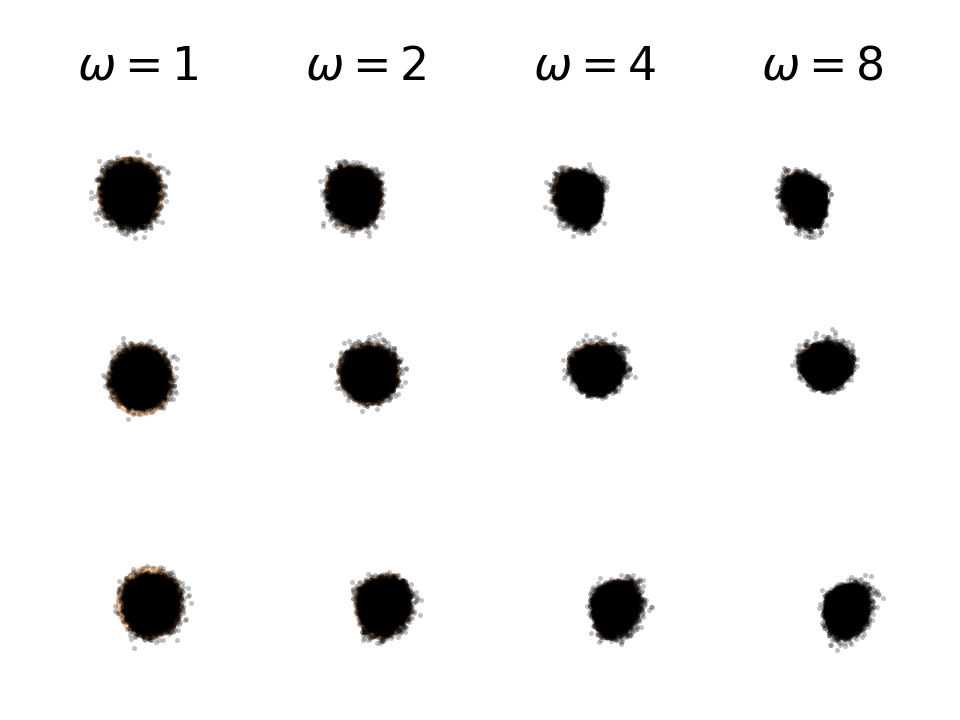}
        \caption{99.7\% probability-mass contours of the $\omega$-powered distribution for the three classes of the three-Gaussian distribution, with directly sampled IW-CFM realizations overlaid at the corresponding $\omega$; no additional CFG is applied.}
        \label{fig:powered_3gaussians}
    \end{subfigure}
    \caption{\textbf{Three-Gaussian comparison.} CFG with matched guidance $\gamma=\omega$ (left) departs from the analytical target contours, whereas IW-CFM targeting $\omega$ (right) remains close to them.}
    \label{fig:cfg_vs_powered_3gaussians}
\end{figure}

\begin{figure}[h]
    \centering
    \begin{subfigure}[t]{0.48\linewidth}
        \centering
        \includegraphics[width=\linewidth]{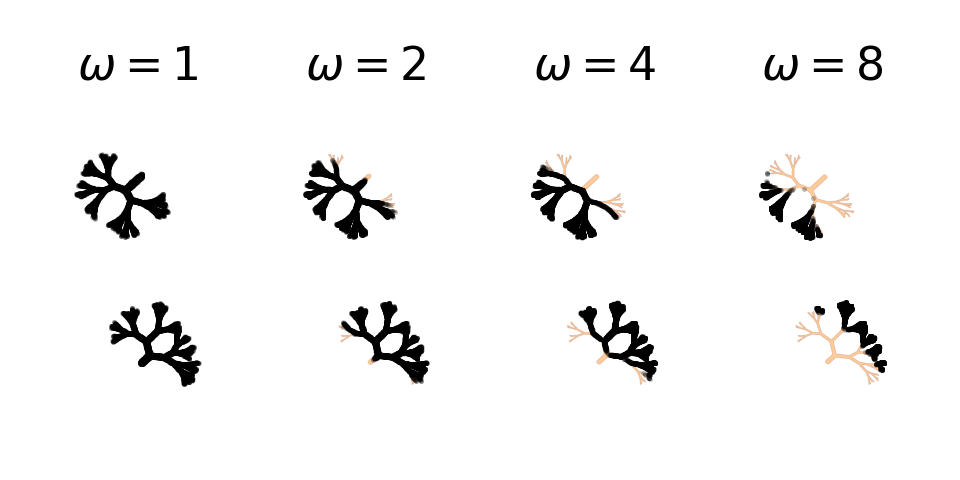}
        \caption{99.7\% probability-mass contours of the $\omega$-powered distribution for the two classes of the fractal distribution, with CFG samples overlaid at the matched guidance scale $\gamma=\omega$.}
        \label{fig:cfg_fractal}
    \end{subfigure}
    \hfill 
    \begin{subfigure}[t]{0.48\linewidth}
        \centering
        \includegraphics[width=\linewidth]{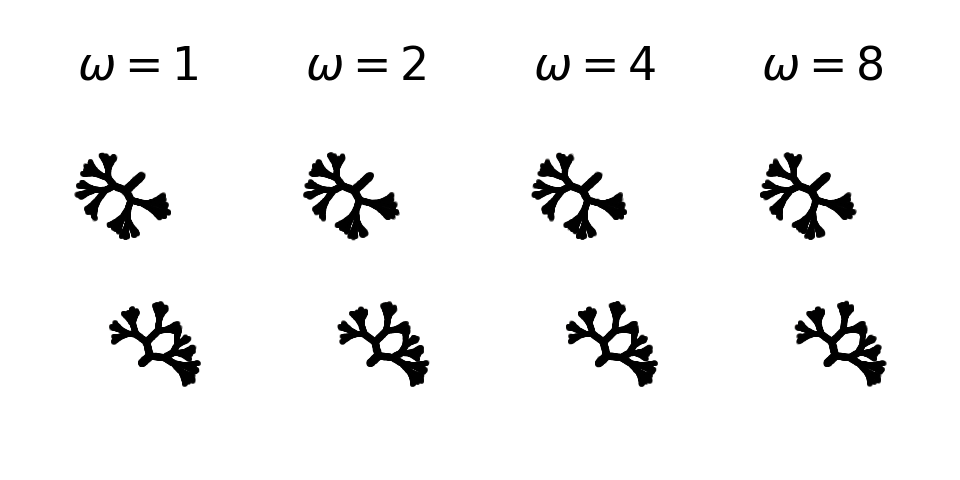}
        \caption{99.7\% probability-mass contours of the $\omega$-powered distribution for the two classes of the fractal distribution, with directly sampled IW-CFM realizations overlaid at the corresponding $\omega$; no additional CFG is applied.}
        \label{fig:powered_fractal}
    \end{subfigure}

    \caption{\textbf{Fractal comparison.} CFG with matched guidance $\gamma=\omega$ (left) departs from the analytical target contours, whereas IW-CFM targeting $\omega$ (right) remains close to them.}
    \label{fig:cfg_vs_powered_fractal}
\end{figure}

\subsubsection{Additional Synthetic Results: REG \& FKC}
\label{app:results_extended_synthetic_reg_fkc}

We implement Rectified Gradient Guidance (REG) \citep{gao2025reg,gao2025regcode} and Feynman--Kac Correctors (FKC) \citep{skretafeynman,skreta2025fkccode} within our evaluation codebase, following the published algorithms and reusing components from the released implementations where applicable. We apply both methods to the exact conditional and unconditional vector fields of the three-Gaussian problem. Under the tested configurations, neither resulting sample distribution closely follows the analytical powered contours in \cref{fig:reg_and_fkc_3gaussians}. This observation is specific to these implementations and configurations and should not be interpreted as a general failure of either framework.

\begin{figure}[h]
    \centering
    \begin{subfigure}[t]{0.48\linewidth}
        \centering
        \includegraphics[width=\linewidth]{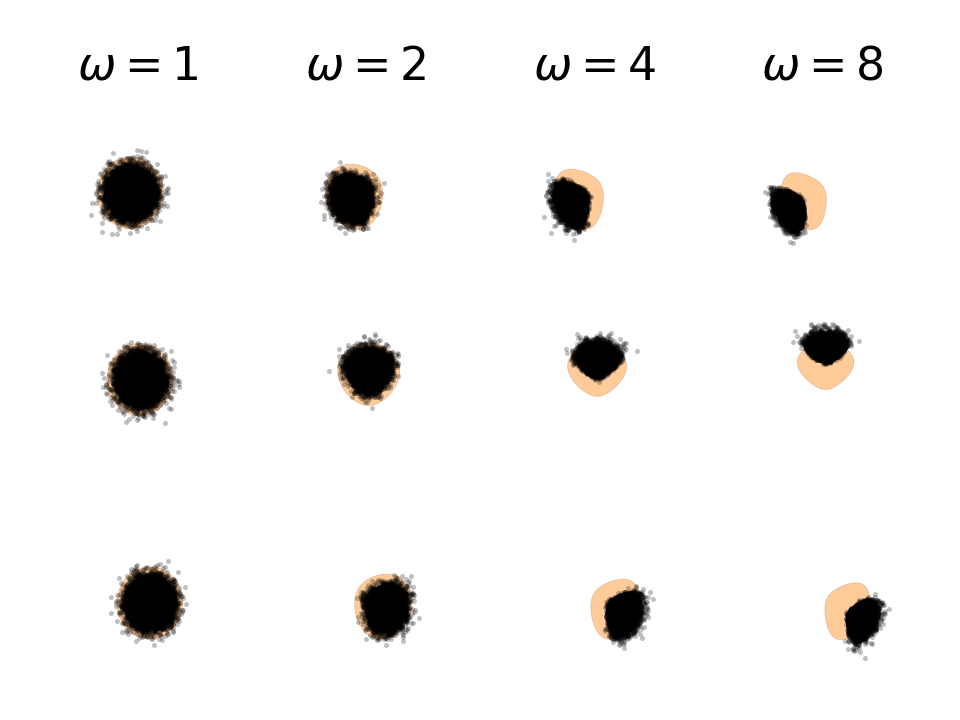}
        \caption{Samples from Rectified Gradient Guidance over the 99.7\% probability-mass contours of the $\omega$-powered target for the three-Gaussian distribution.}
        \label{fig:reg_3gaussians}
    \end{subfigure}
    \hfill 
    \begin{subfigure}[t]{0.48\linewidth}
        \centering
        \includegraphics[width=\linewidth]{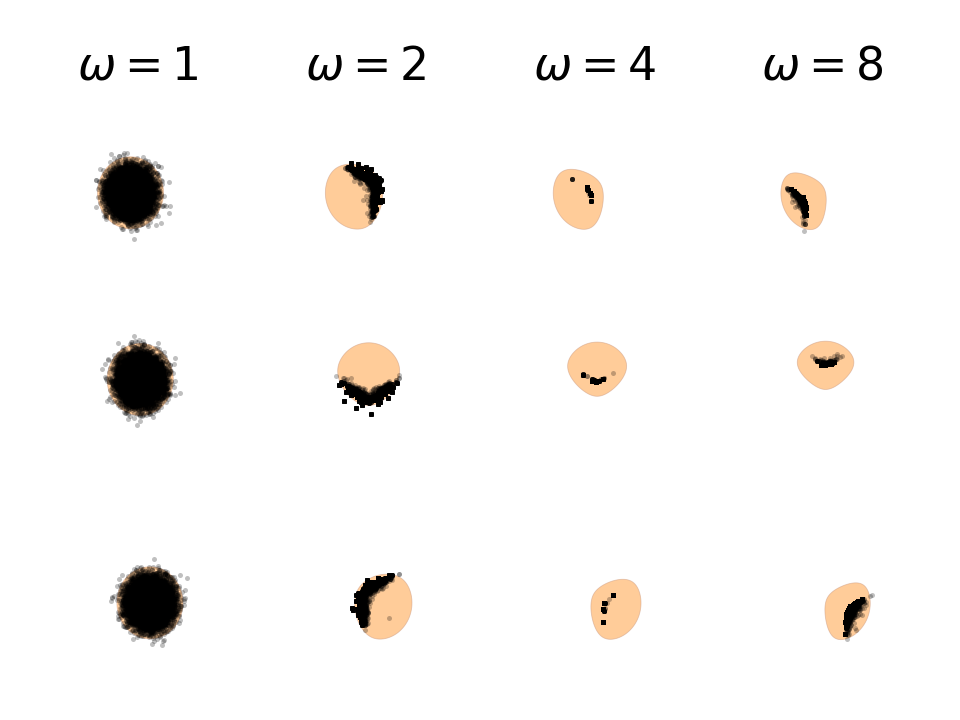}
        \caption{Samples from Feynman--Kac Correctors over the 99.7\% probability-mass contours of the $\omega$-powered target for the three-Gaussian distribution.}
        \label{fig:fkc_3gaussians}
    \end{subfigure}
    \caption{\textbf{REG and FKC on a Gaussian mixture.} Results from our implementations of REG \citep{gao2025reg} (left) and FKC \citep{skretafeynman} (right), which follow the published algorithms and reuse components from released code where applicable. Under these configurations, neither sample distribution closely follows the analytical $\omega$-powered contours.}
    \label{fig:reg_and_fkc_3gaussians}
\end{figure}

\subsection{ImageNet Metrics Extension}

\subsubsection{Main Text Metrics Extension}
\label{app:results_extended_imagenet_fd}

The following analysis extends \cref{fig:fid_wrt_oracle} by comparing IW-CFM targeting each $\omega$ against CFG whose guidance scale is numerically matched, $\gamma=\omega$.

As shown in \cref{fig:fid_wrt_oracle_full}, increasing CFG guidance to match the reference powering parameter, $\gamma=\omega$, leads to rapidly increasing distance from the regularized empirical references. IW-CFM remains closer to these references, so the result is not limited to comparison with fixed CFG guidance at $\gamma=1.5$. IW-CFM is evaluated on the following $\omega \in \{4.0, 4.17, 8.0, 8.5, 15.87, 16.0, 32.0, 32.24, 63.89, 64.0, 127.57, 128.0\}$ to validate the effectiveness of the interpolation of precomputed self-normalized weights. The baselines are evaluated only on the integer $\omega$.

\begin{figure}[h]
    \centering
    \includegraphics[width=0.5\linewidth,trim=0 23bp 0 0,clip]{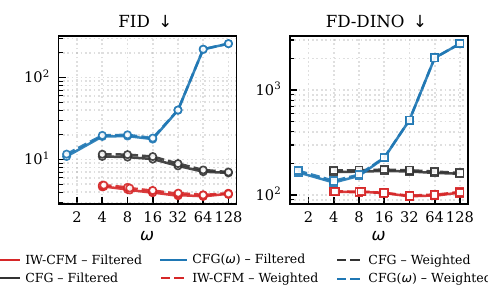}

    {\scriptsize
    \begin{tabular}{lll}
    \textcolor{black}{\rule{1.2em}{0.5pt}} CFG--Filtered &
    \textcolor{blue}{\rule{1.2em}{0.5pt}} CFG ($\gamma=\omega$)--Filtered &
    \textcolor{red}{\rule{1.2em}{0.5pt}} IW-CFM--Filtered \\
    \textcolor{black}{--} CFG--Weighted &
    \textcolor{blue}{--} CFG ($\gamma=\omega$)--Weighted &
    \textcolor{red}{--} IW-CFM--Weighted
    \end{tabular}}
    \caption{\textbf{Distributional alignment across $\omega$.} Extension of \cref{fig:fid_wrt_oracle} across the full range of target powering parameters. IW-CFM targets each $\omega$ and uses inference-time guidance $\gamma=1.5$. CFG uses either fixed guidance $\gamma=1.5$ or matched guidance $\gamma=\omega$. The log-scale y-axis shows the increase in FID and FD-DINOv2 for matched-guidance CFG at larger $\gamma=\omega$, while IW-CFM remains closer to the regularized empirical reference.}
    \label{fig:fid_wrt_oracle_full}
\end{figure}

\begin{figure}[ht]
    \centering
    \includegraphics[width=\linewidth]{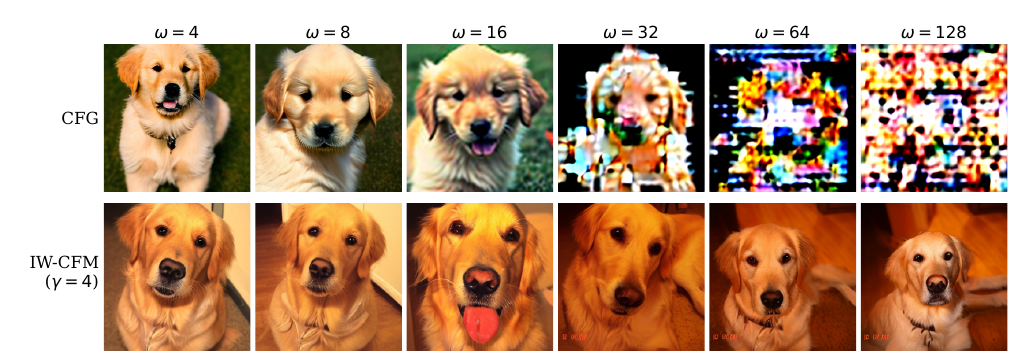}
    \caption{\textbf{Qualitative matched-guidance comparison on ImageNet.} Columns correspond to target powers $\omega\in\{4,8,16,32,64,128\}$. The top row shows standard CFG at the numerically matched guidance scale $\gamma=\omega$; the bottom row shows IW-CFM targeting the same $\omega$ while holding inference-time guidance fixed at $\gamma=4$. In this golden retriever example, CFG remains recognizable through $\gamma=16$ but exhibits severe saturation and loss of coherent image structure for $\gamma\geq32$, whereas IW-CFM remains visually coherent across the evaluated target powers. This illustrative comparison complements the distributional alignment metrics in \cref{fig:fid_wrt_oracle_full}.}
    \label{fig:cfg_vs_iwcfm_imagenet}
\end{figure}

\FloatBarrier

\subsubsection{Unguided Distribution-Shift Evaluation}
\label{app:metrics_wo_guidance}

This evaluation tests whether the learned IW-CFM vector field shifts the generated distribution toward the regularized reference without relying on inference-time guidance. We report Fr\'echet Distance in \cref{tab:fid_filtered_noguidance} and PRDC in \cref{tab:prdc_inception_filtered_noguidance}, comparing unguided IW-CFM with the conditional Base SiT ($\gamma=1$). IW-CFM is consistently better aligned with the reference across $\omega$, providing evidence that the distributional shift is learned by the corrector itself. The remaining gap to the guided results quantifies the practical benefit of the additional inference heuristic.

\begin{table}[h]
\begin{center}
\caption{Unguided distribution-shift evaluation using Fr\'echet Distance to the filtered regularized ImageNet reference. All IW-CFM rows use $\gamma=1$ and differ only in the target powering parameter $\omega$. IW-CFM outperforms the conditional Base SiT at every evaluated $\omega$; comparison with \cref{fig:fid_wrt_oracle} shows the additional benefit of guided inference.}
\begin{tabular}{llll}
\toprule
Model & Reference $\omega$ & FID & FD-DINOv2 \\
\midrule
Base SiT ($\gamma=1$) & 4.0 & 19.97 & 324.02 \\
IW-CFM ($\omega=4.0$) & 4.0 & \textbf{8.48} & \textbf{215.80} \\
\midrule
Base SiT ($\gamma=1$) & 8.0 & 20.55 & 328.66 \\
IW-CFM ($\omega=8.0$) & 8.0 & \textbf{8.90} & \textbf{221.41} \\
\midrule
Base SiT ($\gamma=1$) & 16.0 & 21.41 & 339.18 \\
IW-CFM ($\omega=16.0$) & 16.0 & \textbf{9.29} & \textbf{223.39} \\
\midrule
Base SiT ($\gamma=1$) & 32.0 & 21.73 & 348.58 \\
IW-CFM ($\omega=32.0$) & 32.0 & \textbf{10.56} & \textbf{227.64} \\
\midrule
Base SiT ($\gamma=1$) & 64.0 & 20.23 & 341.78 \\
IW-CFM ($\omega=64.0$) & 64.0 & \textbf{12.27} & \textbf{245.24} \\
\midrule
Base SiT ($\gamma=1$) & 128.0 & 18.73 & 334.46 \\
IW-CFM ($\omega=128.0$) & 128.0 & \textbf{13.52} & \textbf{263.39} \\
\bottomrule
\label{tab:fid_filtered_noguidance}
\end{tabular}
\end{center}
\end{table}

\begin{table}[h]
\begin{center}
\caption{Unguided distribution-shift evaluation using Precision, Recall, Density, and Coverage with respect to the filtered regularized ImageNet reference. All IW-CFM rows use $\gamma=1$ and differ only in the target powering parameter $\omega$. IW-CFM is better aligned than the conditional Base SiT at every $\omega$; \cref{tab:prdc_inception_filtered} reports the guided practical configuration.}
\begin{tabular}{llllll}
\toprule
Model & Reference $\omega$ & Precision & Recall & Density & Coverage \\
\midrule
Base SiT ($\gamma=1$) & 4.0 & 0.37 & 0.59 & 0.64 & 0.81 \\
IW-CFM ($\omega=4.0$) & 4.0 & \textbf{0.51} & \textbf{0.68} & \textbf{0.87} & \textbf{0.93} \\
\midrule
Base SiT ($\gamma=1$) & 8.0 & 0.37 & 0.60 & 0.62 & 0.80 \\
IW-CFM ($\omega=8.0$) & 8.0 & \textbf{0.51} & \textbf{0.69} & \textbf{0.84} & \textbf{0.93} \\
\midrule
Base SiT ($\gamma=1$) & 16.0 & 0.39 & 0.62 & 0.61 & 0.79 \\
IW-CFM ($\omega=16.0$) & 16.0 & \textbf{0.51} & \textbf{0.70} & \textbf{0.83} & \textbf{0.92} \\
\midrule
Base SiT ($\gamma=1$) & 32.0 & 0.41 & 0.64 & 0.59 & 0.74 \\
IW-CFM ($\omega=32.0$) & 32.0 & \textbf{0.50} & \textbf{0.70} & \textbf{0.75} & \textbf{0.89} \\
\midrule
Base SiT ($\gamma=1$) & 64.0 & 0.44 & 0.65 & 0.63 & 0.72 \\
IW-CFM ($\omega=64.0$) & 64.0 & \textbf{0.48} & \textbf{0.71} & \textbf{0.71} & \textbf{0.86} \\
\midrule
Base SiT ($\gamma=1$) & 128.0 & 0.47 & 0.65 & 0.69 & 0.72 \\
IW-CFM ($\omega=128.0$) & 128.0 & \textbf{0.48} & \textbf{0.71} & \textbf{0.71} & \textbf{0.85} \\
\bottomrule
\label{tab:prdc_inception_filtered_noguidance}
\end{tabular}
\end{center}
\end{table}

\subsubsection{Standard ImageNet Target Metrics}
\label{app:results_extended_imagenet_target_metrics}

We also report the main-text metrics with respect to standard ImageNet references:
\begin{itemize}
    \item for Fr\'echet Distance: a category-saturated 50k subset of the training set,
    \item for Precision, Recall, Density, and Coverage: a 10k-feature batch from the standard evaluation suite of \citet{openai_adm_evaluations}.
\end{itemize}

In \cref{tab:fid_imagenet_target,tab:prdc_imagenet_target}, CFG at a suitable guidance scale $\gamma$ is better aligned with the standard ImageNet references, as expected. IW-CFM, evaluated with fixed inference-time guidance $\gamma=1.5$, moves closer to these references as its powering parameter $\omega$ increases.

\Cref{tab:fid_imagenet_target_noguidance} reports Fr\'echet Distance without inference-time guidance. In this setup, IW-CFM at $\omega=64$ is closest to the standard ImageNet reference.

\begin{table}[h]
\begin{center}
\caption{Fr\'echet Distance with respect to the category-saturated 50k subset of the ImageNet training set using two feature backbones. CFG rows are labeled by guidance scale $\gamma$; IW-CFM rows are labeled by powering parameter $\omega$ and use inference-time guidance $\gamma=1.5$. The best CFG configuration outperforms IW-CFM, although the gap narrows as $\omega$ increases.}
\begin{tabular}{lll}
\toprule
Model & FID-50k & FD-DINOv2-50k \\
\midrule
CFG ($\gamma=1.5$) & \textbf{2.04} & 88.36 \\
CFG ($\gamma=4.0$) & 14.07 & \textbf{85.64} \\
IW-CFM ($\omega=4.0$) & 8.33 & 157.81 \\
CFG ($\gamma=8.0$) & 16.32 & 137.56 \\
IW-CFM ($\omega=8.0$) & 8.49 & 159.98 \\
CFG ($\gamma=16.0$) & 16.25 & 238.82 \\
IW-CFM ($\omega=16.0$) & 8.22 & 152.25 \\
CFG ($\gamma=32.0$) & 36.00 & 517.38 \\
IW-CFM ($\omega=32.0$) & 6.48 & 120.31 \\
CFG ($\gamma=64.0$) & 213.17 & 1991.20 \\
IW-CFM ($\omega=64.0$) & 4.54 & 99.67 \\
CFG ($\gamma=128.0$) & 253.71 & 2749.11 \\
IW-CFM ($\omega=128.0$) & 3.80 & 99.69 \\
\bottomrule
\label{tab:fid_imagenet_target}
\end{tabular}
\end{center}
\end{table}

\begin{table}[h]
\begin{center}
\caption{Precision, Recall, Density, and Coverage with respect to the 10k-feature ImageNet evaluation batch from \citet{openai_adm_evaluations}. CFG is labeled by guidance scale $\gamma$; IW-CFM is labeled by powering parameter $\omega$ and uses inference-time guidance $\gamma=1.5$. As in \cref{tab:fid_imagenet_target}, the baseline has the best overlap with the reference, although the gap narrows as $\omega$ increases.}
\begin{tabular}{lllll}
\toprule
Model & Precision & Recall & Density & Coverage \\
\midrule
CFG ($\gamma=1.5$) & \textbf{0.83} & \textbf{0.58} & \textbf{1.17} & \textbf{0.98} \\
IW-CFM ($\omega=4.0$) & 0.78 & 0.49 & 1.05 & 0.94 \\
IW-CFM ($\omega=8.0$) & 0.78 & 0.49 & 1.05 & 0.93 \\
IW-CFM ($\omega=16.0$) & 0.79 & 0.48 & 1.07 & 0.94 \\
IW-CFM ($\omega=32.0$) & 0.81 & 0.49 & 1.08 & 0.94 \\
IW-CFM ($\omega=64.0$) & 0.81 & 0.52 & 1.08 & 0.95 \\
IW-CFM ($\omega=128.0$) & 0.81 & 0.54 & 1.08 & 0.95 \\
\bottomrule
\label{tab:prdc_imagenet_target}
\end{tabular}
\end{center}
\end{table}

\begin{table}[h]
\begin{center}
\caption{Fr\'echet Distance with respect to the category-saturated 50k subset of the ImageNet training set without inference-time guidance ($\gamma=1$). IW-CFM at $\omega=64$ achieves the best result for both feature backbones, outperforming the conditional Base SiT.}
\begin{tabular}{lll}
\toprule
Model & FID-50k & FD-DINOv2-50k \\
\midrule
Base SiT ($\gamma=1$) & 9.55 & 246.24 \\
IW-CFM ($\omega=4.0$) & 11.21 & 247.82 \\
IW-CFM ($\omega=8.0$) & 11.53 & 252.24 \\
IW-CFM ($\omega=16.0$) & 10.52 & 238.72 \\
IW-CFM ($\omega=32.0$) & 7.64 & 204.28 \\
IW-CFM ($\omega=64.0$) & \textbf{7.14} & \textbf{202.63} \\
IW-CFM ($\omega=128.0$) & 8.39 & 219.95 \\
\bottomrule
\label{tab:fid_imagenet_target_noguidance}
\end{tabular}
\end{center}
\end{table}

\subsection{Memorization Analysis}
\label{app:nearest_neighbor_analysis}

To assess whether IW-CFM retrieves high-likelihood training instances rather than generating novel variations, we conduct a nearest-neighbor analysis. This test asks whether training-data retrieval could explain the observed distribution-alignment and paired-CLIP score differences.

\textbf{Methodology.} We follow the framework of \citet{gumemorization}. After deduplicating the training set as described below, we flatten the VAE latent tensors and compute Euclidean distances from each generated sample to every remaining training instance. We apply no additional distance filter before sorting these distances. Let $d_1\leq d_2$ be the two smallest distances. We flag a generated sample as a potential retrieval when
\begin{equation}
    1.82d_1<d_2.
\end{equation}
For $d_1>0$, this is equivalent to $d_2/d_1>1.82$; the multiplicative form also covers an exact match with $d_1=0$. The reported memorization ratio is the fraction of generated samples satisfying this condition. We use the VAE latent representation on which the generative model operates; this choice does not guarantee that the distances measure semantic similarity.

\textbf{Deduplication.} ImageNet contains duplicate and near-duplicate training instances. A generated sample may therefore have similar distances to two copies of the same image ($d_1\approx d_2$), causing the ratio test to miss a possible retrieval. Before the nearest-neighbor search, we deduplicate the training set by removing instances within an $\ell_2$-distance threshold of $d_\textrm{duplicate}=35$.

\textbf{Retrieval-Test Results.} As shown in \cref{fig:memorization_check}, the fraction flagged by the detector remains below $0.05\%$ across all evaluated powering parameters $\omega$. Few samples are therefore flagged as retrievals under the specified test, but this is not a general guarantee of non-memorization.

\begin{figure}[h]
    \centering
    \includegraphics[width=0.5\linewidth]{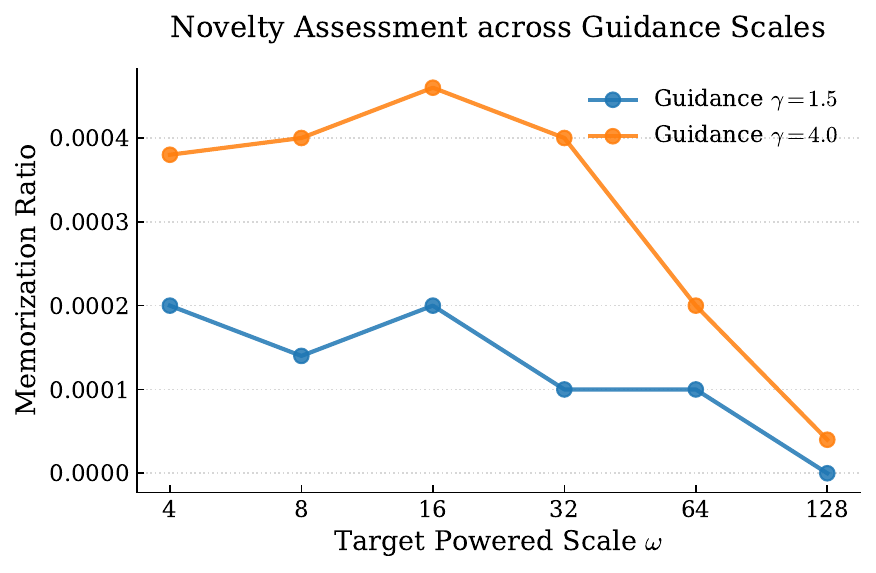}
    \caption{\textbf{Nearest-neighbor retrieval test.} Fraction of IW-CFM realizations satisfying $1.82d_1<d_2$ across powering parameters $\omega$ and inference-time guidance scales $\gamma\in\{1.5,4\}$, where $d_1$ and $d_2$ are Euclidean distances to the first and second nearest neighbors in the deduplicated training set. The ratios remain low under this test, but do not by themselves establish generalization.}
    \label{fig:memorization_check}
\end{figure}

\FloatBarrier
\subsection{Extended ImageNet Samples}
\label{app:results_extended_imagenet_samples}

We present additional qualitative results for the ImageNet categories used in the powered-density ranking analysis of \cref{app:omega_pow_imagenet}. These visualizations show how IW-CFM generations change with the target powering parameter $\omega$ while the inference-time guidance scale remains fixed at $\gamma=4$; they are not a systematic evaluation of attribute transfer or generalization.

As shown in \cref{fig:traj_4_33_139_414,fig:traj_417_728_757_770,fig:traj873}, each grid compares a fixed set of four initial latents (rows) generated by baseline CFG at $\gamma=4$ and by IW-CFM at varying target $\omega$, also with $\gamma=4$.

\begin{figure}[h]
    \centering
    \begin{subfigure}[t]{0.48\linewidth}
        \centering
        \includegraphics[width=\linewidth]{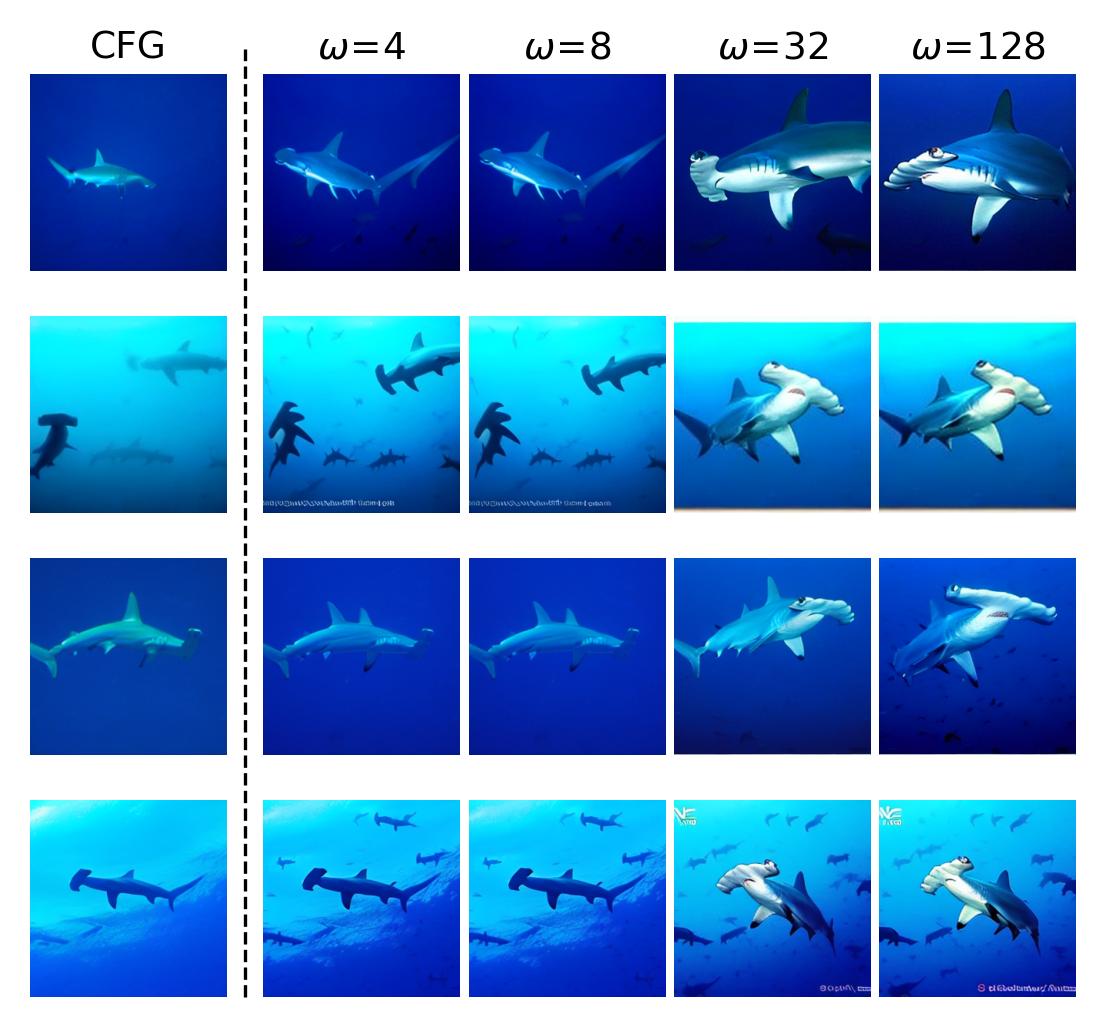}
        \caption{hammerhead shark}
        \label{fig:traj4}
    \end{subfigure}
    \hfill 
    \begin{subfigure}[t]{0.48\linewidth}
        \centering
        \includegraphics[width=\linewidth]{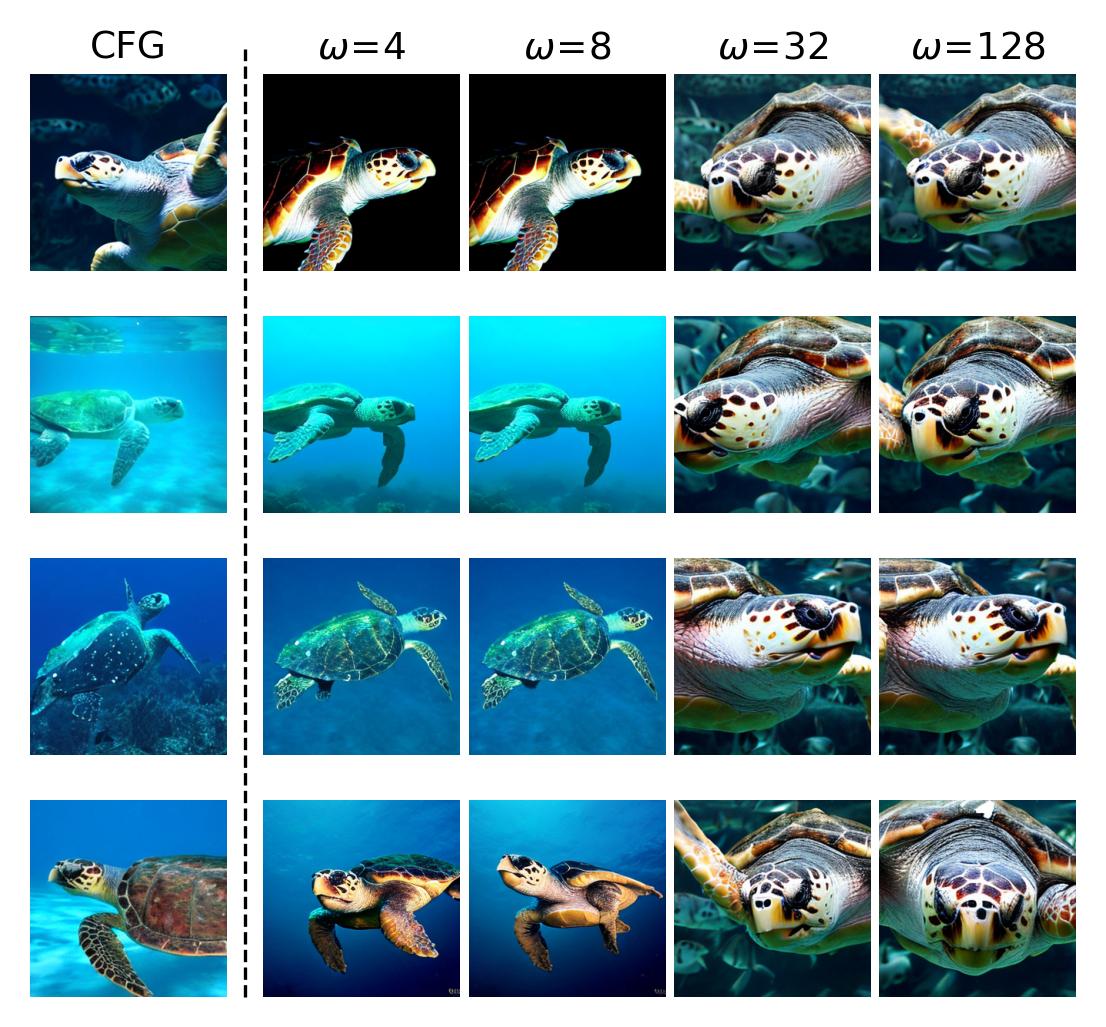}
        \caption{loggerhead turtle (lat. Caretta caretta)}
        \label{fig:traj33}
    \end{subfigure}

    \vspace{0.5cm}

    \begin{subfigure}[t]{0.48\linewidth}
        \centering
        \includegraphics[width=\linewidth]{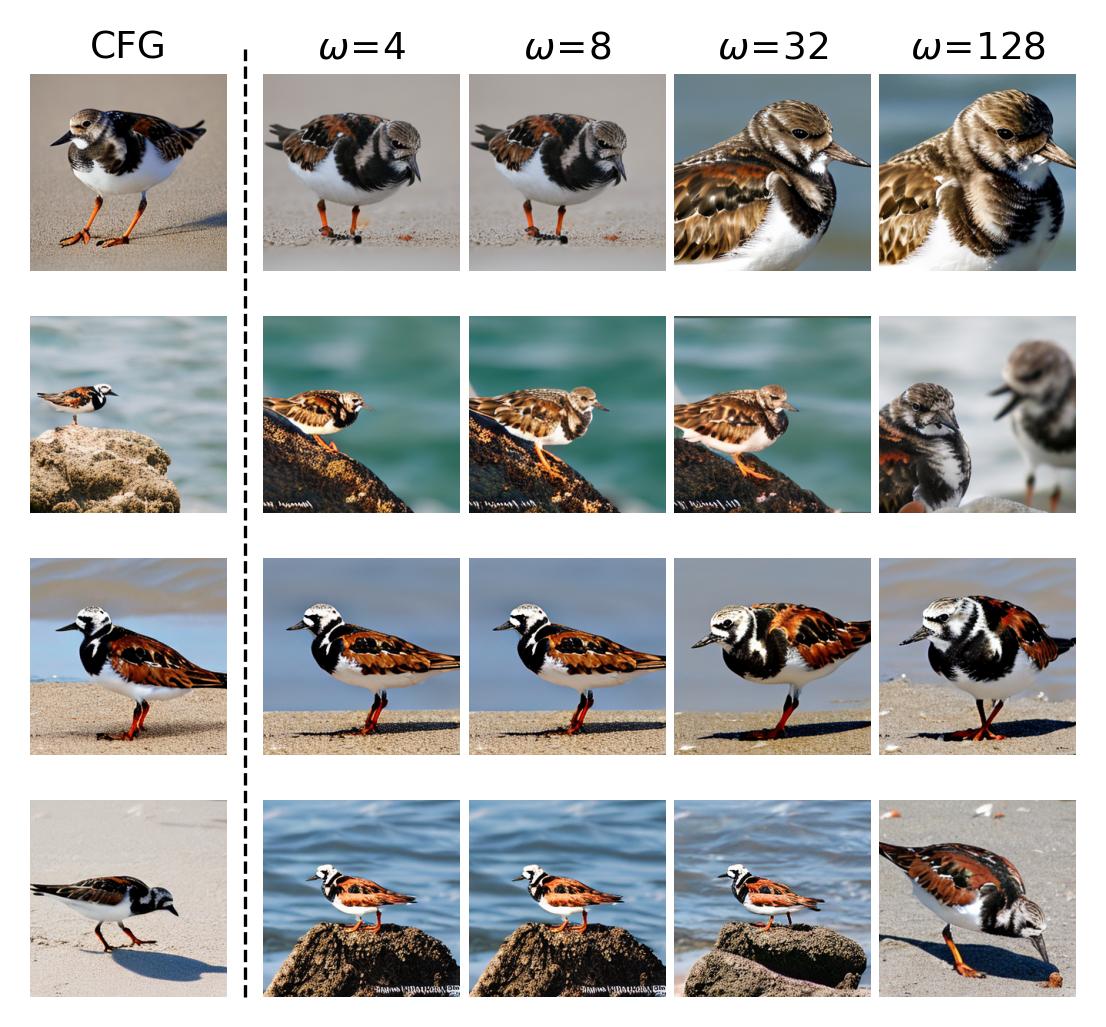}
        \caption{ruddy turnstone (lat. Arenaria interpres)}
        \label{fig:traj139}
    \end{subfigure}
    \hfill 
    \begin{subfigure}[t]{0.48\linewidth}
        \centering
        \includegraphics[width=\linewidth]{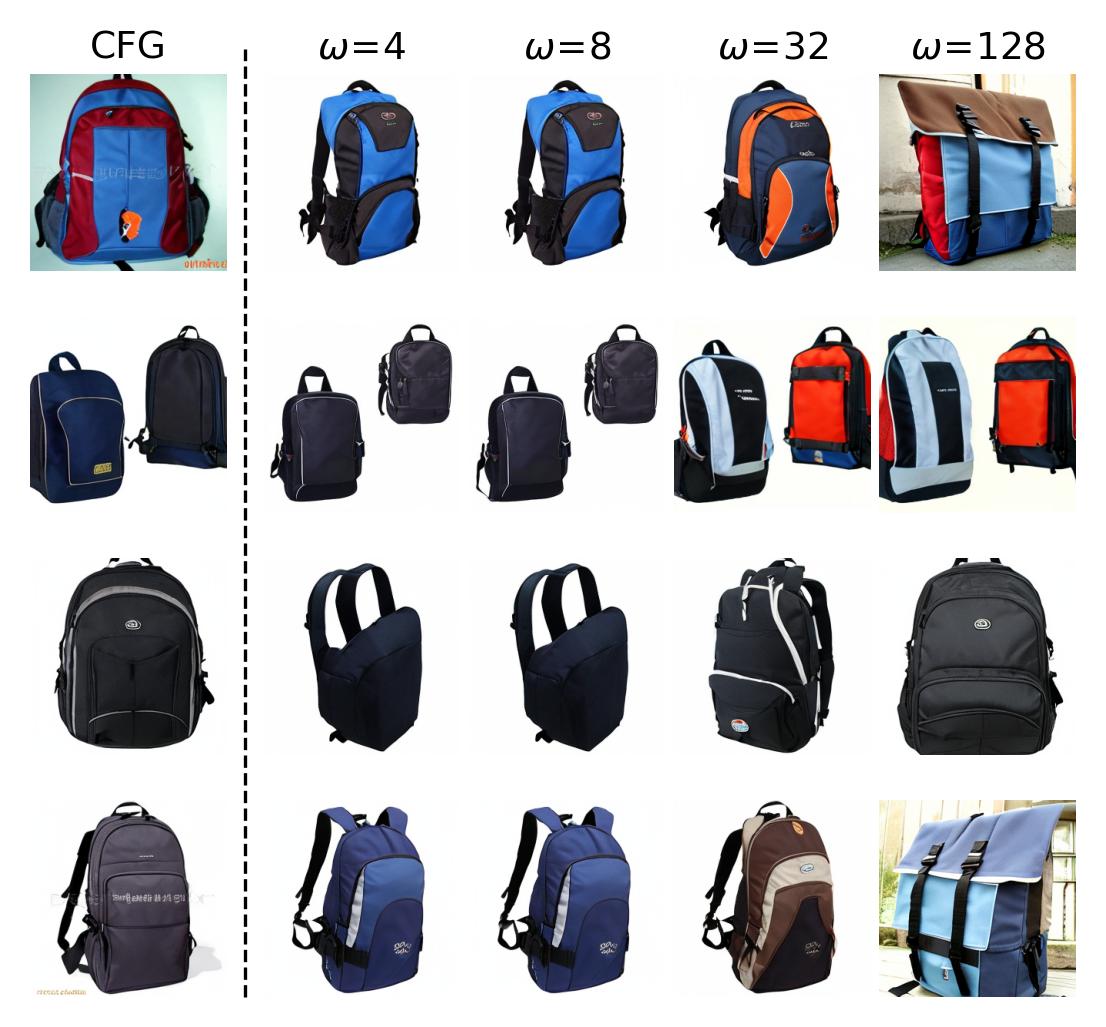}
        \caption{backpack, back pack, knapsack, packsack, rucksack, haversack}
        \label{fig:traj414}
    \end{subfigure}

    \caption{Additional qualitative ImageNet trajectories for four classes. Baseline CFG and IW-CFM both use $\gamma=4$; the IW-CFM columns vary the target powering parameter $\omega$.}
    \label{fig:traj_4_33_139_414}
\end{figure}

\begin{figure}[h]
    \centering
    \begin{subfigure}[t]{0.48\linewidth}
        \centering
        \includegraphics[width=\linewidth]{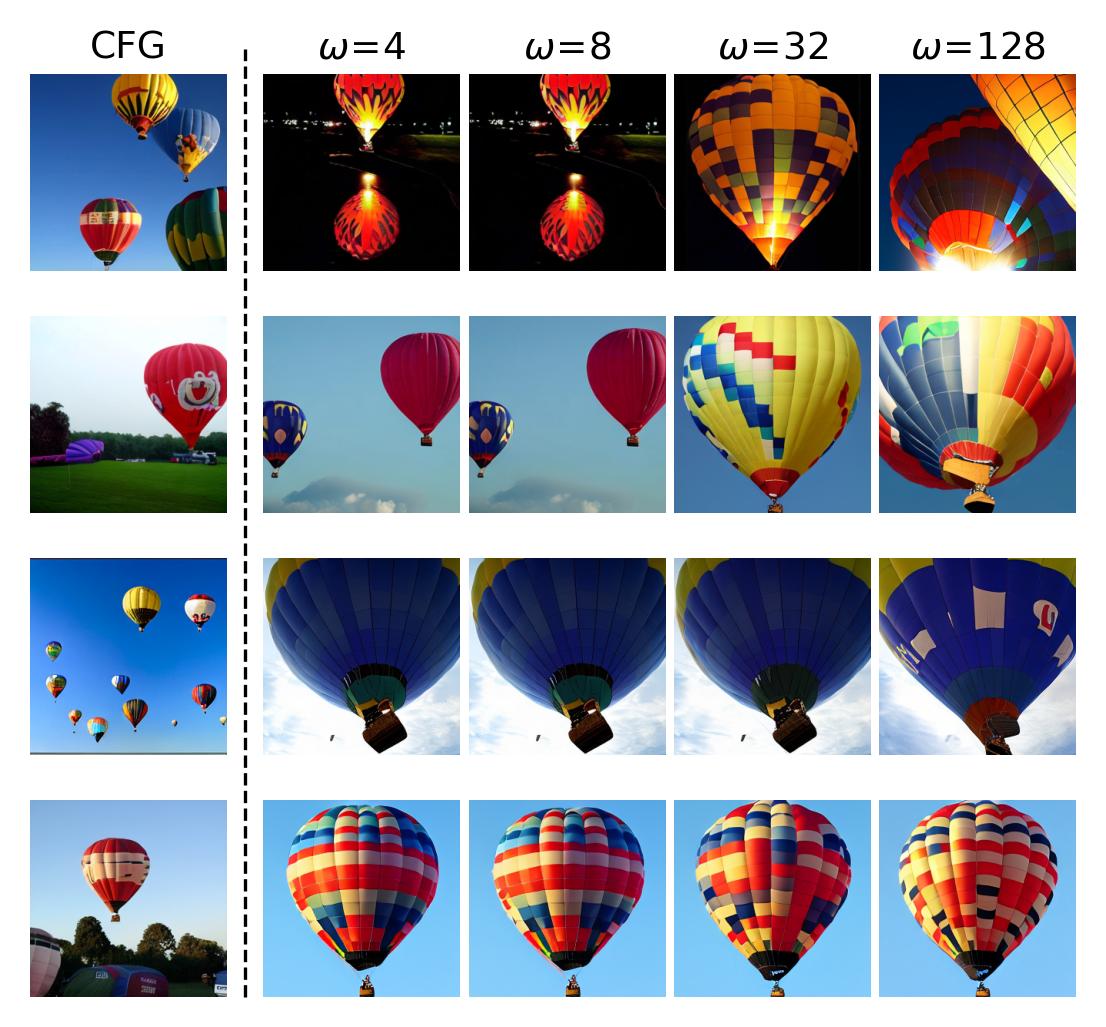}
        \caption{balloon}
        \label{fig:traj417}
    \end{subfigure}
    \hfill 
    \begin{subfigure}[t]{0.48\linewidth}
        \centering
        \includegraphics[width=\linewidth]{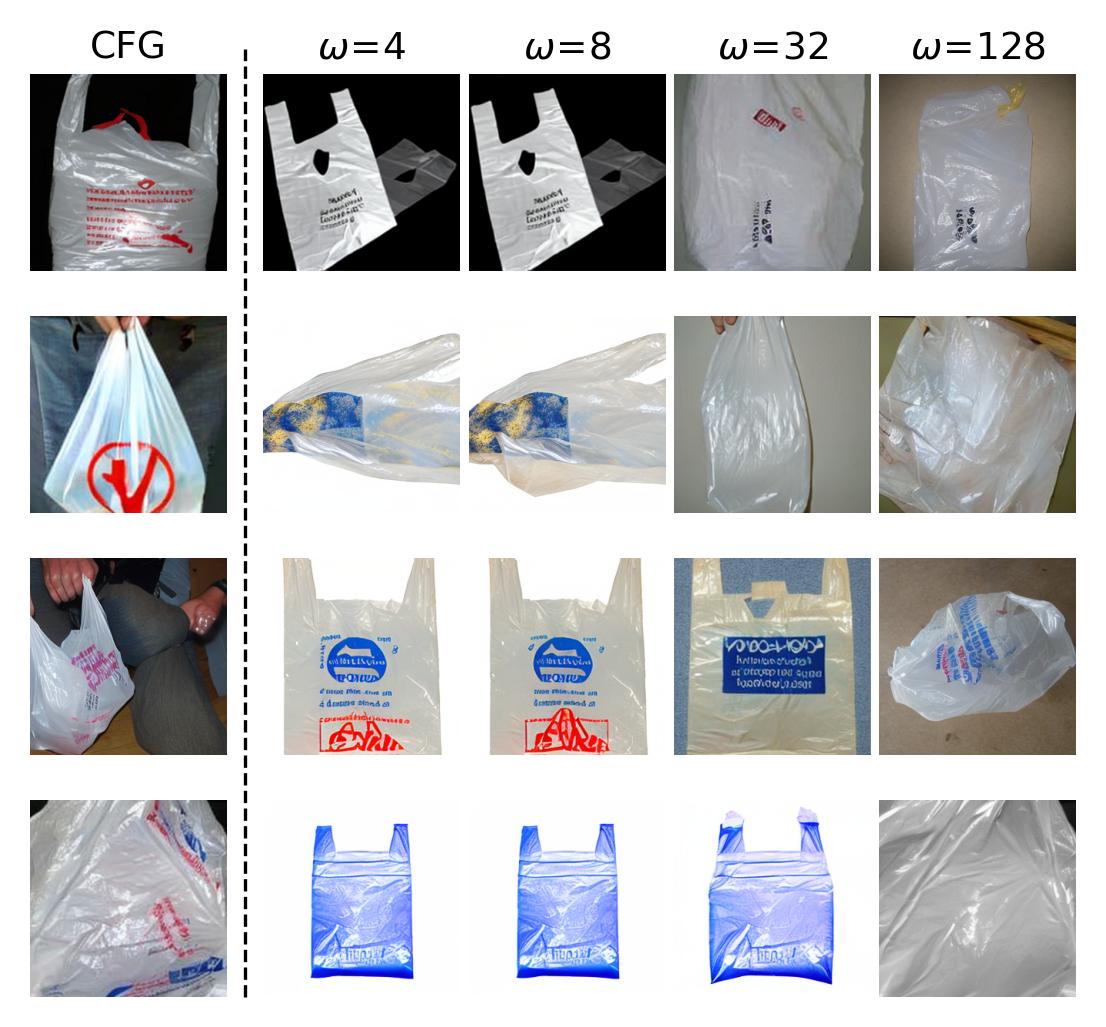}
        \caption{plastic bag}
        \label{fig:traj728}
    \end{subfigure}

    \vspace{0.5cm}

    \begin{subfigure}[t]{0.48\linewidth}
        \centering
        \includegraphics[width=\linewidth]{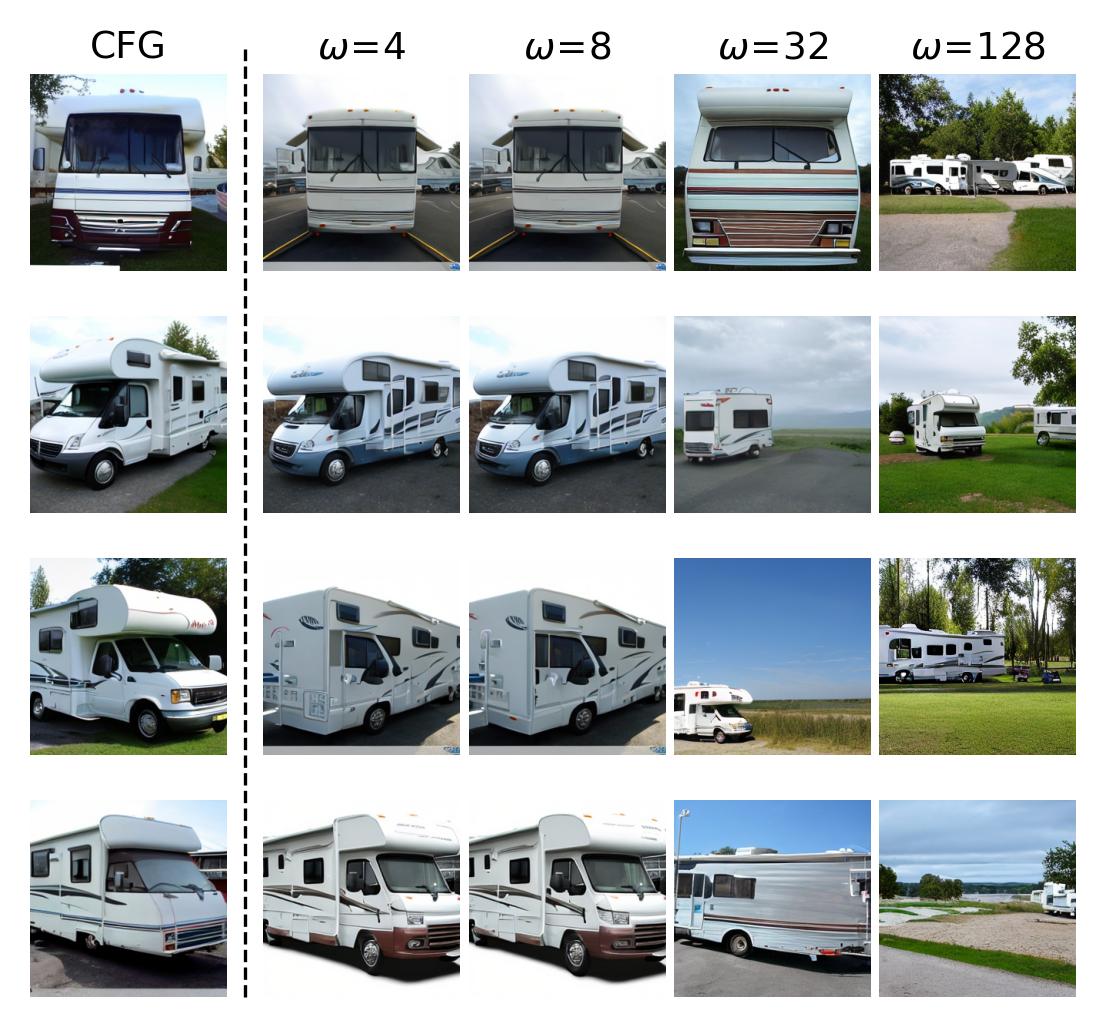}
        \caption{recreational vehicle, RV, R.V.}
        \label{fig:traj757}
    \end{subfigure}
    \hfill 
    \begin{subfigure}[t]{0.48\linewidth}
        \centering
        \includegraphics[width=\linewidth]{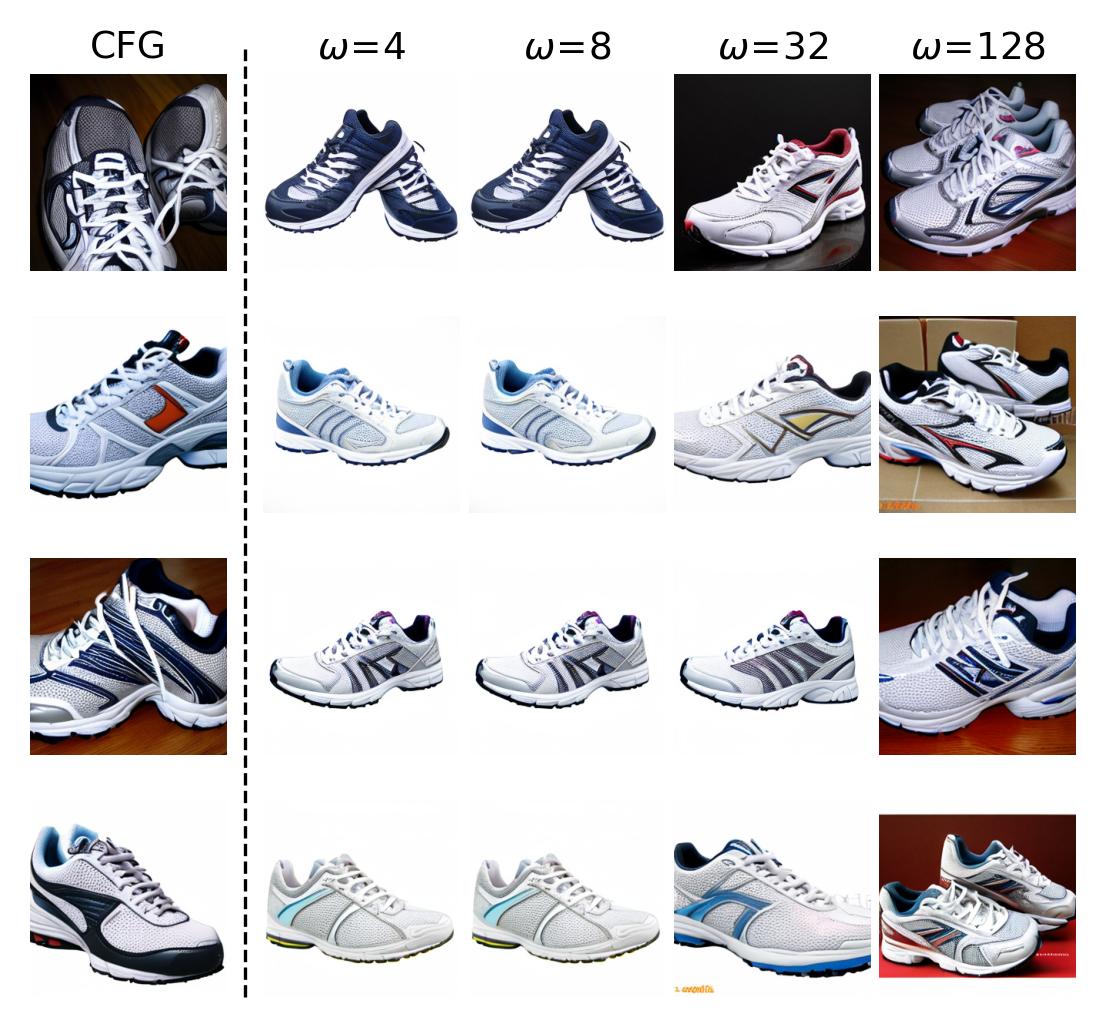}
        \caption{running shoe}
        \label{fig:traj770}
    \end{subfigure}

    \caption{Additional qualitative ImageNet trajectories for four classes. Baseline CFG and IW-CFM both use $\gamma=4$; the IW-CFM columns vary the target powering parameter $\omega$.}
    \label{fig:traj_417_728_757_770}
\end{figure}

\begin{figure}[h]
    \centering
    \includegraphics[width=0.48\linewidth]{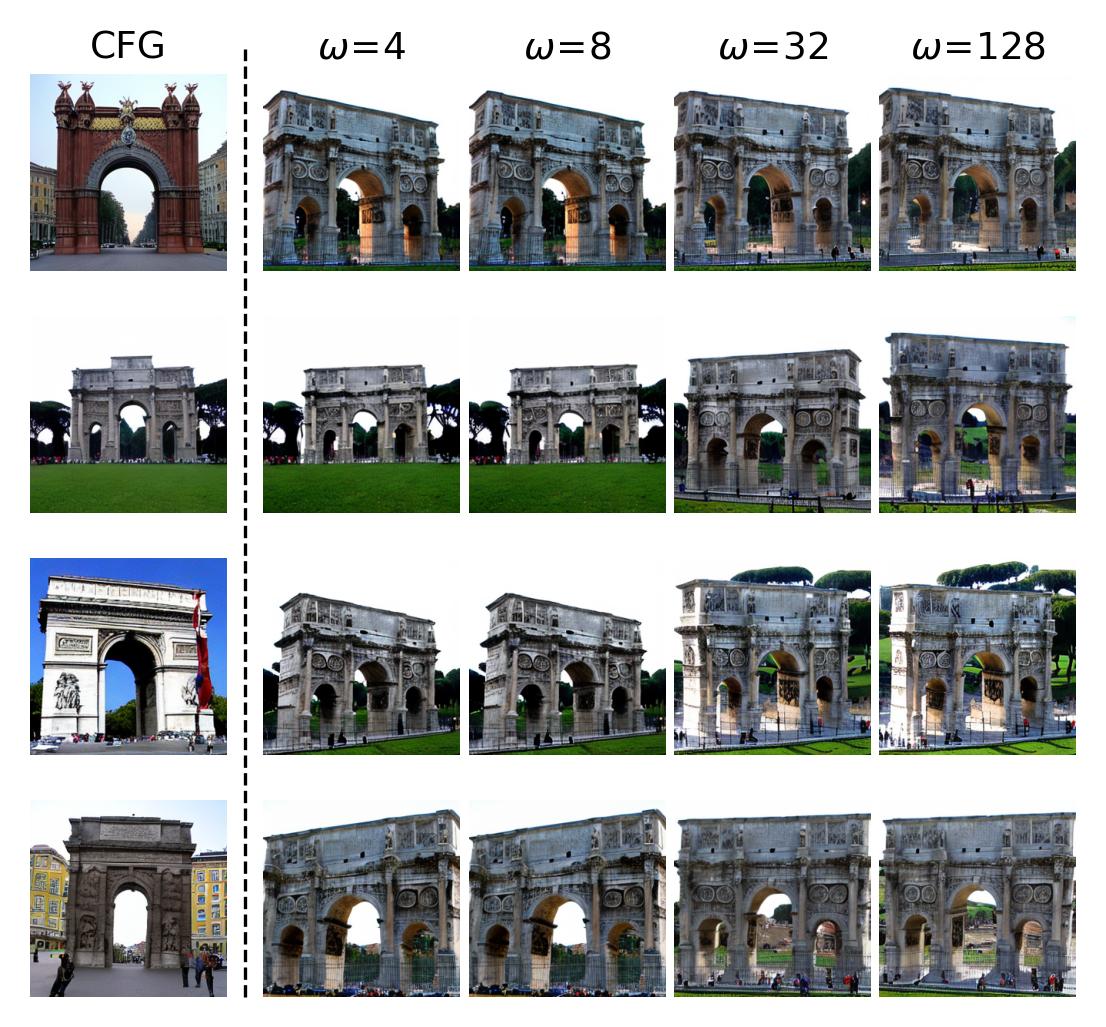}
    \caption{Additional qualitative trajectory for the triumphal arch class. Baseline CFG and IW-CFM both use $\gamma=4$; the IW-CFM columns vary the target powering parameter $\omega$.}
    \label{fig:traj873}
\end{figure}

\end{document}